\DeclareMathOperator{\defeq}{\stackrel{\text{def}}{\;=\;}}
\theoremstyle{plain}
\newtheorem{theorem}{Theorem}[section]
\theoremstyle{definition}
\theoremstyle{remark}
\def\eqref#1{equation~\ref{#1}}
\def\1{\bm{1}}
\DeclareMathAlphabet{\mathsfit}{\encodingdefault}{\sfdefault}{m}{sl}
\SetMathAlphabet{\mathsfit}{bold}{\encodingdefault}{\sfdefault}{bx}{n}
\icmltitlerunning{How Do Large Language Monkeys Get Their Power (Laws)?}
\begin{document}

\twocolumn[
\icmltitle{
How Do Large Language Monkeys Get Their Power (Laws)?
}



\icmlsetsymbol{equal}{*}

\begin{icmlauthorlist}
\icmlauthor{Rylan Schaeffer}{stanfordcs}
\icmlauthor{Joshua Kazdan}{stanfordstats}
\icmlauthor{John Hughes}{speechmatics,mats}
\icmlauthor{Jordan Juravsky}{stanfordcs}
\icmlauthor{Sara Price}{mats}
\icmlauthor{Aengus Lynch}{mats,ucl}
\icmlauthor{Erik Jones}{anthropic}
\icmlauthor{Robert Kirk}{ucl}
\icmlauthor{Azalia Mirhoseini}{stanfordcs}
\icmlauthor{Sanmi Koyejo}{stanfordcs}
\end{icmlauthorlist}

\icmlaffiliation{stanfordcs}{Stanford Computer Science}
\icmlaffiliation{stanfordstats}{Stanford Statistics}
\icmlaffiliation{speechmatics}{Speechmatics}
\icmlaffiliation{ucl}{University College London}
\icmlaffiliation{anthropic}{Anthropic}

\icmlcorrespondingauthor{Rylan Schaeffer}{rschaef@cs.stanford.edu}
\icmlcorrespondingauthor{Sanmi Koyejo}{sanmi@cs.stanford.edu}

\icmlkeywords{Machine Learning, ICML}

\vskip 0.3in
]





\begin{abstract}
Recent research across mathematical problem solving, proof assistant programming and multimodal jailbreaking documents a striking finding: when (multimodal) language model tackle a suite of tasks with multiple attempts per task -- succeeding if any attempt is correct -- then the negative log of the average success rate scales a power law in the number of attempts.
In this work, we identify an apparent puzzle: a simple mathematical calculation predicts that on each problem, the failure rate should fall exponentially with the number of attempts.
We confirm this prediction empirically, raising a question: from where does aggregate polynomial scaling emerge?
We then answer this question by demonstrating per-problem exponential scaling can be made consistent with aggregate polynomial scaling if the distribution of single-attempt success probabilities is heavy tailed such that a small fraction of tasks with extremely low success probabilities collectively warp the aggregate success trend into a power law - even as each problem scales exponentially on its own.
We further demonstrate that this distributional perspective explains previously observed deviations from power law scaling, and provides a simple method for forecasting the power law exponent with an order of magnitude lower relative error, or equivalently, ${\sim}2-4$ orders of magnitude less inference compute.
Overall, our work contributes to a better understanding of how neural language model performance improves with scaling inference compute and the development of scaling-predictable evaluations of (multimodal) language models.
\end{abstract}

\section{Introduction}
\label{sec:introduction}

Scaling behaviors of large neural language models have surprised and fascinated engineers, scientists and society alike \citep{hestness2017deep,kaplan2020scalinglawsneurallanguage,brown2020language,hoffmann2022trainingcomputeoptimallargelanguage,ganguli2022predictability,sorscher2022beyond,wei2022emergent,schaeffer2024mirage,openai2024gpt4technicalreport}, shaping engineering, economic and governmental interests in frontier AI systems \citep{bommasani2021opportunities,eloundou2023gptsgptsearlylook,anderljung2023frontierairegulationmanaging,wang2023scientificdiscovery,reuel2024openproblemstechnicalai,besiroglu2024economic,maslej2024artificialintelligenceindexreport}. For a more thorough exposition of relevant literature, please see Related Work (Section~\ref{sec:related_work}). 

\begin{figure}[t!]
    \centering
    \includegraphics[width=0.49\textwidth]{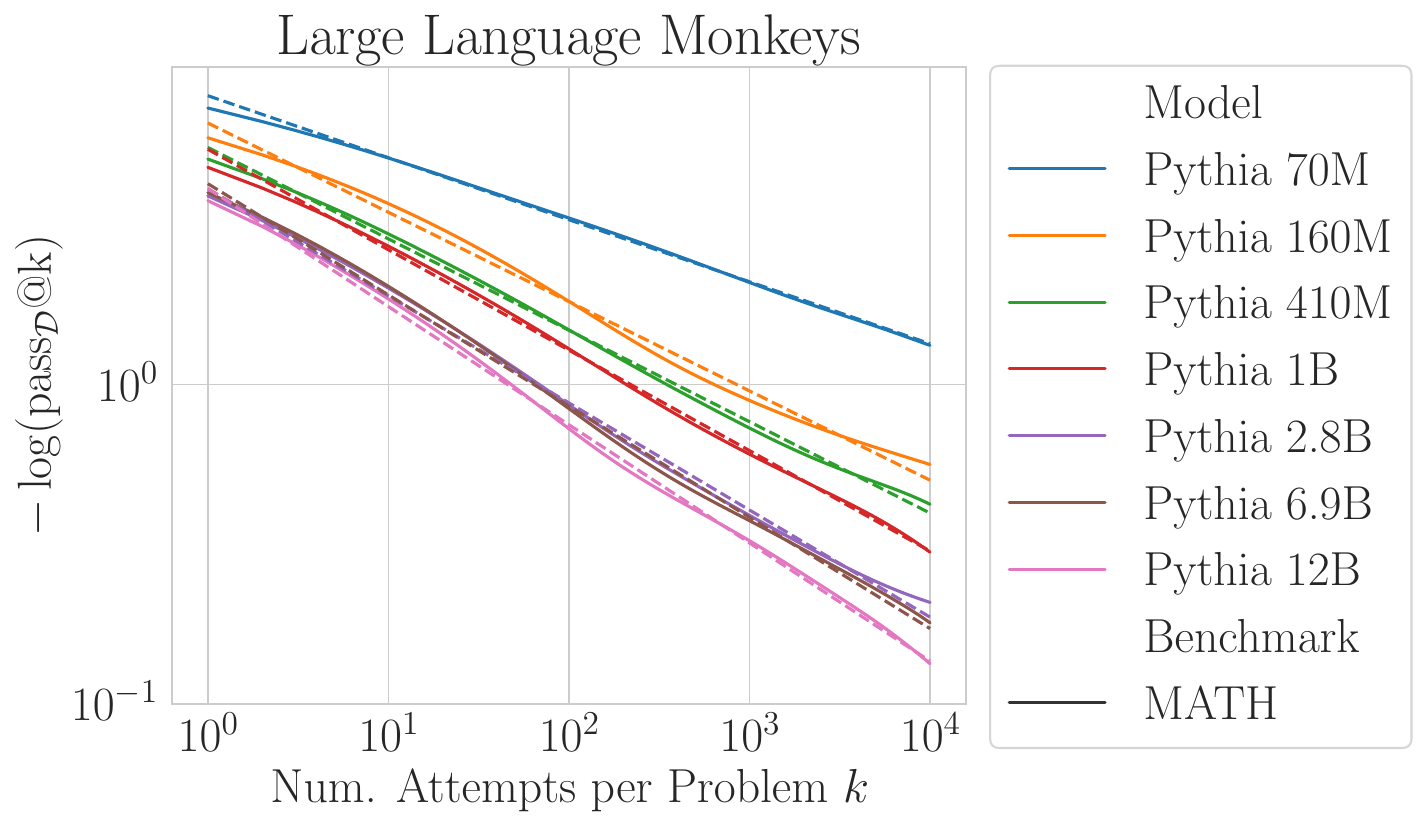}
    \includegraphics[width=0.49\textwidth]{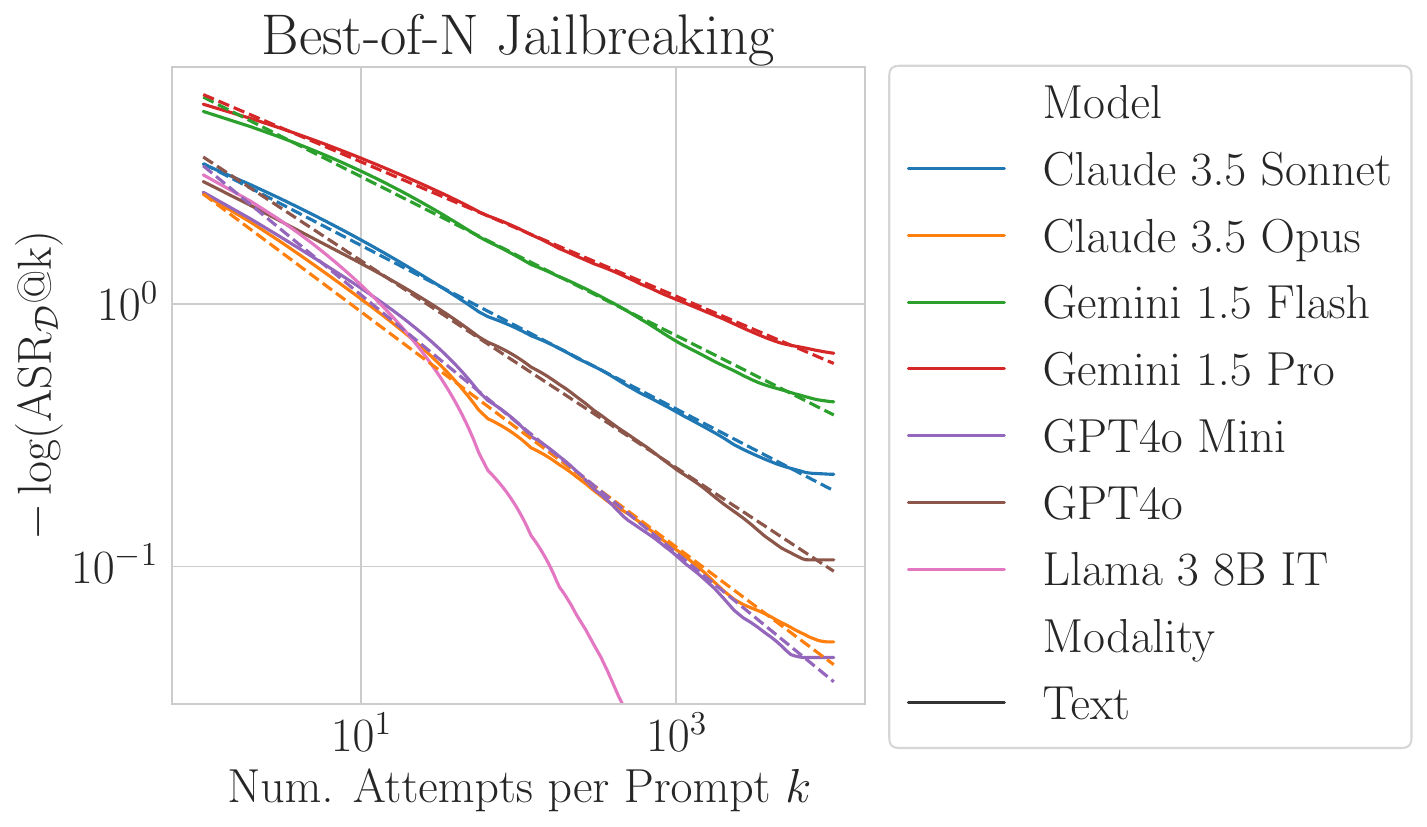}    
    \caption{\textbf{Power Law Scaling in Language Models from Repeat Sampling.} Top: \citet{brown2024largelanguagemonkeysscaling} found the negative log average pass rate $-\log(\operatorname{pass_{\mathcal{D}}@k})$ at solving mathematical problems scales polynomially (i.e., as a power law) with the number of independent attempts per problem $k$. Bottom: \citet{hughes2024bestofnjailbreaking} similarly found the negative log average attack success rate $-\log(\operatorname{ASR_{\mathcal{D}}@k})$ when jailbreaking multimodal language models scales polynomially with the number of jailbreak attempts per prompt. Should such power law scaling be expected?
    From where do large language monkeys obtain their power (laws)?
    }
    \label{fig:power_laws_repeat_sampling}
\end{figure}

\begin{figure*}[t!]
    \centering
    \includegraphics[width=\linewidth]{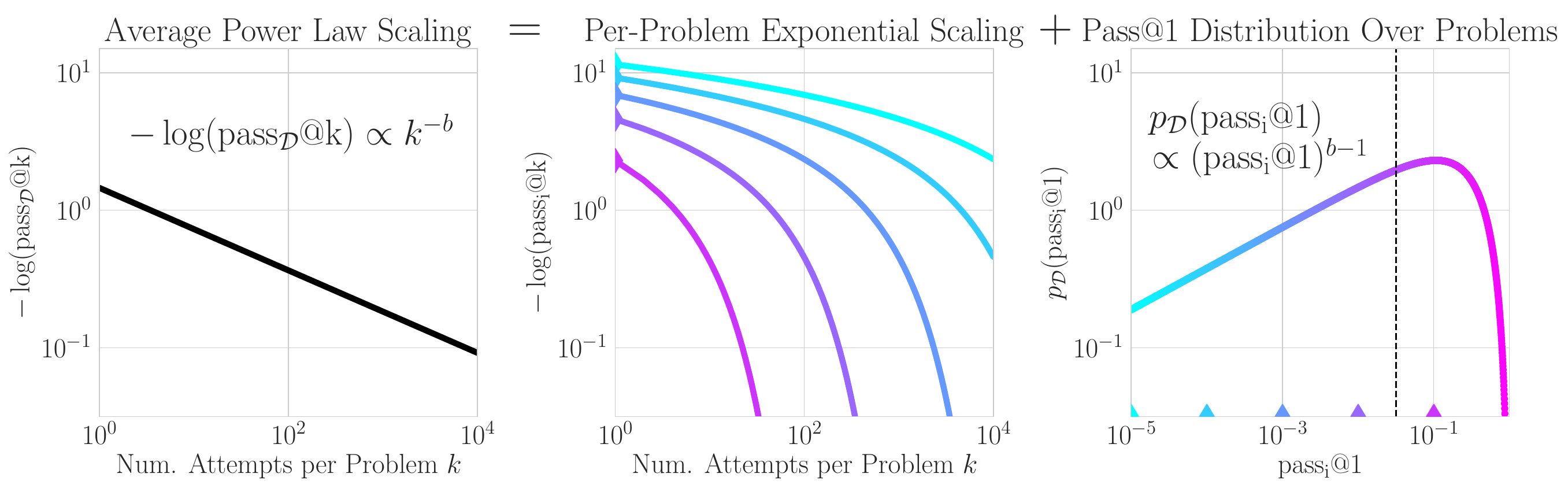}
    \caption{\textbf{Schematic: The Origin of Power Laws from Scaling Inference Compute via Repeat Sampling.} The $- \log (\operatorname{pass_{\mathcal{D}}@k})$ scales as a power law with the number of attempts per problem $k$ (left). This arises from a combination of two factors: (1) for each problem, $-\log(\operatorname{pass_i@k})$ scales exponentially with $k$ (center), and (2) the distribution (over problems in the dataset) of single-attempt success rates $\operatorname{pass_i@1}$ itself has a left power-law tail of small values (right).}
    \label{fig:schematic}
\end{figure*}

One direction of renewed interest is inference-time compute scaling, whereby compute is controllably increased at inference to improve the performance of a model, e.g., \citet{openai2024o1}. In this direction, recent research discovered that language model success rates scale predictably with the number of independent attempts made at accomplishing a task.
Specifically, in a paper titled, ``Large Language Monkeys: Scaling Inference Compute with Repeated Sampling," \citet{brown2024largelanguagemonkeysscaling} studied how language model performance changes at mathematical problem solving and coding problems when $k$ independent attempts are sampled per problem. Performance on the $i$-th problem was measured using the expected (over attempts) success rate \citep{kulal2019spoc,chen2021evaluatinglargelanguagemodels}, defined as:
\begin{equation}
\begin{aligned}
&\operatorname{pass_i@k} \, \defeq \, \\
&\mathop{\raisebox{3pt}{$\mathbb{E}$}}_{\substack{k \text{ Attempts}}}\Big[ \mathbb{I}[\text{Any attempt on $i$-th problem succeeds}] \Big].
\end{aligned}
\end{equation}

Using the unbiased and numerically stable estimator of \citet{chen2021evaluatinglargelanguagemodels} (for details, see Appendix~\ref{app:sec:chen2021estimator}), \citet{brown2024largelanguagemonkeysscaling} found that the negative log averaged-over-$P$-problems success rate falls as a power law with the number of independent attempts per problem $k$:
\begin{equation}
-\log \Bigg( \frac{1}{P} \sum_{i=1}^P \operatorname{pass_i@k} \Bigg) \approx a k^{-b},
\end{equation}
for model-specific and benchmark-specific constants $a, b > 0$ (Fig.~\ref{fig:power_laws_repeat_sampling} Top). Soon after, on a separate topic of jailbreaking multimodal language models via text, image and audio attacks, independent work by \citet{hughes2024bestofnjailbreaking} studied jailbreaking success rates when $k$ independent attempts are made per harmful prompt. Performance was measured using Attack Success Rate (ASR) at $k$:
\begin{equation}
\begin{aligned}
&\operatorname{ASR_i@k} \, \defeq \,\\ &\mathop{\raisebox{3pt}{$\mathbb{E}$}}_{\substack{k \text{ Attempts}}}\Big[ \mathbb{I}[\text{Any attack on $i$-th prompt succeeds}] \Big].
\end{aligned}
\end{equation}
This ``Best-of-N Jailbreaking" attack similarly discovered that the negative log averaged-over-$P$-prompts attack success rate fell as a power law with the number of jailbreak attempts per prompt $k$:
\begin{equation}
-\log \Bigg( \frac{1}{P} \sum_{i=1}^P \operatorname{ASR_i@k} \Bigg)\approx a k^{-b},
\end{equation}
for model-specific and modality-specific constants $a, b > 0$ (Fig.~\ref{fig:power_laws_repeat_sampling} Bottom).
For the specific coefficients from both papers, see Appendix.~\ref{app:sec:power_law_fits_bon_and_llmonkey}.
As a minor matter of terminology, both papers frame their results in terms of ``coverage" -- the fraction of problems that can be solved after $k$ attempts per problem -- but as \citet{brown2024largelanguagemonkeysscaling} pointed out, coverage is equivalent to the average success rate (Appendix~\ref{app:sec:coverage_pass_at_k}); we prefer this latter framing as it avoids the binary implication that each problem either is or is not solved after $k$ attempts.

\section{Should Power Law Scaling Be Expected?}
\label{sec:should_power_laws_be_expected}

Should we expect large language monkeys to have such power  (laws)? That is, should the negative log of the average success rate scale polynomially with the number of independent attempts $k$? As we now explain mathematically and demonstrate empirically, such polynomial scaling with $k$ is perhaps surprising because, for any single problem, the negative log success rate at $k$ should fall exponentially with $k$; the intuition is that $\operatorname{pass_i@k}$ is 1 unless \textit{all} attempts fail, and since attempts are independent, the probability that all fail is exponentially unlikely with the number of attempts.

\begin{figure*}[t!]
    \centering
    \includegraphics[width=\linewidth]{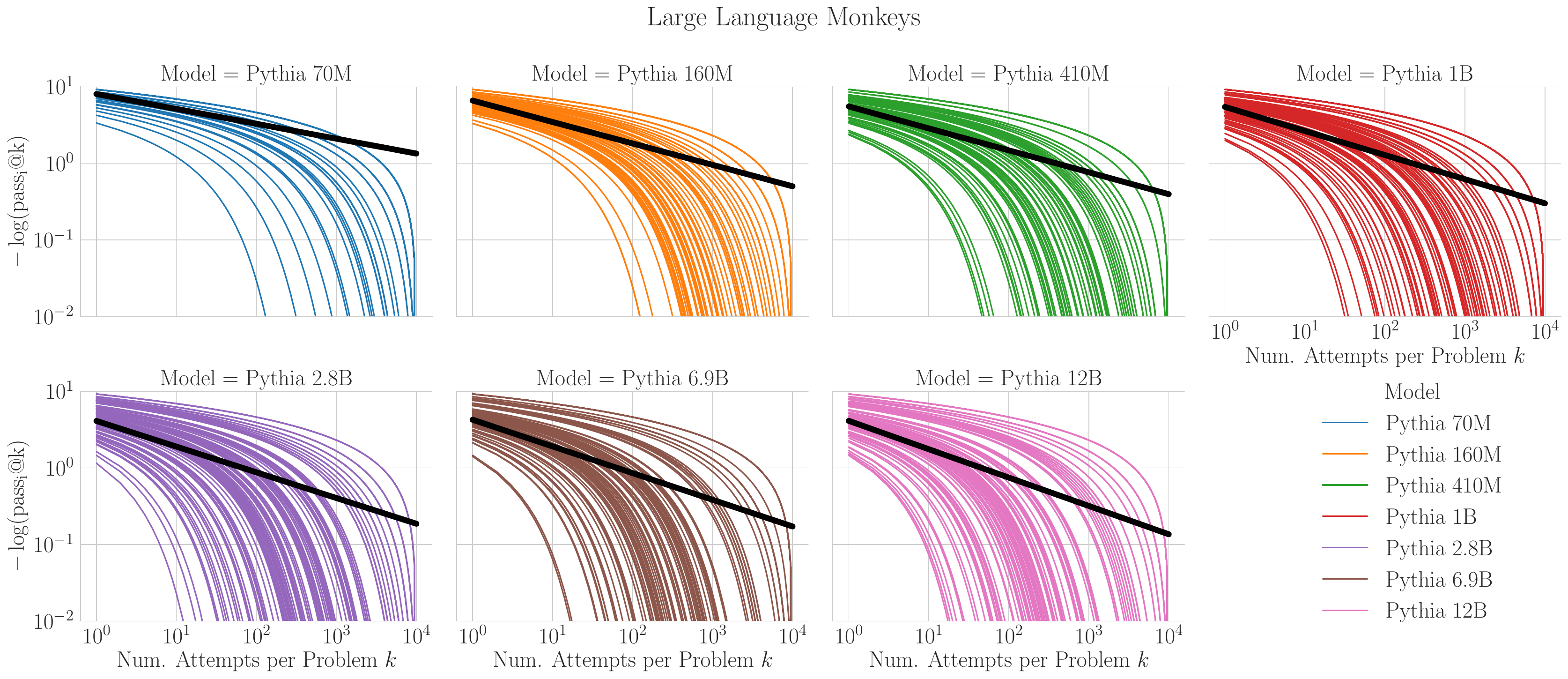}
    \includegraphics[width=\linewidth]{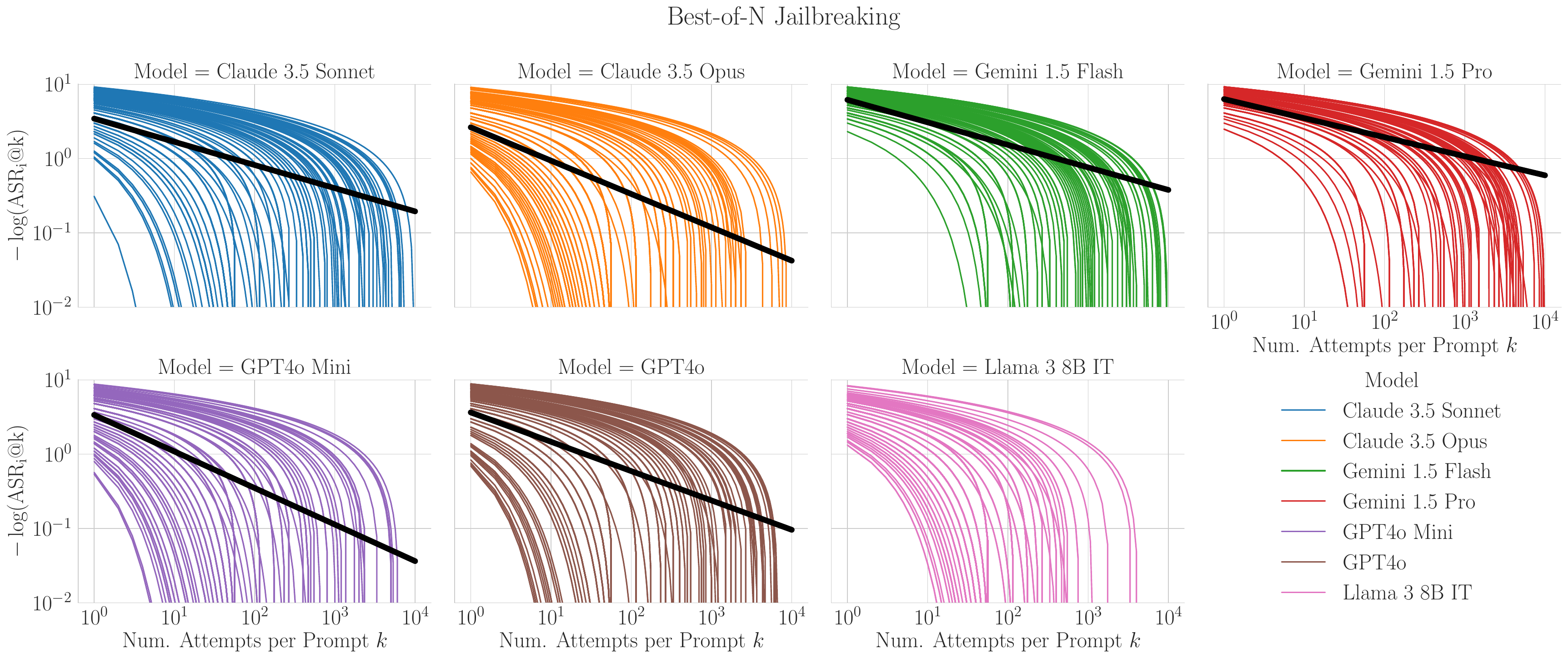}
    \caption{\textbf{Per-problem performance scales exponentially with the number of attempts per problem $k$}.     
    Top: Pythia language models on 128 problems from MATH, with performance on the $i$-th problem measured as $-\log(\operatorname{pass_i@k})$. Bottom: Frontier AI models on jailbreaking prompts from HarmBench, with performance on the $i$-th problem measured as $-\log(\operatorname{ASR_i@k})$. In both settings, on each problem, the negative log \textit{per-problem} success rate falls exponentially with the number of independent attempts $k$. However, the negative log \textit{average} success rate falls as a power law with $k$ (black).}
    \label{fig:multiple_attempts_scaling_per_datum}
\end{figure*}

Mathematically, on any given attempt, the model has probability $\operatorname{pass_i@1}$ of solving the $i$-th problem.
Recalling that $\operatorname{pass_i@k}$ is defined as $1$ if \textit{any} of the $k$ attempts succeed, 0 otherwise, by linearity of expectation and by independence of the $k$ attempts, we can rewrite $\operatorname{pass_i@k}$ as:
\begin{align}
    \operatorname{pass_i@k} &= \mathop{\raisebox{3pt}{$\mathbb{E}$}}_{\substack{k \text{ Attempts}}}\Big[1 - \mathbb{I}[\text{All $k$ Attempts Fail}] \Big]\\
    &= 1 - \prod_{j=1}^k \mathop{\raisebox{3pt}{$\mathbb{E}$}}_{\substack{1 \text{ Attempt}}}\Big[ \mathbb{I}[\text{$j$-th Attempt Fails}] \Big].
\end{align}

The probability that the $j$-th attempt fails is one minus the probability that the $j$-th attempt succeeds. Since each attempt is i.i.d. with success probability $\operatorname{pass_i@1}$, we find
\begin{align}
    \operatorname{pass_i@k}
    &= 1 - (1 - \operatorname{pass_i@1})^k.
\end{align}

For large $k$, $(1 - \operatorname{pass_i@1})^k$ will be small. Recalling that the Taylor Series expansion of $\log (1 + x)$ for small $x$ is $\sum_{i=1}^{\infty} (-1)^{i-1} x^i / i \approx x$, we have:
\begin{align}
    -\log (\operatorname{pass_i@k} )
    &= - \log \Big(1 - (1 - \operatorname{pass@1})^k \Big)\\
    &\approx (1 - \operatorname{pass_i@1})^k.
\end{align}

\begin{figure*}[t!]
    \centering
    \includegraphics[width=\linewidth]{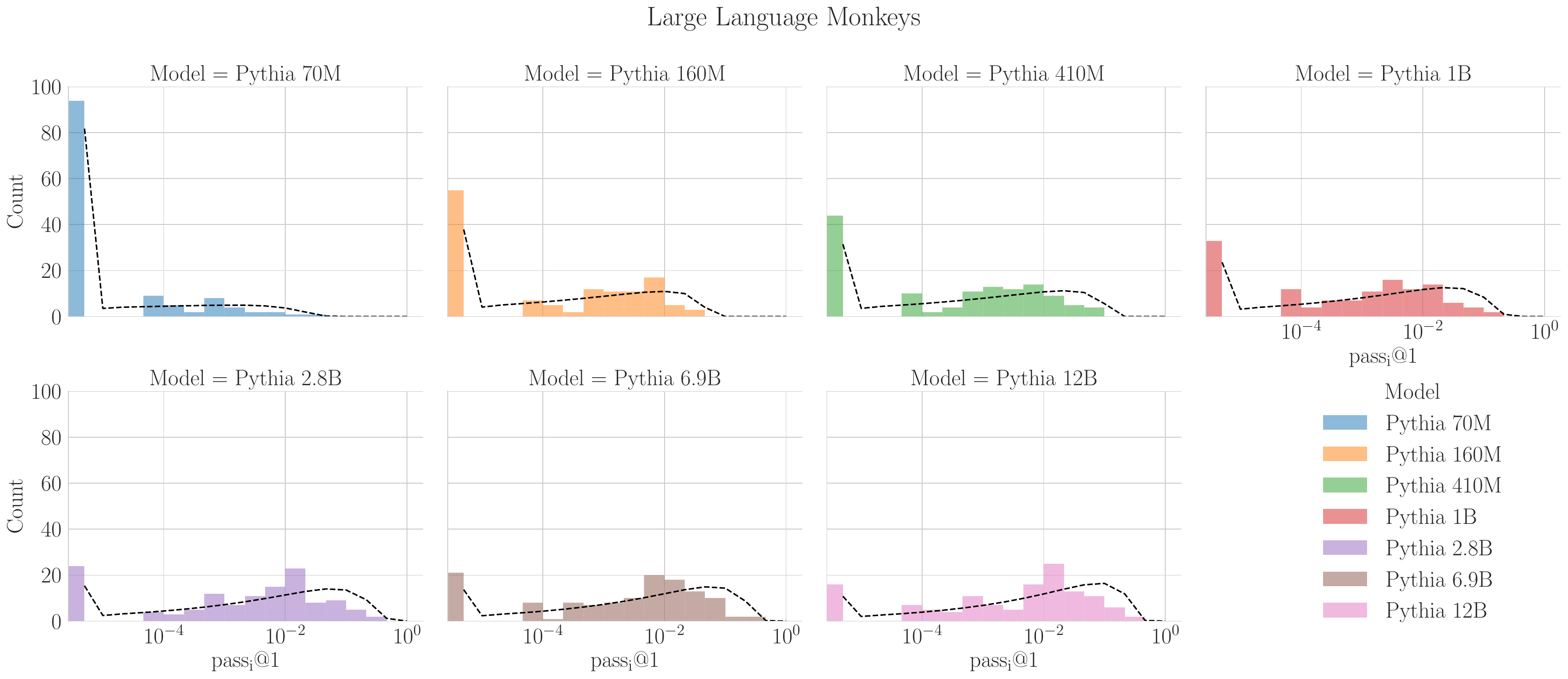}
    \includegraphics[width=\linewidth]{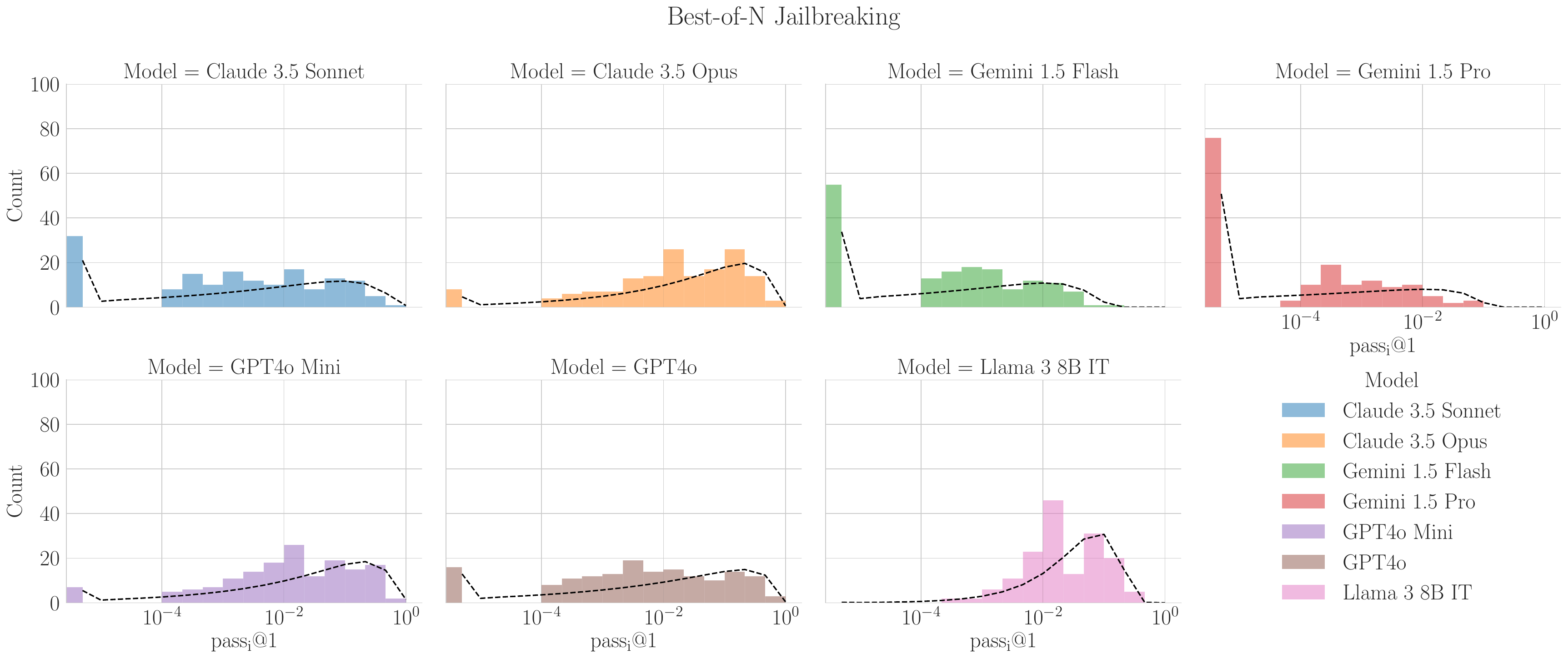}
    \caption{\textbf{Single-Attempt Success Rates  Distributions Possess Power Law-Like Left Tails.} Pythia language models on 128 MATH problems (top) and frontier AI systems on 159 HarmBench prompts (bottom) exhibit distributions (over problems) of $\operatorname{pass_i@1}$ and $\operatorname{ASR_i@1}$ with power law-like tails that are well fit by scaled Beta-Binomial distributions (black dashed lines), which produce aggregate power law scaling. Note that Llama 3 8B Instruction Tuned (IT) does not possess a power law tail, explaining why the model did not exhibit aggregate power law scaling under Best-of-N jailbreaking (Sec.~\ref{sec:no_dist_structure_no_power_law}).}
    \label{fig:multiple_attempts_pass_at_1_per_datum}
\end{figure*}

Thus, \textit{for any single problem}, we should expect the negative log expected (over attempts) success rate to fall \textit{exponentially} with $k$, not polynomially with $k$. 

To confirm this claim, we plotted the scaling of model performance on each problem -- measured either by $-\log(\operatorname{pass_i@k})$ or by $-\log(\operatorname{ASR_i@k})$ -- against the number of independent attempts $k$. We specifically used \citet{brown2024largelanguagemonkeysscaling}'s data of the Pythia language model family \citep{biderman2023pythia} solving 128 mathematical problems from MATH \citet{hendrycks2021measuring} as well as \citet{hughes2024bestofnjailbreaking}'s data from jailbreaking frontier AI systems -- Claude, GPT4 \citep{openai2024gpt4technicalreport}, Gemini \citep{anil2024geminifamilyhighlycapable,georgievgemini15unlockingmultimodal} and Llama 3 8B Instruction Tuned (IT) \citep{grattafiori2024llama3herdmodels} -- on 159 prompts from HarmBench \citep{mazeika2024harmbenchstandardizedevaluationframework}.
For each individual mathematical problem and jailbreaking prompt, we found the negative log expected (over attempts) success rates fall exponentially with $k$ as expected (Fig. \ref{fig:multiple_attempts_scaling_per_datum}), including on Llama 3 8B IT which does not exhibit an aggregate power law (Fig.~\ref{fig:power_laws_repeat_sampling}).

\section{Distribution of Per-Problem Single-Attempt Success Rates Creates Power Law Scaling}
\label{sec:distr_per_problem_success_rates}

How does polynomial scaling of the negative log \textit{average} success rate emerge from exponential scaling of the negative log \textit{per-problem} success rate?
The answer to this question \textit{must} lie in the distribution $\mathcal{D}$ over benchmark problems of single attempt (i.e., $k=1$) success rates because this distribution's density $p_{\mathcal{D}}(\operatorname{pass_i@1})$ links the per-problem scaling behavior to the aggregate scaling behavior via the definition of the aggregate success rate $\operatorname{pass_{\mathcal{D}}@k}$:
\begin{equation}
\begin{aligned}
    &\operatorname{pass_{\mathcal{D}}@k} \; \defeq \; \mathop{\raisebox{3pt}{$\mathbb{E}$}}_{\operatorname{pass_i@1} \sim \mathcal{D}} \Big[\operatorname{pass_i@k}(\operatorname{pass_i@1}) \Big]\\
    &= 1 - \int_0^1 (1 - \operatorname{pass_i@1})^k \, p_{\mathcal{D}}(\operatorname{pass_i@1}) \, \operatorname{d\,pass_i@1}.
\end{aligned}
\end{equation}

Based on a known result that power laws can originate from an appropriately weighted sum of exponential functions (Appendix ~\ref{app:sec:power_laws_from_distr_over_exp:background}), we begin by considering simple distributions for the single-attempt success probabilities and asking which yield power law scaling between $-\log(\operatorname{pass_{\mathcal{D}}@k})$ and $k$, as well as what properties of the distributions set the scaling exponent. In Appendices~\ref{app:sec:power_laws_from_distr_over_exp:uniform_distribution}-\ref{app:sec:power_laws_from_distr_over_exp:reciprocal_distribution}, we derive that several simple distributions yield power law scaling with different exponents whereas others do not:
\begin{align*}
    -\log \Big( &\operatorname{pass_{\mathrm{Uniform}(0,\, \beta \leq 1)}}@k &\Big) &\propto k^{-1}.\\
    -\log \Big( &\operatorname{pass_{\operatorname{Beta(\alpha, \beta)}}@k} &\Big) &\propto k^{-\alpha}.\\
    -\log \Big( &\operatorname{pass_{\operatorname{Kumaraswamy(\alpha,\, \beta)}}@k} &\Big) &\propto k^{-\alpha}.\\
    -\log \Big( &\operatorname{pass_{\operatorname{ContinuousBernoulli(\lambda < 1/2)}}@k} &\Big) &\propto k^{-1}.\\
    -\log \Big( &\operatorname{pass_{\operatorname{Reciprocal(0 < \alpha < \beta < 1)}}@k} &\Big) \propto &\frac{(1-\alpha)^k}{k}.
\end{align*}
To test this understanding, we examined whether the data of \citet{brown2024largelanguagemonkeysscaling} and \citet{hughes2024bestofnjailbreaking} had per-problem single-attempt success rate distributions that matched one of these simple distributions (Fig.~\ref{fig:multiple_attempts_pass_at_1_per_datum}). We found that the distributions could indeed be well fit by a 3-parameter $\operatorname{Kuamraswamy}(\alpha, \beta, a = 0, c)$ distribution with scale parameter $c$ (Fig.~\ref{fig:multiple_attempts_pass_at_1_per_datum}, black dashed lines); we found the scale parameter was critical to obtain good fits because the standard 2-parameter Kumaraswamy distribution is supported on $(0, 1)$ whereas most single-attempt success distributions have a smaller maximum such as $0.01$ or $0.1$.

More generally, what are the distributional properties that create such power law scaling and that set the specific power law exponent?
As we now show, the negative log average success rate will exhibit power law scaling in $k$ with exponent $b$ if and only if the distribution over problems of single-attempt success probabilities itself behaves like a power law near $0$ with exponent $b-1$:\newline

\begin{theorem}[Sufficiency of Power-Law Left Tail in Distribution of Single-Attempt Success Rates]
\label{thm:sufficiency_powerlaw}
Let $\mathcal{D}$ be a probability distribution on $[0,1]$ with PDF $p_{\mathcal{D}}(\operatorname{pass_i@1})$.  
Suppose there exist constants $b > 0$, $C > 0$, $\theta > 0$ and $\delta > 0$ such that, 
for all $0 < \operatorname{pass_i@1} < \delta$, we have 
\[
  p_{\mathcal{D}}(\operatorname{pass_i@1}) \;=\; C \cdot (\operatorname{pass_i@1})^{b-1} \;+\; O\bigl((\operatorname{pass_i@1})^{b-1+\theta}\bigr).
\]
Then, for large $k$,
\[
  -\log\big(\operatorname{pass_{\mathcal{D}}@k}\big)
  \;\sim\;
  C\,\Gamma(b) \;k^{-b}.
\]
\end{theorem}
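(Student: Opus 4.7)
The plan is to reduce the theorem to the asymptotic behavior of the single integral $I_k := \int_0^1 (1-x)^k\, p_{\mathcal{D}}(x)\, dx$ and then to pin down its leading term by splitting the domain at $\delta$ and invoking the Beta-function identity. From the definition, $\operatorname{pass_{\mathcal{D}}@k} = 1 - I_k$, and $I_k \to 0$ as $k \to \infty$ (which is implicit in the hypothesis, as $\mathcal{D}$ places no atom at $0$). The Taylor expansion $-\log(1-y) = y + O(y^2)$ then gives $-\log(\operatorname{pass_{\mathcal{D}}@k}) = I_k\,(1 + O(I_k))$, so it suffices to prove $I_k \sim C\,\Gamma(b)\, k^{-b}$; the correction $O(I_k^2) = O(k^{-2b})$ is of strictly lower order since $b > 0$.

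Next, I would split $I_k = \int_0^\delta (1-x)^k p_{\mathcal{D}}(x)\, dx + \int_\delta^1 (1-x)^k p_{\mathcal{D}}(x)\, dx$. The tail over $[\delta, 1]$ is bounded by $(1-\delta)^k$, which decays exponentially and is therefore dominated by any polynomial rate $k^{-b}$. On the near-zero piece, substituting the hypothesis $p_{\mathcal{D}}(x) = C x^{b-1} + O(x^{b-1+\theta})$ reduces the problem to evaluating $\int_0^\delta (1-x)^k x^{s-1}\, dx$ for $s \in \{b,\, b+\theta\}$. Extending the upper limit from $\delta$ to $1$ again incurs only an exponentially small error, and the resulting full integral is the Beta function $\int_0^1 (1-x)^k x^{s-1}\, dx = B(s, k+1) = \Gamma(s)\,\Gamma(k+1)/\Gamma(k+s+1)$.

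Finally, Stirling's formula (equivalently, the ratio asymptotic $\Gamma(k+1)/\Gamma(k+s+1) \sim k^{-s}$) yields $B(s, k+1) \sim \Gamma(s)\, k^{-s}$ as $k \to \infty$. Applying this with $s = b$ for the leading term and $s = b+\theta$ for the remainder gives $I_k = C\,\Gamma(b)\, k^{-b} + O(k^{-b-\theta})$, and combining with the first step establishes the claimed asymptotic. The main obstacle will be making the implicit big-$O$ in the hypothesis \emph{uniform} on $(0,\delta)$---that is, exhibiting a single constant $M$ with $|p_{\mathcal{D}}(x) - C x^{b-1}| \le M\, x^{b-1+\theta}$ for all $x \in (0,\delta)$---so that the error bound can be brought inside the integral. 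A clean alternative that avoids explicit Stirling estimates is the substitution $u = kx$ together with dominated convergence, using the envelope $(1 - u/k)^k \le e^{-u}$ to pass to the limit $\int_0^\infty e^{-u} u^{b-1}\,du = \Gamma(b)$; this is essentially Watson's lemma.
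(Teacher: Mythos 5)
Your proof is correct but takes a genuinely different route from the paper's. The paper splits the integral $I_k = \int_0^1 (1-p)^k p_{\mathcal{D}}(p)\,dp$ at a $k$-dependent cutoff $c/k$, controls the tail $[c/k,1]$ by $(1-p)^k \le e^{-c}$, replaces $(1-p)^k$ by $e^{-kp}\bigl(1 + O(kp^2)\bigr)$ on the near-zero piece, substitutes $u = kp$, and then passes to the limit in two stages ($k \to \infty$, then $c \to \infty$) to recover $\Gamma(b)$; this forces it to track three separate error terms ($O(k^{-b-1})$ from the exponential approximation, $O(k^{-b-\theta})$ from the density remainder, and the $e^{-c}$ tail). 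You instead split at the \emph{fixed} threshold $\delta$ given in the hypothesis, bound the tail $[\delta,1]$ by $(1-\delta)^k$, extend the near-zero integral from $[0,\delta]$ back to $[0,1]$ at exponentially small cost, and then invoke the exact Beta-function identity $\int_0^1 (1-x)^k x^{s-1}\,dx = B(s,k+1) = \Gamma(s)\,\Gamma(k+1)/\Gamma(k+s+1)$ together with the Gamma-ratio asymptotic $\Gamma(k+1)/\Gamma(k+s+1) \sim k^{-s}$. This trades the paper's hands-on Laplace-type estimate for a closed-form evaluation, eliminating the $(1-p)^k \approx e^{-kp}$ approximation and the auxiliary parameter $c$ entirely; the only asymptotics needed is the standard Gamma ratio. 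Your alternative remark (substitute $u=kx$ and dominate by $e^{-u}u^{b-1}$ to apply dominated convergence) is precisely Watson's lemma and is also valid, and is morally the cleaned-up version of what the paper does. Your handling of the final logarithm, of the uniformity of the big-$O$ in the hypothesis, and of the $O(I_k^2) = O(k^{-2b}) = o(k^{-b})$ remainder are all correct.
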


\begin{theorem}[Necessity of Power-Law Left Tail in Distribution of Single-Attempt Success Rates]
\label{thm:necessity_powerlaw}
Let $\mathcal{D}$ be a distribution over $\operatorname{pass_i@1} \in [0,1]$ with PDF $p_{\mathcal{D}}(\operatorname{pass_i@1})$.
Suppose there exist constants $b > 0$ and $A > 0$ such that for large $k$,
\[
-\log\big(\operatorname{pass_{\mathcal{D}}@k}\big)
\sim 
A\,k^{-b}.
\]
Then, under mild regularity assumptions, the probability density must satisfy
\[
p_{\mathcal{D}}(\operatorname{pass_i@1})
\;\sim\;
\frac{A}{\Gamma(b)} \, (\operatorname{pass_i@1})^{b - 1}
\quad
\text{as } \operatorname{pass_i@1} \to 0^+.
\]
\end{theorem}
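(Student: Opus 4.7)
The plan is to reduce the statement to a classical Tauberian theorem for Laplace transforms. First I would show that the hypothesis on $-\log(\operatorname{pass_{\mathcal{D}}@k})$ is equivalent to an asymptotic for the linear functional
\[
  M(k) \;\defeq\; \int_0^1 (1-p)^k \, p_{\mathcal{D}}(p)\,dp \;=\; 1 - \operatorname{pass_{\mathcal{D}}@k},
\]
where I write $p$ for $\operatorname{pass_i@1}$. Assuming the mild regularity that $p_{\mathcal{D}}$ places no atom at $0$ and has support meeting every neighborhood of $0$, dominated convergence gives $\operatorname{pass_{\mathcal{D}}@k}\to 1$, so $-\log(\operatorname{pass_{\mathcal{D}}@k}) = M(k) + O(M(k)^2)$ and the assumed asymptotic transfers to $M(k) \sim A\,k^{-b}$ as $k \to \infty$.

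Next I would convert $M(k)$ into a Laplace transform on $[0,\infty)$ by the substitution $t = -\log(1-p)$, so $p = 1-e^{-t}$ and $dp = e^{-t}\,dt$, giving
\[
  M(k) \;=\; \int_0^\infty e^{-k t}\, g(t)\, dt,
  \qquad g(t) \;\defeq\; p_{\mathcal{D}}(1-e^{-t})\, e^{-t}.
\]
The portion of the integral with $t$ bounded away from $0$ decays exponentially in $k$, so only the behavior of $g$ near $t = 0$ can drive any polynomial decay. Since $1 - e^{-t} = t - t^2/2 + O(t^3)$ and $e^{-t} = 1 + O(t)$, any asymptotic of the form $g(t) \sim C\,t^{b-1}$ as $t \to 0^+$ is equivalent to $p_{\mathcal{D}}(p) \sim C\,p^{b-1}$ as $p \to 0^+$, so it suffices to identify the leading behavior of $g$ at the origin.

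Now I would invoke Karamata's Tauberian theorem (in its at-infinity / at-zero form): if $G(t) = \int_0^t g(s)\,ds$ is non-decreasing and its Laplace transform satisfies $\hat G(k) \sim A\,k^{-b}$ as $k \to \infty$, then $G(t) \sim \frac{A}{\Gamma(b+1)}\, t^{b}$ as $t \to 0^+$. Upgrading this integral asymptotic to a density asymptotic uses the \emph{monotone density theorem} of de Haan (or any comparable regularity such as eventual monotonicity of $g$ near $0$, or $g$ being regularly varying at $0$): under these ``mild regularity assumptions'' one concludes
\[
  g(t) \;\sim\; \frac{A}{\Gamma(b)}\, t^{b-1}
  \quad\text{as } t \to 0^+,
\]
and translating back through the substitution gives the claimed $p_{\mathcal{D}}(p) \sim \frac{A}{\Gamma(b)}\, p^{b-1}$.

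The main obstacle is exactly this last step: Karamata's theorem on its own only controls the cumulative distribution function, not its density, and counterexamples exist where $\hat G(k) \sim A k^{-b}$ yet the density $g$ oscillates and fails to have a pointwise asymptotic. The ``mild regularity assumptions'' in the theorem statement must therefore be made precise; the lightest natural choice is that $p_{\mathcal{D}}$ is eventually monotone on some right neighborhood of $0$ (or equivalently that $g$ is regularly varying there), which is exactly the hypothesis needed to invoke the monotone density theorem. Everything else — the reduction from $-\log(\operatorname{pass_{\mathcal{D}}@k})$ to $M(k)$, the Laplace-transform change of variables, and matching of leading coefficients — is routine.
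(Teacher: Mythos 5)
Your proof is correct and takes a genuinely different route from the paper. The paper argues directly: it splits $I_k = \int_0^1 (1-p)^k f(p)\,dp$ into a left piece on $[0,c/k]$ and an exponentially small right piece, controls the ratio $(1-p)^k / e^{-kp}$ uniformly on the left piece, substitutes $u=kp$ to arrive at $kI_k \sim \int_0^c e^{-u} f(u/k)\,du \sim A\,k^{1-b}$, and then \emph{infers} that $f(u/k) = \Theta\big((u/k)^{b-1}\big)$ because this is ``what the integral forces.'' That final inference is precisely the Tauberian step, and the paper treats it informally, invoking ``mild regularity'' in a closing remark without naming what is actually needed. You instead make the Laplace transform exact via the change of variables $t = -\log(1-p)$, so that $M(k) = \int_0^\infty e^{-kt} g(t)\,dt$ with no approximation error to track, apply Karamata's Tauberian theorem to get the cumulative asymptotic $G(t) \sim \frac{A}{\Gamma(b+1)} t^b$, and then invoke the monotone density theorem to upgrade to the pointwise asymptotic $g(t) \sim \frac{A}{\Gamma(b)} t^{b-1}$, which is the statement the paper's ``mild regularity'' is secretly invoking. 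Your version buys precision: it identifies exactly which regularity is needed (eventual monotonicity or regular variation of $g$ near $0$, the hypothesis of the monotone density theorem), and it avoids the paper's unrigorous leap from an integral asymptotic to a pointwise density asymptotic. The paper's version buys elementarity and self-containment: it never has to appeal to Karamata or de Haan, at the cost of leaving the key inference at a heuristic level. Both derivations match in their final constant since $\Gamma(b+1) = b\,\Gamma(b)$.

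One small point worth making explicit if you write this up: your observation that counterexamples exist where $\hat G(k) \sim A k^{-b}$ but $g$ oscillates is the right justification for why the regularity hypothesis cannot be dropped entirely, and it is a gap in the paper's exposition that your proof closes cleanly.
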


In Fig.~\ref{fig:schematic}, we illustrate this connection schematically.
For proofs, see Appendices \ref{app:sec:power_laws_from_distr_over_exp:sufficiency} and \ref{app:sec:power_laws_from_distr_over_exp:necessity}.
These results clarify that whenever $-\log (\operatorname{pass_{\mathcal{D}}@k} )$ exhibits power-law decay in $k$ with exponent $b$, the distribution over problems of single-attempt success rates \emph{must} have ``polynomial weight'' near $\operatorname{pass_i@1}=0$, i.e.\ $p_{\mathcal{D}}(p) = \Theta(p^{\,b-1})$.

To offer intuition, we know that each problem is being solved by the model (or equivalently, each prompt is jailbreaking the model) exponentially quickly.
If one looks across all problems in the benchmark, some have $\operatorname{pass_i@1}$ so small that they remain unsolved for many, many attempts.
Whether these ``tiny‐$\operatorname{pass_i@1}$" problems still matter at large $k$ depends on how \emph{many} such problems there are.
Polynomial density near $0$ ``piles up" enough hard problems in just the right way such that even though each of those problems is being solved exponentially quickly, the \emph{aggregate} success rate over problems decreases at only a power‐law rate in $k$.
A more succinct mathematical summary is that, for a compound binomial distribution, the lower tail probability controls the upper tail of the marginal survivor function.
\section{Lack of Distributional Structure Explains Deviations from Power Law Scaling}
\label{sec:no_dist_structure_no_power_law}

\begin{figure*}[t!]
    \centering
    \includegraphics[width=0.9\linewidth]{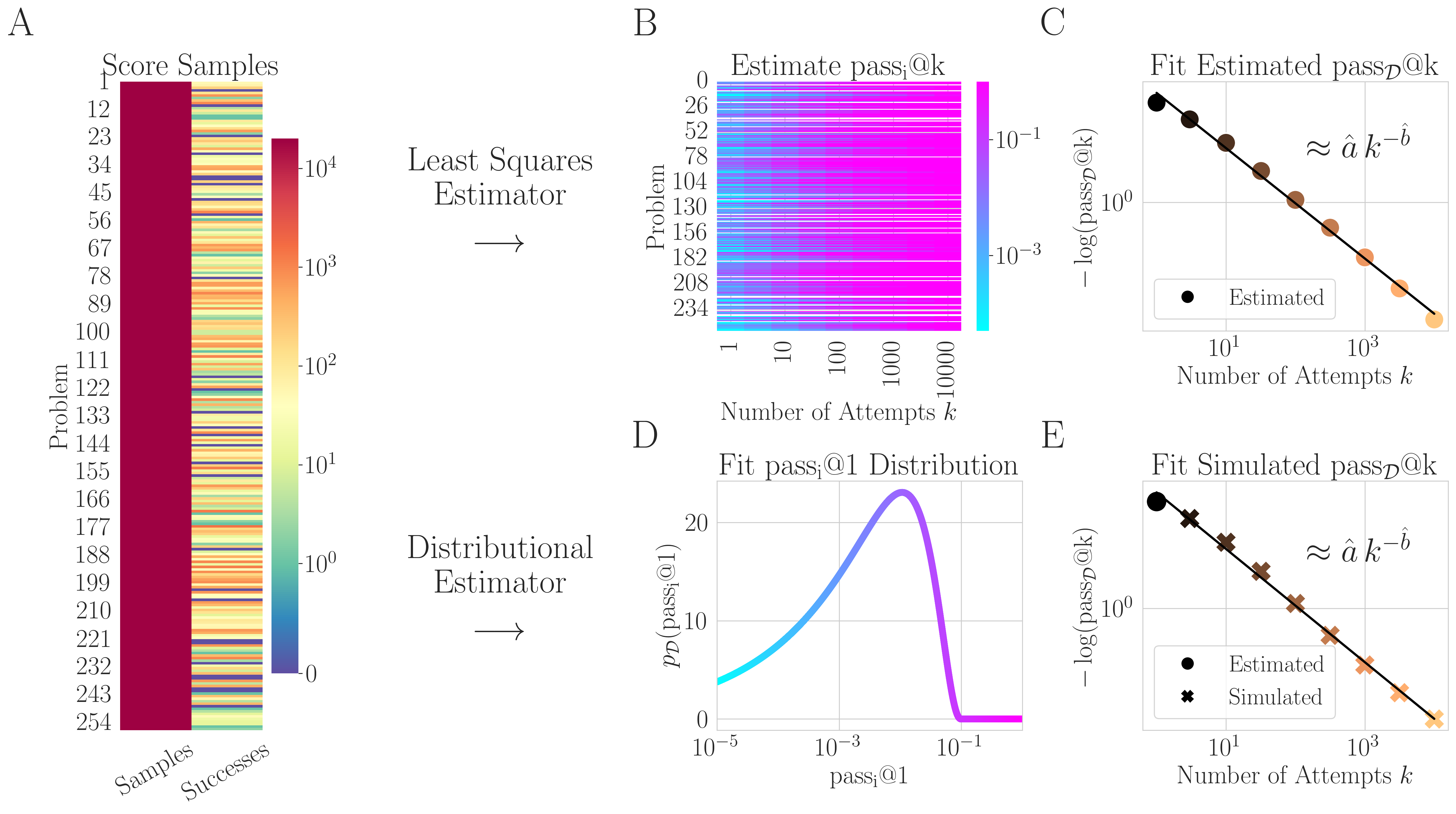}
    \caption{\textbf{Schematic: Two Estimators of Power Law Parameters for Scaling Inference Compute via Repeat Sampling.} (A) Both estimators begin by generating many samples per prompt, then computing the number of successes per prompt. In the standard least squares power law parameter estimator (top), (B) $\operatorname{pass_i@k}$ is estimated for each $i$-th problem at multiple $k$ values, then (C) averaged over problems and fit with linear regression in log-log space.
    In the distributional power law parameter estimator (bottom), (D) a distribution $\mathcal{D}$ is fit to estimates of $\operatorname{pass_i@1}$, then (E) the single-attempt success probability distribution is used to simulate $\operatorname{pass_{\mathcal{D}}@k}$ at arbitrary $k$ values for linear regression in log-log space.}
    \label{fig:schematic2}
\end{figure*}

Notably, previous papers observed that not every model exhibits power law scaling in every setting. To highlight one, \citet{hughes2024bestofnjailbreaking} observed that when jailbreaking Meta's Llama 3 8B Instruction Tuned (IT) model \cite{grattafiori2024llama3herdmodels}, the $-\log (\operatorname{ASR_{\mathcal{D}}@k})$ fell faster than any power law (Fig.~\ref{fig:power_laws_repeat_sampling}), i.e., the $\operatorname{ASR_{\mathcal{D}}@k}$ rose much more quickly than the other frontier AI systems. Based on our mathematical insights and the empirical per-problem single-attempt attack success rates (Fig.~\ref{fig:multiple_attempts_pass_at_1_per_datum}), we can understand why: Llama 3 8B IT could be successfully jailbroken on every prompt within the permitted sampling budget and thus had no heavy left tail necessary to create the aggregate power law scaling.

\begin{figure*}[t!]
    \centering
    \includegraphics[width=0.5\linewidth]{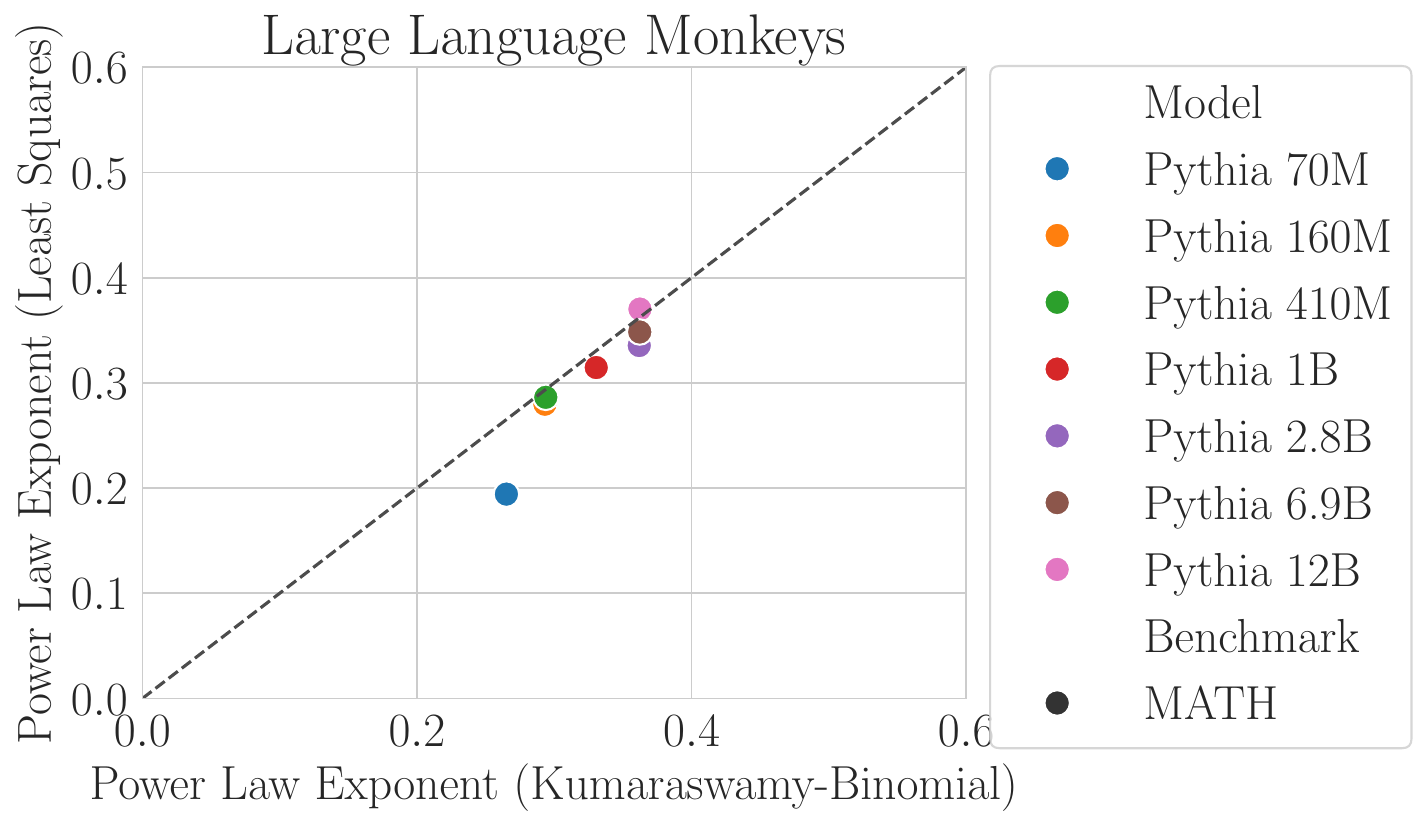}%
    \includegraphics[width=0.5\linewidth]{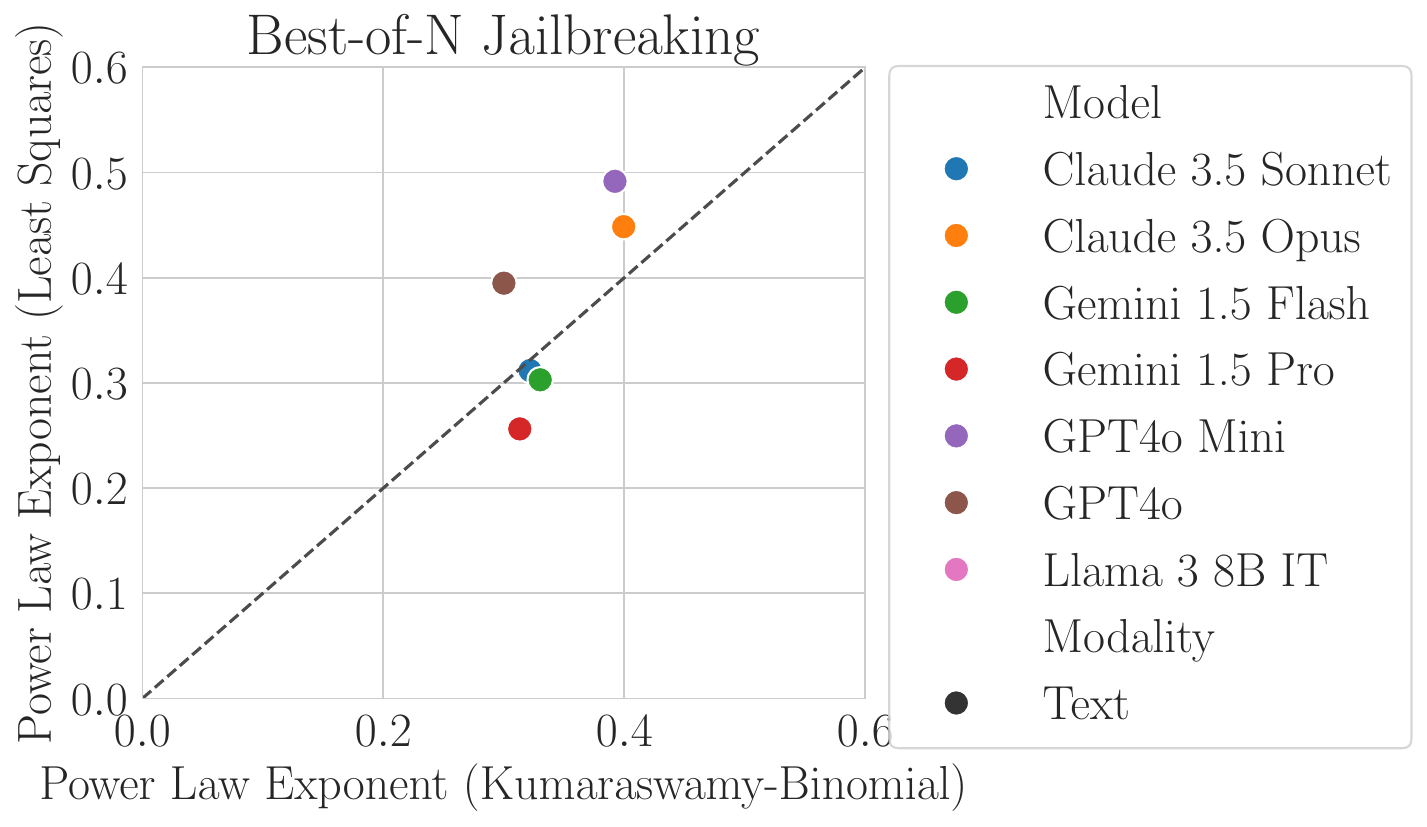}
    \caption{\textbf{Comparing Estimators of Power Law Exponents.} We compare two estimators of the power law exponent $b$ in $-\log(\operatorname{pass_{\mathcal{D}}@k}) \approx a k^{-b}\;$: (1) the standard least-squares estimator between $k$ and $-\log(\operatorname{pass_{\mathcal{D}}@k})$ in log-log space, and (2) the distributional estimator of $\operatorname{pass_i@1}$ assuming a scaled Kumaraswamy-Binomial distribution. Using all available data to fit both estimators, we find agreement between the least-squares estimate (ordinate) and the distribution-derived estimate (abscissa) for both Pythia models on MATH (left) and for frontier AI systems on HarmBench (right). For an explanation of why the two estimators match more closely for Large Language Monkeys than for Best-of-N Jailbreaking, see Appendix~\ref{app:sec:clarification_of_data_sampling}.
    }
    \label{fig:comparison_power_law_exponents}
\end{figure*}

\begin{figure*}[t!]
    \centering
    \includegraphics[width=0.95\linewidth]{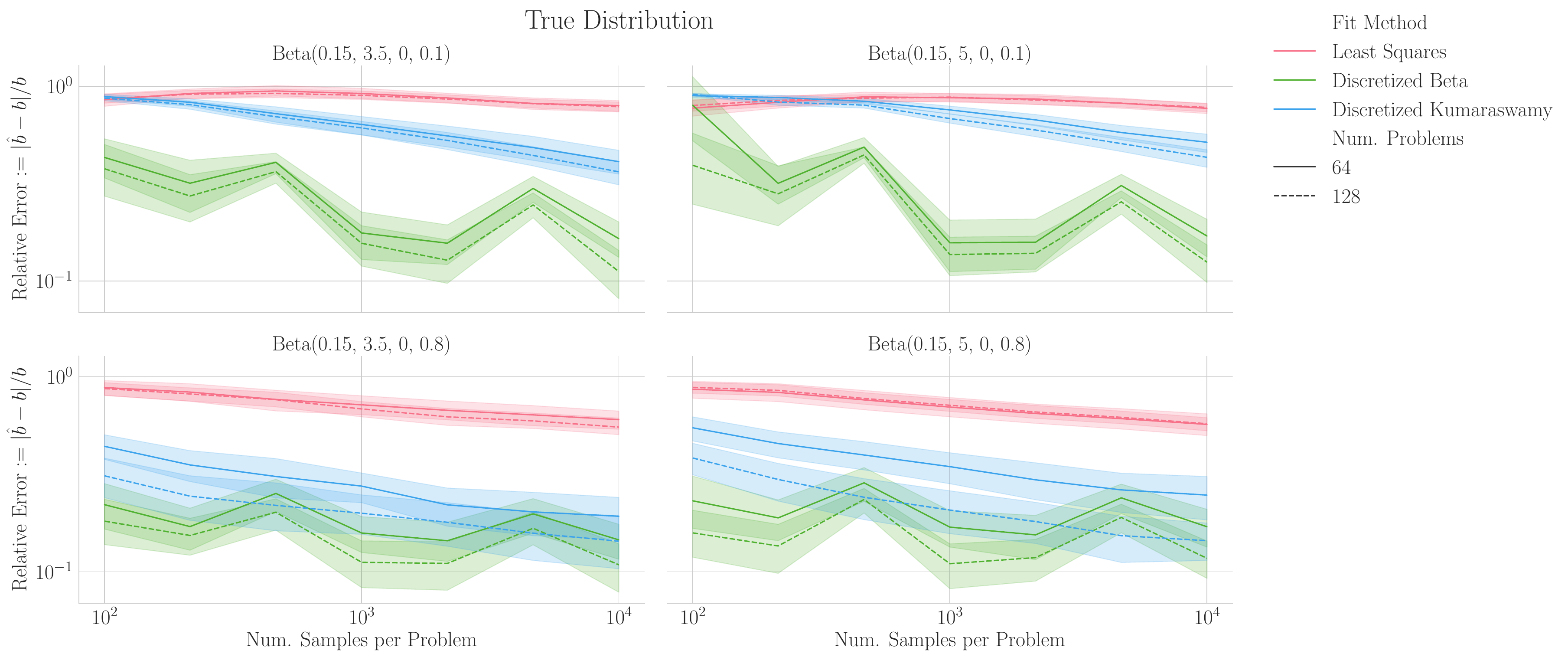}
    \caption{\textbf{Comparing Two Estimators of Power Law Exponents via Backtesting.} On synthetic data with known ground-truth power law $a \, k^{-b}$, we compare how well the least squares and the distributional estimator recover the scaling exponent $b$ as measured by the relative error $|\hat{b} - b| / b$ by backtesting: subsampling the number of problems and the number of samples per problem. We find that the distributional estimator obtains significantly better sample efficiency.}
    \label{fig:backtesting}
\end{figure*}


\section{A New Distributional Estimator for Predicting Power Law Scaling}
\label{sec:estimating_power_law_exponent}

A natural consequence of this connection between the scaling of $-\log(\operatorname{pass_{\mathcal{D}}@k})$ and the left tail of the distribution $p_{\mathcal{D}}(\operatorname{pass_i@1})$ is that the distribution of single-attempt success rates can be used to predict whether power-law scaling will appear and if so, what the intercept and exponent of the power law will be. To do this, one can fit the distribution $\hat{p}_{\mathcal{D}}(\operatorname{pass_i@1})$ and then \textit{simulate} how $\operatorname{pass_{\mathcal{D}}@k}$ will scale with $k$ (Fig.~\ref{fig:schematic2}) using the relationship:
\begin{equation}
\begin{aligned}
&\widehat{\operatorname{pass_{\mathcal{D}}@k}} \defeq \\
&1 - \int_0^1 (1 - \operatorname{pass_i@1})^k \, \hat{p}_{\mathcal{D}}(\operatorname{pass_i@1}) \, \operatorname{d\,pass_i@1}.
\end{aligned}
\end{equation}
To empirically test this claim, we compared the standard least squares regression estimator (in log-log space) \citep{hoffmann2022trainingcomputeoptimallargelanguage,caballero2022broken,besiroglu2024chinchillascalingreplicationattempt} against a \textit{distributional estimator}.
To motivate our distributional estimator, we first need explain a key obstacle and how the distributional estimator overcomes it.
The obstacle is that there are problems or prompts whose single-attempt success probabilities $\operatorname{pass_i@1}$ lie between $(0, 1/\text{Number of Samples})$ such that, due to finite sampling, we lack the resolution to measure.
While we do not know the true single-attempt success probability for the problems that lie in this interval, we \textit{do} know \textit{how many} problems fall into this left tail bucket, and we can fit a distribution's parameters such that the distribution's probability mass in the interval $(0, 1 / \text{Number of Samples})$ matches the empirical fraction of problems in this tail bucket. Thus, our distributional estimator works by first selecting a distribution (e.g., a scaled 3-parameter Beta distribution), discretizing the distribution according to the sampling resolution $1 / \text{Number of Samples}$ and performing maximum likelihood estimation under the discretized distribution's probability mass function.

We tested this distributional estimator in two different ways. First, focusing on Large Language Monkeys, we used all available real data from all problems and all samples per problem to compare the standard least squares regression estimator against the distributional estimator. 
We found close agreement between the two estimators (Fig.~\ref{fig:comparison_power_law_exponents}), giving us a sense that the two estimators yield reasonably consistent estimates under large sampling budgets.

Second, the distributional estimator also comes with another benefit: it directly provides an estimate of the power law's exponent $b$ in $a \, k^{-b}$. Estimating the power law's exponent is especially valuable because the exponent dictates how success rates are improving with increasing inference compute. To test how the distributional estimator and least squares estimator compare at recovering the true asymptotic power law exponent, we generated synthetic data so that we would have ground-truth knowledge of the true power law exponent, then backtested how the two scaling estimators compare at recovering the true exponent \citep{alabdulmohsin2022revisitingneuralscalinglaws, owen2024predicting} by subsampling data with fewer problems and fewer samples per problem.
We found that the distributional estimator obtains significantly better sample efficiency, with approximately an order of magnitude lower relative error $\defeq |\hat{b} - b| / b$ compared with the least squares estimator (Fig.~\ref{fig:backtesting}), or equivalently, ${\sim}2-4$ orders of magnitude less inference-compute. The distributional estimator performs well even under distributional mismatch.
\section{Related Work}
\label{sec:related_work}

Research into scaling laws of deep neural networks has a rich history spanning theoretical foundations, empirical validations, and diverse applications. The earliest investigations discovered power law scaling in simple machine learning settings \citep{barkai1993scaling, mhaskar1996neural, pinkus1999approximation}. However, the modern era of scaling laws began with breakthrough studies in neural language models \citep{hestness2017deep, kaplan2020scalinglawsneurallanguage, brown2020languagemodelsfewshotlearners}, catalyzing extensive research across multiple directions.
The theoretical understanding of scaling laws has advanced significantly \citep{spigler2020asymptoticlearningcurves,bousquet2020theoryuniversallearning, hutter2021learningcurvetheory, sharma2022scaling, maloney2022solvable, roberts2022principles, bahri2024explaining, michaud2024quantization, paquette2024fourplus3phases, atanasov2024scaling, bordelon2024dynamical, bordelon2024feature, lin2024scaling, brill2024neural}, complemented by comprehensive empirical studies \citep{rosenfeld2020constructive, henighan2020scaling, gordon2021dataparamscalingnmt, tay2021scaleefficiently, ghorbani2021scaling, tay2022transcendingscalinglaws01, zhai2022scaling, alabdulmohsin2022revisitingscalinglaws, dehghani2023scaling, bachmann2023scalingmlpstaleinductive}.
In the context of language models, researchers have explored scaling behaviors in various aspects: context length \citep{xiong2023effectivelongcontextscalingfoundation}, in-context learning \citep{chan2022datadistributionalincontextlearning, agarwal2024manyshot, arora2024bayesianscalinglawsincontext}, vocabulary size \citep{tao2024scalingvocabulary}, and jailbreaking attempts \citep{anil2024manyshot, hughes2024bestofnjailbreaking}. Studies have also investigated scaling dynamics in fine-tuning \citep{kalajdzievski2024scalinglawsforgettingfinetuning, zhang2024scalingmeetsllmfinetuning}, transfer learning \citep{hernandez2021scalinglawstransfer}, and the impact of repeated data \citep{hernandez2022scalingrepeatdata, muennighoff2023scaling}.
Architectural considerations have been extensively studied, including network design \citep{tay2023scalingvsarchitecture, clark2022unifiedscalinglawsroutedlanguagemodels}, nested models \citep{kudugunta2023matformer}, pruning strategies \citep{rosenfeld2021pruningacrossscales}, and precision requirements \citep{dettmers20234bitprecisionscaling, kumar2024scalinglawprecision,sun2025scalinglawsfloatingpoint}. Research has also addressed multimodal extensions \citep{aghajanyan2023scalinggenerativemultimodallm, cherti2023scalingcontrastivelanguageimagelearning} and inference optimization \citep{sardana2023beyondchinchillaoptimal, brown2024largelanguagemonkeysscaling, snell2024scaling, wu2024inferencescalinglawsempirical, chen2024simpleprovablescalinglaw}.
The field has expanded to encompass diverse domains including reinforcement learning (both single-agent \citep{jones2021scaling, hilton2023scalinglawssingleagentreinforcement, neumann2024alphazeroneuralscalingzipfs} and multi-agent \citep{neumann2022scalingmultiagentrl}), graph networks \citep{liu2024neuralscalinglawsgraphs}, diffusion models \citep{mei2024biggerbetterscalingproperties, liang2024scalinglawsdiffusiontransformers}, and associative memory models \citep{romani2013scalingassociativememory, cabannes2024scalinglawsassociativememories, schaeffer2024bridgingassociativememoryprobabilistic}.
Recent work has explored emerging phenomena such as inverse scaling \citep{mckenzie2023inversescalingbiggerisnt}, unique functional forms \citep{caballero2022broken}, scaling patterns across model families \citep{ruan2024observationalscalinglawspredictability, polo2024slothscalinglawsllm}, and downstream capabilities \citep{srivastava2023imitationgamequantifyingextrapolating, wei2022emergentabilitieslargelanguage, hu2024predictingemergentabilitiesinfinite, schaeffer2024elusive, snell2024predictingemergentcapabilitiesfinetuning, wu2024ushapedinverteduscalingemergent}. Researchers have also investigated critical challenges including data contamination \citep{schaeffer2023pretrainingtestsetneed, jiang2024investigatingdatacontaminationpretraining, dominguezolmedo2024trainingtesttaskconfounds}, model-data feedback loops \citep{dohmatob2024taletailsmodelcollapse, gerstgrasser2024modelcollapseinevitablebreaking, kazdan2024collapsethriveperilspromises}, and overtraining effects \citep{gao2023scalinglawsrewardmodeloveroptimization,gadre2024language}. Additional contributions include studies in sparse autoencoders \citep{gao2024scaling}, biologically-plausible backpropagation \citep{filipovich2022scalinglawsbackpropagation}, and self-supervised learning for vision \citep{schaeffer2024improvedunderstandingutilizationmaximum}.
Recent efforts have also focused on reconciling apparent contradictions in scaling behaviors \citep{besiroglu2024chinchillascalingreplicationattempt, porian2024resolvingdiscrepanciescomputeoptimalscaling}.

\section{Discussion and Future Directions}
\label{sec:discussion}

This work advances our mathematical understanding of how and why language model performance improves with additional inference compute through repeat sampling. By establishing rigorous theoretical foundations for these empirically-observed power laws, our work provides practitioners with principled ways to understand and predict model performance when scaling inference compute. The distributional perspective we develop explains previously puzzling deviations from power law scaling and enables more efficient estimation of scaling parameters.

Two related questions are \emph{why} such distributional structure exists in the single-attempt success rates and whether one should expect such structure to appear in future benchmarks. We conjecture there are at least two reasons: (1) benchmark design, in that benchmarks are intentionally crafted that problems have a spread of difficulty without being too easy or too hard, and (2) selection bias, in that more interesting patterns such as power law scaling are more likely to garner more interest from the research community.

Despite focusing on scaling inference compute, our paper contributes is a new hypothesis for an open question in scaling pretraining compute: \textit{why are neural scaling laws power laws?} Just as the scaling behavior of $-\log(\operatorname{pass_{\mathcal{D}}@k})$ only becomes clear for large $k$, so too might the scaling behavior of pretraining cross entropy with pretraining compute $C$.
Specifically, suppose the pretraining cross entropy $\mathcal{L}$ as a function of pretraining compute $C$ is a sum of many functions which decay at different rates:
\begin{equation*}
    \mathcal{L}(C) = \omega \Big(\frac{1}{C^{\alpha}} \Big) + \frac{A}{C^{\alpha}} + o \Big(\frac{1}{C^{\alpha}} \Big),
\end{equation*}
where $\alpha$ is the smallest (positive) polynomial exponent and $\omega(1/C^{\alpha})$ represents functions that decay more slowly than any polynomial. Initially, for small $C$, the dominant term may be unclear, but as pretraining compute is scaled up across $8-10$ orders of magnitude, the leading order term dominates and an approximate power law emerges:
\begin{equation*}
    \mathcal{L}(C) \approx \text{const} + \frac{A}{C^{\alpha}} + 0 \quad \text{ as } \quad C \rightarrow \infty.
\end{equation*}
Thus, a power law relationship may only be reasonable for sufficiently large pretraining compute $C$, which in turn may require excluding the lowest pretraining compute models in order to obtain good predictions, justifying a widespread empirical practice \citep{kaplan2020scalinglawsneurallanguage}. We designate possible functions hiding in $\omega(1/C^{\alpha})$ and $o(1/C^{\alpha})$ as \textit{the dark matter of neural scaling laws}.

\clearpage

\section*{Acknowledgments}

Redacted for blind review.



\section*{Impact Statement}

Our findings have important practical implications for the deployment of large language models, as they can help organizations more accurately forecast compute requirements and make informed trade-offs between model size, inference costs, and performance targets. The mathematical framework we develop could also generalize beyond language models to other domains where similar scaling phenomena emerge. While our work is primarily theoretical, we acknowledge that advances in language model capabilities can have broad societal impacts. We hope that better understanding these fundamental scaling behaviors will help the research community develop more efficient and reliable AI systems.





\clearpage

\bibliography{references_rylan}

\begin{thebibliography}{117}
\providecommand{\natexlab}[1]{#1}
\providecommand{\url}[1]{\texttt{#1}}
\expandafter\ifx\csname urlstyle\endcsname\relax
  \providecommand{\doi}[1]{doi: #1}\else
  \providecommand{\doi}{doi: \begingroup \urlstyle{rm}\Url}\fi

\bibitem[Agarwal et~al.(2024)Agarwal, Singh, Zhang, Bohnet, Rosias, Chan, Zhang, Anand, Abbas, Nova, Co-Reyes, Chu, Behbahani, Faust, and Larochelle]{agarwal2024manyshot}
Agarwal, R., Singh, A., Zhang, L.~M., Bohnet, B., Rosias, L., Chan, S.~C., Zhang, B., Anand, A., Abbas, Z., Nova, A., Co-Reyes, J.~D., Chu, E., Behbahani, F., Faust, A., and Larochelle, H.
\newblock Many-shot in-context learning.
\newblock In \emph{The Thirty-eighth Annual Conference on Neural Information Processing Systems}, 2024.
\newblock URL \url{https://openreview.net/forum?id=AB6XpMzvqH}.

\bibitem[Aghajanyan et~al.(2023)Aghajanyan, Yu, Conneau, Hsu, Hambardzumyan, Zhang, Roller, Goyal, Levy, and Zettlemoyer]{aghajanyan2023scalinggenerativemultimodallm}
Aghajanyan, A., Yu, L., Conneau, A., Hsu, W.-N., Hambardzumyan, K., Zhang, S., Roller, S., Goyal, N., Levy, O., and Zettlemoyer, L.
\newblock Scaling laws for generative mixed-modal language models.
\newblock In \emph{International Conference on Machine Learning}, pp.\  265--279. PMLR, 2023.

\bibitem[Alabdulmohsin et~al.(2022{\natexlab{a}})Alabdulmohsin, Neyshabur, and Zhai]{alabdulmohsin2022revisitingneuralscalinglaws}
Alabdulmohsin, I., Neyshabur, B., and Zhai, X.
\newblock Revisiting neural scaling laws in language and vision, 2022{\natexlab{a}}.
\newblock URL \url{https://arxiv.org/abs/2209.06640}.

\bibitem[Alabdulmohsin et~al.(2022{\natexlab{b}})Alabdulmohsin, Neyshabur, and Zhai]{alabdulmohsin2022revisitingscalinglaws}
Alabdulmohsin, I.~M., Neyshabur, B., and Zhai, X.
\newblock Revisiting neural scaling laws in language and vision.
\newblock \emph{Advances in Neural Information Processing Systems}, 35:\penalty0 22300--22312, 2022{\natexlab{b}}.

\bibitem[Anderljung et~al.(2023)Anderljung, Barnhart, Korinek, Leung, O'Keefe, Whittlestone, Avin, Brundage, Bullock, Cass-Beggs, Chang, Collins, Fist, Hadfield, Hayes, Ho, Hooker, Horvitz, Kolt, Schuett, Shavit, Siddarth, Trager, and Wolf]{anderljung2023frontierairegulationmanaging}
Anderljung, M., Barnhart, J., Korinek, A., Leung, J., O'Keefe, C., Whittlestone, J., Avin, S., Brundage, M., Bullock, J., Cass-Beggs, D., Chang, B., Collins, T., Fist, T., Hadfield, G., Hayes, A., Ho, L., Hooker, S., Horvitz, E., Kolt, N., Schuett, J., Shavit, Y., Siddarth, D., Trager, R., and Wolf, K.
\newblock Frontier ai regulation: Managing emerging risks to public safety, 2023.
\newblock URL \url{https://arxiv.org/abs/2307.03718}.

\bibitem[Anil et~al.(2024)Anil, DURMUS, Rimsky, Sharma, Benton, Kundu, Batson, Tong, Mu, Ford, Mosconi, Agrawal, Schaeffer, Bashkansky, Svenningsen, Lambert, Radhakrishnan, Denison, Hubinger, Bai, Bricken, Maxwell, Schiefer, Sully, Tamkin, Lanham, Nguyen, Korbak, Kaplan, Ganguli, Bowman, Perez, Grosse, and Duvenaud]{anil2024manyshot}
Anil, C., DURMUS, E., Rimsky, N., Sharma, M., Benton, J., Kundu, S., Batson, J., Tong, M., Mu, J., Ford, D.~J., Mosconi, F., Agrawal, R., Schaeffer, R., Bashkansky, N., Svenningsen, S., Lambert, M., Radhakrishnan, A., Denison, C., Hubinger, E.~J., Bai, Y., Bricken, T., Maxwell, T., Schiefer, N., Sully, J., Tamkin, A., Lanham, T., Nguyen, K., Korbak, T., Kaplan, J., Ganguli, D., Bowman, S.~R., Perez, E., Grosse, R.~B., and Duvenaud, D.
\newblock Many-shot jailbreaking.
\newblock In \emph{The Thirty-eighth Annual Conference on Neural Information Processing Systems}, 2024.
\newblock URL \url{https://openreview.net/forum?id=cw5mgd71jW}.

\bibitem[Arora et~al.(2024)Arora, Jurafsky, Potts, and Goodman]{arora2024bayesianscalinglawsincontext}
Arora, A., Jurafsky, D., Potts, C., and Goodman, N.~D.
\newblock Bayesian scaling laws for in-context learning, 2024.
\newblock URL \url{https://arxiv.org/abs/2410.16531}.

\bibitem[Atanasov et~al.(2024)Atanasov, Zavatone-Veth, and Pehlevan]{atanasov2024scaling}
Atanasov, A., Zavatone-Veth, J.~A., and Pehlevan, C.
\newblock Scaling and renormalization in high-dimensional regression.
\newblock \emph{arXiv preprint arXiv:2405.00592}, 2024.

\bibitem[Bachmann et~al.(2023)Bachmann, Anagnostidis, and Hofmann]{bachmann2023scalingmlpstaleinductive}
Bachmann, G., Anagnostidis, S., and Hofmann, T.
\newblock Scaling mlps: A tale of inductive bias, 2023.
\newblock URL \url{https://arxiv.org/abs/2306.13575}.

\bibitem[Bahri et~al.(2024)Bahri, Dyer, Kaplan, Lee, and Sharma]{bahri2024explaining}
Bahri, Y., Dyer, E., Kaplan, J., Lee, J., and Sharma, U.
\newblock Explaining neural scaling laws.
\newblock \emph{Proceedings of the National Academy of Sciences}, 121\penalty0 (27):\penalty0 e2311878121, 2024.

\bibitem[Barkai et~al.(1993)Barkai, Seung, and Sompolinsky]{barkai1993scaling}
Barkai, N., Seung, H.~S., and Sompolinsky, H.
\newblock Scaling laws in learning of classification tasks.
\newblock \emph{Physical review letters}, 70\penalty0 (20):\penalty0 3167, 1993.

\bibitem[Besiroglu et~al.(2024{\natexlab{a}})Besiroglu, Emery-Xu, and Thompson]{besiroglu2024economic}
Besiroglu, T., Emery-Xu, N., and Thompson, N.
\newblock Economic impacts of ai-augmented r\&d.
\newblock \emph{Research Policy}, 53\penalty0 (7):\penalty0 105037, 2024{\natexlab{a}}.

\bibitem[Besiroglu et~al.(2024{\natexlab{b}})Besiroglu, Erdil, Barnett, and You]{besiroglu2024chinchillascalingreplicationattempt}
Besiroglu, T., Erdil, E., Barnett, M., and You, J.
\newblock Chinchilla scaling: A replication attempt, 2024{\natexlab{b}}.
\newblock URL \url{https://arxiv.org/abs/2404.10102}.

\bibitem[Biderman et~al.(2023)Biderman, Schoelkopf, Anthony, Bradley, O’Brien, Hallahan, Khan, Purohit, Prashanth, Raff, et~al.]{biderman2023pythia}
Biderman, S., Schoelkopf, H., Anthony, Q.~G., Bradley, H., O’Brien, K., Hallahan, E., Khan, M.~A., Purohit, S., Prashanth, U.~S., Raff, E., et~al.
\newblock Pythia: A suite for analyzing large language models across training and scaling.
\newblock In \emph{International Conference on Machine Learning}, pp.\  2397--2430. PMLR, 2023.

\bibitem[Bochud \& Challet(2006)Bochud and Challet]{bochud2006optimalapproximationspowerlawsexponentials}
Bochud, T. and Challet, D.
\newblock Optimal approximations of power-laws with exponentials, 2006.
\newblock URL \url{https://arxiv.org/abs/physics/0605149}.

\bibitem[Bommasani et~al.(2021)Bommasani, Hudson, Adeli, Altman, Arora, von Arx, Bernstein, Bohg, Bosselut, Brunskill, et~al.]{bommasani2021opportunities}
Bommasani, R., Hudson, D.~A., Adeli, E., Altman, R., Arora, S., von Arx, S., Bernstein, M.~S., Bohg, J., Bosselut, A., Brunskill, E., et~al.
\newblock On the opportunities and risks of foundation models.
\newblock \emph{arXiv preprint arXiv:2108.07258}, 2021.

\bibitem[Bordelon et~al.(2024{\natexlab{a}})Bordelon, Atanasov, and Pehlevan]{bordelon2024dynamical}
Bordelon, B., Atanasov, A., and Pehlevan, C.
\newblock A dynamical model of neural scaling laws.
\newblock \emph{arXiv preprint arXiv:2402.01092}, 2024{\natexlab{a}}.

\bibitem[Bordelon et~al.(2024{\natexlab{b}})Bordelon, Atanasov, and Pehlevan]{bordelon2024feature}
Bordelon, B., Atanasov, A., and Pehlevan, C.
\newblock How feature learning can improve neural scaling laws.
\newblock \emph{arXiv preprint arXiv:2409.17858}, 2024{\natexlab{b}}.

\bibitem[Bousquet et~al.(2020)Bousquet, Hanneke, Moran, van Handel, and Yehudayoff]{bousquet2020theoryuniversallearning}
Bousquet, O., Hanneke, S., Moran, S., van Handel, R., and Yehudayoff, A.
\newblock A theory of universal learning, 2020.
\newblock URL \url{https://arxiv.org/abs/2011.04483}.

\bibitem[Brill(2024)]{brill2024neural}
Brill, A.
\newblock Neural scaling laws rooted in the data distribution.
\newblock \emph{arXiv preprint arXiv:2412.07942}, 2024.

\bibitem[Brown et~al.(2024)Brown, Juravsky, Ehrlich, Clark, Le, Ré, and Mirhoseini]{brown2024largelanguagemonkeysscaling}
Brown, B., Juravsky, J., Ehrlich, R., Clark, R., Le, Q.~V., Ré, C., and Mirhoseini, A.
\newblock Large language monkeys: Scaling inference compute with repeated sampling, 2024.
\newblock URL \url{https://arxiv.org/abs/2407.21787}.

\bibitem[Brown et~al.(2020{\natexlab{a}})Brown, Mann, Ryder, Subbiah, Kaplan, Dhariwal, Neelakantan, Shyam, Sastry, Askell, et~al.]{brown2020language}
Brown, T., Mann, B., Ryder, N., Subbiah, M., Kaplan, J.~D., Dhariwal, P., Neelakantan, A., Shyam, P., Sastry, G., Askell, A., et~al.
\newblock Language models are few-shot learners.
\newblock \emph{Advances in neural information processing systems}, 33:\penalty0 1877--1901, 2020{\natexlab{a}}.

\bibitem[Brown et~al.(2020{\natexlab{b}})Brown, Mann, Ryder, Subbiah, Kaplan, Dhariwal, Neelakantan, Shyam, Sastry, Askell, Agarwal, Herbert-Voss, Krueger, Henighan, Child, Ramesh, Ziegler, Wu, Winter, Hesse, Chen, Sigler, Litwin, Gray, Chess, Clark, Berner, McCandlish, Radford, Sutskever, and Amodei]{brown2020languagemodelsfewshotlearners}
Brown, T.~B., Mann, B., Ryder, N., Subbiah, M., Kaplan, J., Dhariwal, P., Neelakantan, A., Shyam, P., Sastry, G., Askell, A., Agarwal, S., Herbert-Voss, A., Krueger, G., Henighan, T., Child, R., Ramesh, A., Ziegler, D.~M., Wu, J., Winter, C., Hesse, C., Chen, M., Sigler, E., Litwin, M., Gray, S., Chess, B., Clark, J., Berner, C., McCandlish, S., Radford, A., Sutskever, I., and Amodei, D.
\newblock Language models are few-shot learners, 2020{\natexlab{b}}.
\newblock URL \url{https://arxiv.org/abs/2005.14165}.

\bibitem[Caballero et~al.(2022)Caballero, Gupta, Rish, and Krueger]{caballero2022broken}
Caballero, E., Gupta, K., Rish, I., and Krueger, D.
\newblock Broken neural scaling laws.
\newblock \emph{arXiv preprint arXiv:2210.14891}, 2022.

\bibitem[Cabannes et~al.(2024)Cabannes, Dohmatob, and Bietti]{cabannes2024scalinglawsassociativememories}
Cabannes, V., Dohmatob, E., and Bietti, A.
\newblock Scaling laws for associative memories, 2024.
\newblock URL \url{https://arxiv.org/abs/2310.02984}.

\bibitem[Chan et~al.(2022)Chan, Santoro, Lampinen, Wang, Singh, Richemond, McClelland, and Hill]{chan2022datadistributionalincontextlearning}
Chan, S., Santoro, A., Lampinen, A., Wang, J., Singh, A., Richemond, P., McClelland, J., and Hill, F.
\newblock Data distributional properties drive emergent in-context learning in transformers.
\newblock In Koyejo, S., Mohamed, S., Agarwal, A., Belgrave, D., Cho, K., and Oh, A. (eds.), \emph{Advances in Neural Information Processing Systems}, volume~35, pp.\  18878--18891. Curran Associates, Inc., 2022.
\newblock URL \url{https://proceedings.neurips.cc/paper_files/paper/2022/file/77c6ccacfd9962e2307fc64680fc5ace-Paper-Conference.pdf}.

\bibitem[Chen et~al.(2021)Chen, Tworek, Jun, Yuan, de~Oliveira~Pinto, Kaplan, Edwards, Burda, Joseph, Brockman, Ray, Puri, Krueger, Petrov, Khlaaf, Sastry, Mishkin, Chan, Gray, Ryder, Pavlov, Power, Kaiser, Bavarian, Winter, Tillet, Such, Cummings, Plappert, Chantzis, Barnes, Herbert-Voss, Guss, Nichol, Paino, Tezak, Tang, Babuschkin, Balaji, Jain, Saunders, Hesse, Carr, Leike, Achiam, Misra, Morikawa, Radford, Knight, Brundage, Murati, Mayer, Welinder, McGrew, Amodei, McCandlish, Sutskever, and Zaremba]{chen2021evaluatinglargelanguagemodels}
Chen, M., Tworek, J., Jun, H., Yuan, Q., de~Oliveira~Pinto, H.~P., Kaplan, J., Edwards, H., Burda, Y., Joseph, N., Brockman, G., Ray, A., Puri, R., Krueger, G., Petrov, M., Khlaaf, H., Sastry, G., Mishkin, P., Chan, B., Gray, S., Ryder, N., Pavlov, M., Power, A., Kaiser, L., Bavarian, M., Winter, C., Tillet, P., Such, F.~P., Cummings, D., Plappert, M., Chantzis, F., Barnes, E., Herbert-Voss, A., Guss, W.~H., Nichol, A., Paino, A., Tezak, N., Tang, J., Babuschkin, I., Balaji, S., Jain, S., Saunders, W., Hesse, C., Carr, A.~N., Leike, J., Achiam, J., Misra, V., Morikawa, E., Radford, A., Knight, M., Brundage, M., Murati, M., Mayer, K., Welinder, P., McGrew, B., Amodei, D., McCandlish, S., Sutskever, I., and Zaremba, W.
\newblock Evaluating large language models trained on code, 2021.
\newblock URL \url{https://arxiv.org/abs/2107.03374}.

\bibitem[Chen et~al.(2024)Chen, Pan, Li, Ding, and Zhou]{chen2024simpleprovablescalinglaw}
Chen, Y., Pan, X., Li, Y., Ding, B., and Zhou, J.
\newblock A simple and provable scaling law for the test-time compute of large language models, 2024.
\newblock URL \url{https://arxiv.org/abs/2411.19477}.

\bibitem[Cherti et~al.(2023)Cherti, Beaumont, Wightman, Wortsman, Ilharco, Gordon, Schuhmann, Schmidt, and Jitsev]{cherti2023scalingcontrastivelanguageimagelearning}
Cherti, M., Beaumont, R., Wightman, R., Wortsman, M., Ilharco, G., Gordon, C., Schuhmann, C., Schmidt, L., and Jitsev, J.
\newblock Reproducible scaling laws for contrastive language-image learning.
\newblock In \emph{Proceedings of the IEEE/CVF Conference on Computer Vision and Pattern Recognition}, pp.\  2818--2829, 2023.

\bibitem[Clark et~al.(2022)Clark, de~Las~Casas, Guy, Mensch, Paganini, Hoffmann, Damoc, Hechtman, Cai, Borgeaud, et~al.]{clark2022unifiedscalinglawsroutedlanguagemodels}
Clark, A., de~Las~Casas, D., Guy, A., Mensch, A., Paganini, M., Hoffmann, J., Damoc, B., Hechtman, B., Cai, T., Borgeaud, S., et~al.
\newblock Unified scaling laws for routed language models.
\newblock In \emph{International conference on machine learning}, pp.\  4057--4086. PMLR, 2022.

\bibitem[Dehghani et~al.(2023)Dehghani, Djolonga, Mustafa, Padlewski, Heek, Gilmer, Steiner, Caron, Geirhos, Alabdulmohsin, et~al.]{dehghani2023scaling}
Dehghani, M., Djolonga, J., Mustafa, B., Padlewski, P., Heek, J., Gilmer, J., Steiner, A.~P., Caron, M., Geirhos, R., Alabdulmohsin, I., et~al.
\newblock Scaling vision transformers to 22 billion parameters.
\newblock In \emph{International Conference on Machine Learning}, pp.\  7480--7512. PMLR, 2023.

\bibitem[Dettmers \& Zettlemoyer(2023)Dettmers and Zettlemoyer]{dettmers20234bitprecisionscaling}
Dettmers, T. and Zettlemoyer, L.
\newblock The case for 4-bit precision: k-bit inference scaling laws.
\newblock In \emph{International Conference on Machine Learning}, pp.\  7750--7774. PMLR, 2023.

\bibitem[Dohmatob et~al.(2024)Dohmatob, Feng, Yang, Charton, and Kempe]{dohmatob2024taletailsmodelcollapse}
Dohmatob, E., Feng, Y., Yang, P., Charton, F., and Kempe, J.
\newblock A tale of tails: Model collapse as a change of scaling laws, 2024.
\newblock URL \url{https://arxiv.org/abs/2402.07043}.

\bibitem[Dominguez-Olmedo et~al.(2024)Dominguez-Olmedo, Dorner, and Hardt]{dominguezolmedo2024trainingtesttaskconfounds}
Dominguez-Olmedo, R., Dorner, F.~E., and Hardt, M.
\newblock Training on the test task confounds evaluation and emergence, 2024.
\newblock URL \url{https://arxiv.org/abs/2407.07890}.

\bibitem[Elkies(2016)]{elkies2016powerlawexponentials}
Elkies, N.~D.
\newblock Is there a way to express an power law decay as a series of exponentials?
\newblock MathOverflow, 2016.
\newblock URL \url{https://mathoverflow.net/q/251661}.
\newblock URL:https://mathoverflow.net/q/251661 (version: 2016-10-08).

\bibitem[Eloundou et~al.(2023)Eloundou, Manning, Mishkin, and Rock]{eloundou2023gptsgptsearlylook}
Eloundou, T., Manning, S., Mishkin, P., and Rock, D.
\newblock Gpts are gpts: An early look at the labor market impact potential of large language models, 2023.
\newblock URL \url{https://arxiv.org/abs/2303.10130}.

\bibitem[Filipovich et~al.(2022)Filipovich, Cappelli, Hesslow, and Launay]{filipovich2022scalinglawsbackpropagation}
Filipovich, M.~J., Cappelli, A., Hesslow, D., and Launay, J.
\newblock Scaling laws beyond backpropagation, 2022.
\newblock URL \url{https://arxiv.org/abs/2210.14593}.

\bibitem[Gadre et~al.(2024)Gadre, Smyrnis, Shankar, Gururangan, Wortsman, Shao, Mercat, Fang, Li, Keh, et~al.]{gadre2024language}
Gadre, S.~Y., Smyrnis, G., Shankar, V., Gururangan, S., Wortsman, M., Shao, R., Mercat, J., Fang, A., Li, J., Keh, S., et~al.
\newblock Language models scale reliably with over-training and on downstream tasks.
\newblock \emph{arXiv preprint arXiv:2403.08540}, 2024.

\bibitem[Ganguli et~al.(2022)Ganguli, Hernandez, Lovitt, Askell, Bai, Chen, Conerly, Dassarma, Drain, Elhage, et~al.]{ganguli2022predictability}
Ganguli, D., Hernandez, D., Lovitt, L., Askell, A., Bai, Y., Chen, A., Conerly, T., Dassarma, N., Drain, D., Elhage, N., et~al.
\newblock Predictability and surprise in large generative models.
\newblock In \emph{2022 ACM Conference on Fairness, Accountability, and Transparency}, pp.\  1747--1764, 2022.

\bibitem[Gao et~al.(2023)Gao, Schulman, and Hilton]{gao2023scalinglawsrewardmodeloveroptimization}
Gao, L., Schulman, J., and Hilton, J.
\newblock Scaling laws for reward model overoptimization.
\newblock In Krause, A., Brunskill, E., Cho, K., Engelhardt, B., Sabato, S., and Scarlett, J. (eds.), \emph{Proceedings of the 40th International Conference on Machine Learning}, volume 202 of \emph{Proceedings of Machine Learning Research}, pp.\  10835--10866. PMLR, 23--29 Jul 2023.
\newblock URL \url{https://proceedings.mlr.press/v202/gao23h.html}.

\bibitem[Gao et~al.(2024)Gao, la~Tour, Tillman, Goh, Troll, Radford, Sutskever, Leike, and Wu]{gao2024scaling}
Gao, L., la~Tour, T.~D., Tillman, H., Goh, G., Troll, R., Radford, A., Sutskever, I., Leike, J., and Wu, J.
\newblock Scaling and evaluating sparse autoencoders.
\newblock \emph{arXiv preprint arXiv:2406.04093}, 2024.

\bibitem[Gerstgrasser et~al.(2024)Gerstgrasser, Schaeffer, Dey, Rafailov, Sleight, Hughes, Korbak, Agrawal, Pai, Gromov, Roberts, Yang, Donoho, and Koyejo]{gerstgrasser2024modelcollapseinevitablebreaking}
Gerstgrasser, M., Schaeffer, R., Dey, A., Rafailov, R., Sleight, H., Hughes, J., Korbak, T., Agrawal, R., Pai, D., Gromov, A., Roberts, D.~A., Yang, D., Donoho, D.~L., and Koyejo, S.
\newblock Is model collapse inevitable? breaking the curse of recursion by accumulating real and synthetic data, 2024.
\newblock URL \url{https://arxiv.org/abs/2404.01413}.

\bibitem[Ghorbani et~al.(2021)Ghorbani, Firat, Freitag, Bapna, Krikun, Garcia, Chelba, and Cherry]{ghorbani2021scaling}
Ghorbani, B., Firat, O., Freitag, M., Bapna, A., Krikun, M., Garcia, X., Chelba, C., and Cherry, C.
\newblock Scaling laws for neural machine translation.
\newblock In \emph{International Conference on Learning Representations}, 2021.

\bibitem[Gordon et~al.(2021)Gordon, Duh, and Kaplan]{gordon2021dataparamscalingnmt}
Gordon, M.~A., Duh, K., and Kaplan, J.
\newblock Data and parameter scaling laws for neural machine translation.
\newblock In Moens, M.-F., Huang, X., Specia, L., and Yih, S. W.-t. (eds.), \emph{Proceedings of the 2021 Conference on Empirical Methods in Natural Language Processing}, pp.\  5915--5922, Online and Punta Cana, Dominican Republic, November 2021. Association for Computational Linguistics.
\newblock \doi{10.18653/v1/2021.emnlp-main.478}.
\newblock URL \url{https://aclanthology.org/2021.emnlp-main.478}.

\bibitem[Grattafiori et~al.(2024)Grattafiori, Dubey, Jauhri, Pandey, Kadian, Al-Dahle, Letman, Mathur, Schelten, Vaughan, Yang, Fan, Goyal, Hartshorn, Yang, Mitra, Sravankumar, Korenev, Hinsvark, Rao, Zhang, Rodriguez, Gregerson, Spataru, Roziere, Biron, Tang, Chern, Caucheteux, Nayak, Bi, Marra, McConnell, Keller, Touret, Wu, Wong, Ferrer, Nikolaidis, Allonsius, Song, Pintz, Livshits, Wyatt, Esiobu, Choudhary, Mahajan, Garcia-Olano, Perino, Hupkes, Lakomkin, AlBadawy, Lobanova, Dinan, Smith, Radenovic, Guzmán, Zhang, Synnaeve, Lee, Anderson, Thattai, Nail, Mialon, Pang, Cucurell, Nguyen, Korevaar, Xu, Touvron, Zarov, Ibarra, Kloumann, Misra, Evtimov, Zhang, Copet, Lee, Geffert, Vranes, Park, Mahadeokar, Shah, van~der Linde, Billock, Hong, Lee, Fu, Chi, Huang, Liu, Wang, Yu, Bitton, Spisak, Park, Rocca, Johnstun, Saxe, Jia, Alwala, Prasad, Upasani, Plawiak, Li, Heafield, Stone, El-Arini, Iyer, Malik, Chiu, Bhalla, Lakhotia, Rantala-Yeary, van~der Maaten, Chen, Tan, Jenkins, Martin, Madaan, Malo, Blecher,
  Landzaat, de~Oliveira, Muzzi, Pasupuleti, Singh, Paluri, Kardas, Tsimpoukelli, Oldham, Rita, Pavlova, Kambadur, Lewis, Si, Singh, Hassan, Goyal, Torabi, Bashlykov, Bogoychev, Chatterji, Zhang, Duchenne, Çelebi, Alrassy, Zhang, Li, Vasic, Weng, Bhargava, Dubal, Krishnan, Koura, Xu, He, Dong, Srinivasan, Ganapathy, Calderer, Cabral, Stojnic, Raileanu, Maheswari, Girdhar, Patel, Sauvestre, Polidoro, Sumbaly, Taylor, Silva, Hou, Wang, Hosseini, Chennabasappa, Singh, Bell, Kim, Edunov, Nie, Narang, Raparthy, Shen, Wan, Bhosale, Zhang, Vandenhende, Batra, Whitman, Sootla, Collot, Gururangan, Borodinsky, Herman, Fowler, Sheasha, Georgiou, Scialom, Speckbacher, Mihaylov, Xiao, Karn, Goswami, Gupta, Ramanathan, Kerkez, Gonguet, Do, Vogeti, Albiero, Petrovic, Chu, Xiong, Fu, Meers, Martinet, Wang, Wang, Tan, Xia, Xie, Jia, Wang, Goldschlag, Gaur, Babaei, Wen, Song, Zhang, Li, Mao, Coudert, Yan, Chen, Papakipos, Singh, Srivastava, Jain, Kelsey, Shajnfeld, Gangidi, Victoria, Goldstand, Menon, Sharma, Boesenberg,
  Baevski, Feinstein, Kallet, Sangani, Teo, Yunus, Lupu, Alvarado, Caples, Gu, Ho, Poulton, Ryan, Ramchandani, Dong, Franco, Goyal, Saraf, Chowdhury, Gabriel, Bharambe, Eisenman, Yazdan, James, Maurer, Leonhardi, Huang, Loyd, Paola, Paranjape, Liu, Wu, Ni, Hancock, Wasti, Spence, Stojkovic, Gamido, Montalvo, Parker, Burton, Mejia, Liu, Wang, Kim, Zhou, Hu, Chu, Cai, Tindal, Feichtenhofer, Gao, Civin, Beaty, Kreymer, Li, Adkins, Xu, Testuggine, David, Parikh, Liskovich, Foss, Wang, Le, Holland, Dowling, Jamil, Montgomery, Presani, Hahn, Wood, Le, Brinkman, Arcaute, Dunbar, Smothers, Sun, Kreuk, Tian, Kokkinos, Ozgenel, Caggioni, Kanayet, Seide, Florez, Schwarz, Badeer, Swee, Halpern, Herman, Sizov, Guangyi, Zhang, Lakshminarayanan, Inan, Shojanazeri, Zou, Wang, Zha, Habeeb, Rudolph, Suk, Aspegren, Goldman, Zhan, Damlaj, Molybog, Tufanov, Leontiadis, Veliche, Gat, Weissman, Geboski, Kohli, Lam, Asher, Gaya, Marcus, Tang, Chan, Zhen, Reizenstein, Teboul, Zhong, Jin, Yang, Cummings, Carvill, Shepard, McPhie,
  Torres, Ginsburg, Wang, Wu, U, Saxena, Khandelwal, Zand, Matosich, Veeraraghavan, Michelena, Li, Jagadeesh, Huang, Chawla, Huang, Chen, Garg, A, Silva, Bell, Zhang, Guo, Yu, Moshkovich, Wehrstedt, Khabsa, Avalani, Bhatt, Mankus, Hasson, Lennie, Reso, Groshev, Naumov, Lathi, Keneally, Liu, Seltzer, Valko, Restrepo, Patel, Vyatskov, Samvelyan, Clark, Macey, Wang, Hermoso, Metanat, Rastegari, Bansal, Santhanam, Parks, White, Bawa, Singhal, Egebo, Usunier, Mehta, Laptev, Dong, Cheng, Chernoguz, Hart, Salpekar, Kalinli, Kent, Parekh, Saab, Balaji, Rittner, Bontrager, Roux, Dollar, Zvyagina, Ratanchandani, Yuvraj, Liang, Alao, Rodriguez, Ayub, Murthy, Nayani, Mitra, Parthasarathy, Li, Hogan, Battey, Wang, Howes, Rinott, Mehta, Siby, Bondu, Datta, Chugh, Hunt, Dhillon, Sidorov, Pan, Mahajan, Verma, Yamamoto, Ramaswamy, Lindsay, Lindsay, Feng, Lin, Zha, Patil, Shankar, Zhang, Zhang, Wang, Agarwal, Sajuyigbe, Chintala, Max, Chen, Kehoe, Satterfield, Govindaprasad, Gupta, Deng, Cho, Virk, Subramanian, Choudhury,
  Goldman, Remez, Glaser, Best, Koehler, Robinson, Li, Zhang, Matthews, Chou, Shaked, Vontimitta, Ajayi, Montanez, Mohan, Kumar, Mangla, Ionescu, Poenaru, Mihailescu, Ivanov, Li, Wang, Jiang, Bouaziz, Constable, Tang, Wu, Wang, Wu, Gao, Kleinman, Chen, Hu, Jia, Qi, Li, Zhang, Zhang, Adi, Nam, Yu, Wang, Zhao, Hao, Qian, Li, He, Rait, DeVito, Rosnbrick, Wen, Yang, Zhao, and Ma]{grattafiori2024llama3herdmodels}
Grattafiori, A., Dubey, A., Jauhri, A., Pandey, A., Kadian, A., Al-Dahle, A., Letman, A., Mathur, A., Schelten, A., Vaughan, A., Yang, A., Fan, A., Goyal, A., Hartshorn, A., Yang, A., Mitra, A., Sravankumar, A., Korenev, A., Hinsvark, A., Rao, A., Zhang, A., Rodriguez, A., Gregerson, A., Spataru, A., Roziere, B., Biron, B., Tang, B., Chern, B., Caucheteux, C., Nayak, C., Bi, C., Marra, C., McConnell, C., Keller, C., Touret, C., Wu, C., Wong, C., Ferrer, C.~C., Nikolaidis, C., Allonsius, D., Song, D., Pintz, D., Livshits, D., Wyatt, D., Esiobu, D., Choudhary, D., Mahajan, D., Garcia-Olano, D., Perino, D., Hupkes, D., Lakomkin, E., AlBadawy, E., Lobanova, E., Dinan, E., Smith, E.~M., Radenovic, F., Guzmán, F., Zhang, F., Synnaeve, G., Lee, G., Anderson, G.~L., Thattai, G., Nail, G., Mialon, G., Pang, G., Cucurell, G., Nguyen, H., Korevaar, H., Xu, H., Touvron, H., Zarov, I., Ibarra, I.~A., Kloumann, I., Misra, I., Evtimov, I., Zhang, J., Copet, J., Lee, J., Geffert, J., Vranes, J., Park, J., Mahadeokar, J.,
  Shah, J., van~der Linde, J., Billock, J., Hong, J., Lee, J., Fu, J., Chi, J., Huang, J., Liu, J., Wang, J., Yu, J., Bitton, J., Spisak, J., Park, J., Rocca, J., Johnstun, J., Saxe, J., Jia, J., Alwala, K.~V., Prasad, K., Upasani, K., Plawiak, K., Li, K., Heafield, K., Stone, K., El-Arini, K., Iyer, K., Malik, K., Chiu, K., Bhalla, K., Lakhotia, K., Rantala-Yeary, L., van~der Maaten, L., Chen, L., Tan, L., Jenkins, L., Martin, L., Madaan, L., Malo, L., Blecher, L., Landzaat, L., de~Oliveira, L., Muzzi, M., Pasupuleti, M., Singh, M., Paluri, M., Kardas, M., Tsimpoukelli, M., Oldham, M., Rita, M., Pavlova, M., Kambadur, M., Lewis, M., Si, M., Singh, M.~K., Hassan, M., Goyal, N., Torabi, N., Bashlykov, N., Bogoychev, N., Chatterji, N., Zhang, N., Duchenne, O., Çelebi, O., Alrassy, P., Zhang, P., Li, P., Vasic, P., Weng, P., Bhargava, P., Dubal, P., Krishnan, P., Koura, P.~S., Xu, P., He, Q., Dong, Q., Srinivasan, R., Ganapathy, R., Calderer, R., Cabral, R.~S., Stojnic, R., Raileanu, R., Maheswari, R., Girdhar,
  R., Patel, R., Sauvestre, R., Polidoro, R., Sumbaly, R., Taylor, R., Silva, R., Hou, R., Wang, R., Hosseini, S., Chennabasappa, S., Singh, S., Bell, S., Kim, S.~S., Edunov, S., Nie, S., Narang, S., Raparthy, S., Shen, S., Wan, S., Bhosale, S., Zhang, S., Vandenhende, S., Batra, S., Whitman, S., Sootla, S., Collot, S., Gururangan, S., Borodinsky, S., Herman, T., Fowler, T., Sheasha, T., Georgiou, T., Scialom, T., Speckbacher, T., Mihaylov, T., Xiao, T., Karn, U., Goswami, V., Gupta, V., Ramanathan, V., Kerkez, V., Gonguet, V., Do, V., Vogeti, V., Albiero, V., Petrovic, V., Chu, W., Xiong, W., Fu, W., Meers, W., Martinet, X., Wang, X., Wang, X., Tan, X.~E., Xia, X., Xie, X., Jia, X., Wang, X., Goldschlag, Y., Gaur, Y., Babaei, Y., Wen, Y., Song, Y., Zhang, Y., Li, Y., Mao, Y., Coudert, Z.~D., Yan, Z., Chen, Z., Papakipos, Z., Singh, A., Srivastava, A., Jain, A., Kelsey, A., Shajnfeld, A., Gangidi, A., Victoria, A., Goldstand, A., Menon, A., Sharma, A., Boesenberg, A., Baevski, A., Feinstein, A., Kallet, A.,
  Sangani, A., Teo, A., Yunus, A., Lupu, A., Alvarado, A., Caples, A., Gu, A., Ho, A., Poulton, A., Ryan, A., Ramchandani, A., Dong, A., Franco, A., Goyal, A., Saraf, A., Chowdhury, A., Gabriel, A., Bharambe, A., Eisenman, A., Yazdan, A., James, B., Maurer, B., Leonhardi, B., Huang, B., Loyd, B., Paola, B.~D., Paranjape, B., Liu, B., Wu, B., Ni, B., Hancock, B., Wasti, B., Spence, B., Stojkovic, B., Gamido, B., Montalvo, B., Parker, C., Burton, C., Mejia, C., Liu, C., Wang, C., Kim, C., Zhou, C., Hu, C., Chu, C.-H., Cai, C., Tindal, C., Feichtenhofer, C., Gao, C., Civin, D., Beaty, D., Kreymer, D., Li, D., Adkins, D., Xu, D., Testuggine, D., David, D., Parikh, D., Liskovich, D., Foss, D., Wang, D., Le, D., Holland, D., Dowling, E., Jamil, E., Montgomery, E., Presani, E., Hahn, E., Wood, E., Le, E.-T., Brinkman, E., Arcaute, E., Dunbar, E., Smothers, E., Sun, F., Kreuk, F., Tian, F., Kokkinos, F., Ozgenel, F., Caggioni, F., Kanayet, F., Seide, F., Florez, G.~M., Schwarz, G., Badeer, G., Swee, G., Halpern, G.,
  Herman, G., Sizov, G., Guangyi, Zhang, Lakshminarayanan, G., Inan, H., Shojanazeri, H., Zou, H., Wang, H., Zha, H., Habeeb, H., Rudolph, H., Suk, H., Aspegren, H., Goldman, H., Zhan, H., Damlaj, I., Molybog, I., Tufanov, I., Leontiadis, I., Veliche, I.-E., Gat, I., Weissman, J., Geboski, J., Kohli, J., Lam, J., Asher, J., Gaya, J.-B., Marcus, J., Tang, J., Chan, J., Zhen, J., Reizenstein, J., Teboul, J., Zhong, J., Jin, J., Yang, J., Cummings, J., Carvill, J., Shepard, J., McPhie, J., Torres, J., Ginsburg, J., Wang, J., Wu, K., U, K.~H., Saxena, K., Khandelwal, K., Zand, K., Matosich, K., Veeraraghavan, K., Michelena, K., Li, K., Jagadeesh, K., Huang, K., Chawla, K., Huang, K., Chen, L., Garg, L., A, L., Silva, L., Bell, L., Zhang, L., Guo, L., Yu, L., Moshkovich, L., Wehrstedt, L., Khabsa, M., Avalani, M., Bhatt, M., Mankus, M., Hasson, M., Lennie, M., Reso, M., Groshev, M., Naumov, M., Lathi, M., Keneally, M., Liu, M., Seltzer, M.~L., Valko, M., Restrepo, M., Patel, M., Vyatskov, M., Samvelyan, M., Clark,
  M., Macey, M., Wang, M., Hermoso, M.~J., Metanat, M., Rastegari, M., Bansal, M., Santhanam, N., Parks, N., White, N., Bawa, N., Singhal, N., Egebo, N., Usunier, N., Mehta, N., Laptev, N.~P., Dong, N., Cheng, N., Chernoguz, O., Hart, O., Salpekar, O., Kalinli, O., Kent, P., Parekh, P., Saab, P., Balaji, P., Rittner, P., Bontrager, P., Roux, P., Dollar, P., Zvyagina, P., Ratanchandani, P., Yuvraj, P., Liang, Q., Alao, R., Rodriguez, R., Ayub, R., Murthy, R., Nayani, R., Mitra, R., Parthasarathy, R., Li, R., Hogan, R., Battey, R., Wang, R., Howes, R., Rinott, R., Mehta, S., Siby, S., Bondu, S.~J., Datta, S., Chugh, S., Hunt, S., Dhillon, S., Sidorov, S., Pan, S., Mahajan, S., Verma, S., Yamamoto, S., Ramaswamy, S., Lindsay, S., Lindsay, S., Feng, S., Lin, S., Zha, S.~C., Patil, S., Shankar, S., Zhang, S., Zhang, S., Wang, S., Agarwal, S., Sajuyigbe, S., Chintala, S., Max, S., Chen, S., Kehoe, S., Satterfield, S., Govindaprasad, S., Gupta, S., Deng, S., Cho, S., Virk, S., Subramanian, S., Choudhury, S.,
  Goldman, S., Remez, T., Glaser, T., Best, T., Koehler, T., Robinson, T., Li, T., Zhang, T., Matthews, T., Chou, T., Shaked, T., Vontimitta, V., Ajayi, V., Montanez, V., Mohan, V., Kumar, V.~S., Mangla, V., Ionescu, V., Poenaru, V., Mihailescu, V.~T., Ivanov, V., Li, W., Wang, W., Jiang, W., Bouaziz, W., Constable, W., Tang, X., Wu, X., Wang, X., Wu, X., Gao, X., Kleinman, Y., Chen, Y., Hu, Y., Jia, Y., Qi, Y., Li, Y., Zhang, Y., Zhang, Y., Adi, Y., Nam, Y., Yu, Wang, Zhao, Y., Hao, Y., Qian, Y., Li, Y., He, Y., Rait, Z., DeVito, Z., Rosnbrick, Z., Wen, Z., Yang, Z., Zhao, Z., and Ma, Z.
\newblock The llama 3 herd of models, 2024.
\newblock URL \url{https://arxiv.org/abs/2407.21783}.

\bibitem[Hendrycks et~al.(2021)Hendrycks, Burns, Kadavath, Arora, Basart, Tang, Song, and Steinhardt]{hendrycks2021measuring}
Hendrycks, D., Burns, C., Kadavath, S., Arora, A., Basart, S., Tang, E., Song, D., and Steinhardt, J.
\newblock Measuring mathematical problem solving with the math dataset.
\newblock In \emph{Thirty-fifth Conference on Neural Information Processing Systems Datasets and Benchmarks Track (Round 2)}, 2021.

\bibitem[Henighan et~al.(2020)Henighan, Kaplan, Katz, Chen, Hesse, Jackson, Jun, Brown, Dhariwal, Gray, et~al.]{henighan2020scaling}
Henighan, T., Kaplan, J., Katz, M., Chen, M., Hesse, C., Jackson, J., Jun, H., Brown, T.~B., Dhariwal, P., Gray, S., et~al.
\newblock Scaling laws for autoregressive generative modeling.
\newblock \emph{arXiv preprint arXiv:2010.14701}, 2020.

\bibitem[Hernandez et~al.(2021)Hernandez, Kaplan, Henighan, and McCandlish]{hernandez2021scalinglawstransfer}
Hernandez, D., Kaplan, J., Henighan, T., and McCandlish, S.
\newblock Scaling laws for transfer, 2021.
\newblock URL \url{https://arxiv.org/abs/2102.01293}.

\bibitem[Hernandez et~al.(2022)Hernandez, Brown, Conerly, DasSarma, Drain, El-Showk, Elhage, Hatfield-Dodds, Henighan, Hume, et~al.]{hernandez2022scalingrepeatdata}
Hernandez, D., Brown, T., Conerly, T., DasSarma, N., Drain, D., El-Showk, S., Elhage, N., Hatfield-Dodds, Z., Henighan, T., Hume, T., et~al.
\newblock Scaling laws and interpretability of learning from repeated data.
\newblock \emph{arXiv preprint arXiv:2205.10487}, 2022.

\bibitem[Hestness et~al.(2017)Hestness, Narang, Ardalani, Diamos, Jun, Kianinejad, Patwary, Ali, Yang, and Zhou]{hestness2017deep}
Hestness, J., Narang, S., Ardalani, N., Diamos, G., Jun, H., Kianinejad, H., Patwary, M., Ali, M., Yang, Y., and Zhou, Y.
\newblock Deep learning scaling is predictable, empirically.
\newblock \emph{arXiv preprint arXiv:1712.00409}, 2017.

\bibitem[Hilton et~al.(2023)Hilton, Tang, and Schulman]{hilton2023scalinglawssingleagentreinforcement}
Hilton, J., Tang, J., and Schulman, J.
\newblock Scaling laws for single-agent reinforcement learning, 2023.
\newblock URL \url{https://arxiv.org/abs/2301.13442}.

\bibitem[Hoffmann et~al.(2022)Hoffmann, Borgeaud, Mensch, Buchatskaya, Cai, Rutherford, de~Las~Casas, Hendricks, Welbl, Clark, Hennigan, Noland, Millican, van~den Driessche, Damoc, Guy, Osindero, Simonyan, Elsen, Rae, Vinyals, and Sifre]{hoffmann2022trainingcomputeoptimallargelanguage}
Hoffmann, J., Borgeaud, S., Mensch, A., Buchatskaya, E., Cai, T., Rutherford, E., de~Las~Casas, D., Hendricks, L.~A., Welbl, J., Clark, A., Hennigan, T., Noland, E., Millican, K., van~den Driessche, G., Damoc, B., Guy, A., Osindero, S., Simonyan, K., Elsen, E., Rae, J.~W., Vinyals, O., and Sifre, L.
\newblock Training compute-optimal large language models, 2022.
\newblock URL \url{https://arxiv.org/abs/2203.15556}.

\bibitem[Hu et~al.(2024)Hu, Liu, Han, Zhang, He, Zhao, Lin, Ding, Ou, Zeng, Liu, and Sun]{hu2024predictingemergentabilitiesinfinite}
Hu, S., Liu, X., Han, X., Zhang, X., He, C., Zhao, W., Lin, Y., Ding, N., Ou, Z., Zeng, G., Liu, Z., and Sun, M.
\newblock Predicting emergent abilities with infinite resolution evaluation, 2024.
\newblock URL \url{https://arxiv.org/abs/2310.03262}.

\bibitem[Hughes et~al.(2024)Hughes, Price, Lynch, Schaeffer, Barez, Koyejo, Sleight, Jones, Perez, and Sharma]{hughes2024bestofnjailbreaking}
Hughes, J., Price, S., Lynch, A., Schaeffer, R., Barez, F., Koyejo, S., Sleight, H., Jones, E., Perez, E., and Sharma, M.
\newblock Best-of-n jailbreaking, 2024.
\newblock URL \url{https://arxiv.org/abs/2412.03556}.

\bibitem[Hutter(2021)]{hutter2021learningcurvetheory}
Hutter, M.
\newblock Learning curve theory, 2021.
\newblock URL \url{https://arxiv.org/abs/2102.04074}.

\bibitem[Jiang et~al.(2024)Jiang, Liu, Zhong, Schaeffer, Ouyang, Han, and Koyejo]{jiang2024investigatingdatacontaminationpretraining}
Jiang, M., Liu, K.~Z., Zhong, M., Schaeffer, R., Ouyang, S., Han, J., and Koyejo, S.
\newblock Investigating data contamination for pre-training language models, 2024.
\newblock URL \url{https://arxiv.org/abs/2401.06059}.

\bibitem[Jones(2021)]{jones2021scaling}
Jones, A.~L.
\newblock Scaling scaling laws with board games.
\newblock \emph{arXiv preprint arXiv:2104.03113}, 2021.

\bibitem[Kalajdzievski(2024)]{kalajdzievski2024scalinglawsforgettingfinetuning}
Kalajdzievski, D.
\newblock Scaling laws for forgetting when fine-tuning large language models, 2024.
\newblock URL \url{https://arxiv.org/abs/2401.05605}.

\bibitem[Kaplan et~al.(2020)Kaplan, McCandlish, Henighan, Brown, Chess, Child, Gray, Radford, Wu, and Amodei]{kaplan2020scalinglawsneurallanguage}
Kaplan, J., McCandlish, S., Henighan, T., Brown, T.~B., Chess, B., Child, R., Gray, S., Radford, A., Wu, J., and Amodei, D.
\newblock Scaling laws for neural language models, 2020.
\newblock URL \url{https://arxiv.org/abs/2001.08361}.

\bibitem[Kazdan et~al.(2024)Kazdan, Schaeffer, Dey, Gerstgrasser, Rafailov, Donoho, and Koyejo]{kazdan2024collapsethriveperilspromises}
Kazdan, J., Schaeffer, R., Dey, A., Gerstgrasser, M., Rafailov, R., Donoho, D.~L., and Koyejo, S.
\newblock Collapse or thrive? perils and promises of synthetic data in a self-generating world, 2024.
\newblock URL \url{https://arxiv.org/abs/2410.16713}.

\bibitem[Kudugunta et~al.(2023)Kudugunta, Kusupati, Dettmers, Chen, Dhillon, Tsvetkov, Hajishirzi, Kakade, Farhadi, Jain, et~al.]{kudugunta2023matformer}
Kudugunta, S., Kusupati, A., Dettmers, T., Chen, K., Dhillon, I., Tsvetkov, Y., Hajishirzi, H., Kakade, S., Farhadi, A., Jain, P., et~al.
\newblock Matformer: Nested transformer for elastic inference.
\newblock \emph{arXiv preprint arXiv:2310.07707}, 2023.

\bibitem[Kulal et~al.(2019)Kulal, Pasupat, Chandra, Lee, Padon, Aiken, and Liang]{kulal2019spoc}
Kulal, S., Pasupat, P., Chandra, K., Lee, M., Padon, O., Aiken, A., and Liang, P.~S.
\newblock Spoc: Search-based pseudocode to code.
\newblock \emph{Advances in Neural Information Processing Systems}, 32, 2019.

\bibitem[Kumar et~al.(2024)Kumar, Ankner, Spector, Bordelon, Muennighoff, Paul, Pehlevan, R{\'e}, and Raghunathan]{kumar2024scalinglawprecision}
Kumar, T., Ankner, Z., Spector, B.~F., Bordelon, B., Muennighoff, N., Paul, M., Pehlevan, C., R{\'e}, C., and Raghunathan, A.
\newblock Scaling laws for precision.
\newblock \emph{arXiv preprint arXiv:2411.04330}, 2024.

\bibitem[Liang et~al.(2024)Liang, He, Yang, and Dai]{liang2024scalinglawsdiffusiontransformers}
Liang, Z., He, H., Yang, C., and Dai, B.
\newblock Scaling laws for diffusion transformers, 2024.
\newblock URL \url{https://arxiv.org/abs/2410.08184}.

\bibitem[Lin et~al.(2024)Lin, Wu, Kakade, Bartlett, and Lee]{lin2024scaling}
Lin, L., Wu, J., Kakade, S.~M., Bartlett, P.~L., and Lee, J.~D.
\newblock Scaling laws in linear regression: Compute, parameters, and data.
\newblock \emph{arXiv preprint arXiv:2406.08466}, 2024.

\bibitem[Liu et~al.(2024)Liu, Mao, Chen, Zhao, Shah, and Tang]{liu2024neuralscalinglawsgraphs}
Liu, J., Mao, H., Chen, Z., Zhao, T., Shah, N., and Tang, J.
\newblock Towards neural scaling laws on graphs, 2024.
\newblock URL \url{https://arxiv.org/abs/2402.02054}.

\bibitem[Maloney et~al.(2022)Maloney, Roberts, and Sully]{maloney2022solvable}
Maloney, A., Roberts, D.~A., and Sully, J.
\newblock A solvable model of neural scaling laws.
\newblock \emph{arXiv preprint arXiv:2210.16859}, 2022.

\bibitem[Maslej et~al.(2024)Maslej, Fattorini, Perrault, Parli, Reuel, Brynjolfsson, Etchemendy, Ligett, Lyons, Manyika, Niebles, Shoham, Wald, and Clark]{maslej2024artificialintelligenceindexreport}
Maslej, N., Fattorini, L., Perrault, R., Parli, V., Reuel, A., Brynjolfsson, E., Etchemendy, J., Ligett, K., Lyons, T., Manyika, J., Niebles, J.~C., Shoham, Y., Wald, R., and Clark, J.
\newblock Artificial intelligence index report 2024, 2024.
\newblock URL \url{https://arxiv.org/abs/2405.19522}.

\bibitem[Mazeika et~al.(2024)Mazeika, Phan, Yin, Zou, Wang, Mu, Sakhaee, Li, Basart, Li, Forsyth, and Hendrycks]{mazeika2024harmbenchstandardizedevaluationframework}
Mazeika, M., Phan, L., Yin, X., Zou, A., Wang, Z., Mu, N., Sakhaee, E., Li, N., Basart, S., Li, B., Forsyth, D., and Hendrycks, D.
\newblock Harmbench: A standardized evaluation framework for automated red teaming and robust refusal, 2024.
\newblock URL \url{https://arxiv.org/abs/2402.04249}.

\bibitem[McKenzie et~al.(2024)McKenzie, Lyzhov, Pieler, Parrish, Mueller, Prabhu, McLean, Kirtland, Ross, Liu, Gritsevskiy, Wurgaft, Kauffman, Recchia, Liu, Cavanagh, Weiss, Huang, Droid, Tseng, Korbak, Shen, Zhang, Zhou, Kim, Bowman, and Perez]{mckenzie2023inversescalingbiggerisnt}
McKenzie, I.~R., Lyzhov, A., Pieler, M., Parrish, A., Mueller, A., Prabhu, A., McLean, E., Kirtland, A., Ross, A., Liu, A., Gritsevskiy, A., Wurgaft, D., Kauffman, D., Recchia, G., Liu, J., Cavanagh, J., Weiss, M., Huang, S., Droid, T.~F., Tseng, T., Korbak, T., Shen, X., Zhang, Y., Zhou, Z., Kim, N., Bowman, S.~R., and Perez, E.
\newblock Inverse scaling: When bigger isn't better, 2024.
\newblock URL \url{https://arxiv.org/abs/2306.09479}.

\bibitem[Mei et~al.(2024)Mei, Tu, Delbracio, Talebi, Patel, and Milanfar]{mei2024biggerbetterscalingproperties}
Mei, K., Tu, Z., Delbracio, M., Talebi, H., Patel, V.~M., and Milanfar, P.
\newblock Bigger is not always better: Scaling properties of latent diffusion models, 2024.
\newblock URL \url{https://arxiv.org/abs/2404.01367}.

\bibitem[Mhaskar(1996)]{mhaskar1996neural}
Mhaskar, H.~N.
\newblock Neural networks for optimal approximation of smooth and analytic functions.
\newblock \emph{Neural computation}, 8\penalty0 (1):\penalty0 164--177, 1996.

\bibitem[Michaud et~al.(2024)Michaud, Liu, Girit, and Tegmark]{michaud2024quantization}
Michaud, E., Liu, Z., Girit, U., and Tegmark, M.
\newblock The quantization model of neural scaling.
\newblock \emph{Advances in Neural Information Processing Systems}, 36, 2024.

\bibitem[mpmath~development team(2023)]{mpmath}
mpmath~development team, T.
\newblock \emph{mpmath: a {P}ython library for arbitrary-precision floating-point arithmetic (version 1.3.0)}, 2023.
\newblock {\tt http://mpmath.org/}.

\bibitem[Muennighoff et~al.(2023)Muennighoff, Rush, Barak, Le~Scao, Tazi, Piktus, Pyysalo, Wolf, and Raffel]{muennighoff2023scaling}
Muennighoff, N., Rush, A., Barak, B., Le~Scao, T., Tazi, N., Piktus, A., Pyysalo, S., Wolf, T., and Raffel, C.~A.
\newblock Scaling data-constrained language models.
\newblock \emph{Advances in Neural Information Processing Systems}, 36:\penalty0 50358--50376, 2023.

\bibitem[Neumann \& Gros(2022)Neumann and Gros]{neumann2022scalingmultiagentrl}
Neumann, O. and Gros, C.
\newblock Scaling laws for a multi-agent reinforcement learning model.
\newblock \emph{arXiv preprint arXiv:2210.00849}, 2022.

\bibitem[Neumann \& Gros(2024)Neumann and Gros]{neumann2024alphazeroneuralscalingzipfs}
Neumann, O. and Gros, C.
\newblock Alphazero neural scaling and zipf's law: a tale of board games and power laws, 2024.
\newblock URL \url{https://arxiv.org/abs/2412.11979}.

\bibitem[OpenAI et~al.(2024)OpenAI, Achiam, Adler, Agarwal, Ahmad, Akkaya, Aleman, Almeida, Altenschmidt, Altman, Anadkat, Avila, Babuschkin, Balaji, Balcom, Baltescu, Bao, Bavarian, Belgum, Bello, Berdine, Bernadett-Shapiro, Berner, Bogdonoff, Boiko, Boyd, Brakman, Brockman, Brooks, Brundage, Button, Cai, Campbell, Cann, Carey, Carlson, Carmichael, Chan, Chang, Chantzis, Chen, Chen, Chen, Chen, Chen, Chess, Cho, Chu, Chung, Cummings, Currier, Dai, Decareaux, Degry, Deutsch, Deville, Dhar, Dohan, Dowling, Dunning, Ecoffet, Eleti, Eloundou, Farhi, Fedus, Felix, Fishman, Forte, Fulford, Gao, Georges, Gibson, Goel, Gogineni, Goh, Gontijo-Lopes, Gordon, Grafstein, Gray, Greene, Gross, Gu, Guo, Hallacy, Han, Harris, He, Heaton, Heidecke, Hesse, Hickey, Hickey, Hoeschele, Houghton, Hsu, Hu, Hu, Huizinga, Jain, Jain, Jang, Jiang, Jiang, Jin, Jin, Jomoto, Jonn, Jun, Kaftan, Łukasz Kaiser, Kamali, Kanitscheider, Keskar, Khan, Kilpatrick, Kim, Kim, Kim, Kirchner, Kiros, Knight, Kokotajlo, Łukasz Kondraciuk, Kondrich,
  Konstantinidis, Kosic, Krueger, Kuo, Lampe, Lan, Lee, Leike, Leung, Levy, Li, Lim, Lin, Lin, Litwin, Lopez, Lowe, Lue, Makanju, Malfacini, Manning, Markov, Markovski, Martin, Mayer, Mayne, McGrew, McKinney, McLeavey, McMillan, McNeil, Medina, Mehta, Menick, Metz, Mishchenko, Mishkin, Monaco, Morikawa, Mossing, Mu, Murati, Murk, Mély, Nair, Nakano, Nayak, Neelakantan, Ngo, Noh, Ouyang, O'Keefe, Pachocki, Paino, Palermo, Pantuliano, Parascandolo, Parish, Parparita, Passos, Pavlov, Peng, Perelman, de~Avila Belbute~Peres, Petrov, de~Oliveira~Pinto, Michael, Pokorny, Pokrass, Pong, Powell, Power, Power, Proehl, Puri, Radford, Rae, Ramesh, Raymond, Real, Rimbach, Ross, Rotsted, Roussez, Ryder, Saltarelli, Sanders, Santurkar, Sastry, Schmidt, Schnurr, Schulman, Selsam, Sheppard, Sherbakov, Shieh, Shoker, Shyam, Sidor, Sigler, Simens, Sitkin, Slama, Sohl, Sokolowsky, Song, Staudacher, Such, Summers, Sutskever, Tang, Tezak, Thompson, Tillet, Tootoonchian, Tseng, Tuggle, Turley, Tworek, Uribe, Vallone, Vijayvergiya,
  Voss, Wainwright, Wang, Wang, Wang, Ward, Wei, Weinmann, Welihinda, Welinder, Weng, Weng, Wiethoff, Willner, Winter, Wolrich, Wong, Workman, Wu, Wu, Wu, Xiao, Xu, Yoo, Yu, Yuan, Zaremba, Zellers, Zhang, Zhang, Zhao, Zheng, Zhuang, Zhuk, and Zoph]{openai2024gpt4technicalreport}
OpenAI, Achiam, J., Adler, S., Agarwal, S., Ahmad, L., Akkaya, I., Aleman, F.~L., Almeida, D., Altenschmidt, J., Altman, S., Anadkat, S., Avila, R., Babuschkin, I., Balaji, S., Balcom, V., Baltescu, P., Bao, H., Bavarian, M., Belgum, J., Bello, I., Berdine, J., Bernadett-Shapiro, G., Berner, C., Bogdonoff, L., Boiko, O., Boyd, M., Brakman, A.-L., Brockman, G., Brooks, T., Brundage, M., Button, K., Cai, T., Campbell, R., Cann, A., Carey, B., Carlson, C., Carmichael, R., Chan, B., Chang, C., Chantzis, F., Chen, D., Chen, S., Chen, R., Chen, J., Chen, M., Chess, B., Cho, C., Chu, C., Chung, H.~W., Cummings, D., Currier, J., Dai, Y., Decareaux, C., Degry, T., Deutsch, N., Deville, D., Dhar, A., Dohan, D., Dowling, S., Dunning, S., Ecoffet, A., Eleti, A., Eloundou, T., Farhi, D., Fedus, L., Felix, N., Fishman, S.~P., Forte, J., Fulford, I., Gao, L., Georges, E., Gibson, C., Goel, V., Gogineni, T., Goh, G., Gontijo-Lopes, R., Gordon, J., Grafstein, M., Gray, S., Greene, R., Gross, J., Gu, S.~S., Guo, Y., Hallacy,
  C., Han, J., Harris, J., He, Y., Heaton, M., Heidecke, J., Hesse, C., Hickey, A., Hickey, W., Hoeschele, P., Houghton, B., Hsu, K., Hu, S., Hu, X., Huizinga, J., Jain, S., Jain, S., Jang, J., Jiang, A., Jiang, R., Jin, H., Jin, D., Jomoto, S., Jonn, B., Jun, H., Kaftan, T., Łukasz Kaiser, Kamali, A., Kanitscheider, I., Keskar, N.~S., Khan, T., Kilpatrick, L., Kim, J.~W., Kim, C., Kim, Y., Kirchner, J.~H., Kiros, J., Knight, M., Kokotajlo, D., Łukasz Kondraciuk, Kondrich, A., Konstantinidis, A., Kosic, K., Krueger, G., Kuo, V., Lampe, M., Lan, I., Lee, T., Leike, J., Leung, J., Levy, D., Li, C.~M., Lim, R., Lin, M., Lin, S., Litwin, M., Lopez, T., Lowe, R., Lue, P., Makanju, A., Malfacini, K., Manning, S., Markov, T., Markovski, Y., Martin, B., Mayer, K., Mayne, A., McGrew, B., McKinney, S.~M., McLeavey, C., McMillan, P., McNeil, J., Medina, D., Mehta, A., Menick, J., Metz, L., Mishchenko, A., Mishkin, P., Monaco, V., Morikawa, E., Mossing, D., Mu, T., Murati, M., Murk, O., Mély, D., Nair, A., Nakano, R.,
  Nayak, R., Neelakantan, A., Ngo, R., Noh, H., Ouyang, L., O'Keefe, C., Pachocki, J., Paino, A., Palermo, J., Pantuliano, A., Parascandolo, G., Parish, J., Parparita, E., Passos, A., Pavlov, M., Peng, A., Perelman, A., de~Avila Belbute~Peres, F., Petrov, M., de~Oliveira~Pinto, H.~P., Michael, Pokorny, Pokrass, M., Pong, V.~H., Powell, T., Power, A., Power, B., Proehl, E., Puri, R., Radford, A., Rae, J., Ramesh, A., Raymond, C., Real, F., Rimbach, K., Ross, C., Rotsted, B., Roussez, H., Ryder, N., Saltarelli, M., Sanders, T., Santurkar, S., Sastry, G., Schmidt, H., Schnurr, D., Schulman, J., Selsam, D., Sheppard, K., Sherbakov, T., Shieh, J., Shoker, S., Shyam, P., Sidor, S., Sigler, E., Simens, M., Sitkin, J., Slama, K., Sohl, I., Sokolowsky, B., Song, Y., Staudacher, N., Such, F.~P., Summers, N., Sutskever, I., Tang, J., Tezak, N., Thompson, M.~B., Tillet, P., Tootoonchian, A., Tseng, E., Tuggle, P., Turley, N., Tworek, J., Uribe, J. F.~C., Vallone, A., Vijayvergiya, A., Voss, C., Wainwright, C., Wang,
  J.~J., Wang, A., Wang, B., Ward, J., Wei, J., Weinmann, C., Welihinda, A., Welinder, P., Weng, J., Weng, L., Wiethoff, M., Willner, D., Winter, C., Wolrich, S., Wong, H., Workman, L., Wu, S., Wu, J., Wu, M., Xiao, K., Xu, T., Yoo, S., Yu, K., Yuan, Q., Zaremba, W., Zellers, R., Zhang, C., Zhang, M., Zhao, S., Zheng, T., Zhuang, J., Zhuk, W., and Zoph, B.
\newblock Gpt-4 technical report, 2024.
\newblock URL \url{https://arxiv.org/abs/2303.08774}.

\bibitem[Owen(2024)]{owen2024predicting}
Owen, D.
\newblock How predictable is language model benchmark performance?, 2024.

\bibitem[Pachocki et~al.(2024)Pachocki, Tworek, Fedus, Kaiser, Chen, Sidor, and Zaremba]{openai2024o1}
Pachocki, J., Tworek, J., Fedus, L., Kaiser, L., Chen, M., Sidor, S., and Zaremba, W.
\newblock Learning to reason with {LLMs}.
\newblock Technical report, OpenAI, September 2024.
\newblock URL \url{https://openai.com/index/learning-to-reason-with-llms}.
\newblock Contributors include the o1 Contributions team, Core Contributors, and multiple research and safety teams.

\bibitem[Paquette et~al.(2024)Paquette, Paquette, Xiao, and Pennington]{paquette2024fourplus3phases}
Paquette, E., Paquette, C., Xiao, L., and Pennington, J.
\newblock 4+ 3 phases of compute-optimal neural scaling laws.
\newblock \emph{arXiv preprint arXiv:2405.15074}, 2024.

\bibitem[Pinkus(1999)]{pinkus1999approximation}
Pinkus, A.
\newblock Approximation theory of the mlp model in neural networks.
\newblock \emph{Acta numerica}, 8:\penalty0 143--195, 1999.

\bibitem[Polo et~al.(2024)Polo, Somerstep, Choshen, Sun, and Yurochkin]{polo2024slothscalinglawsllm}
Polo, F.~M., Somerstep, S., Choshen, L., Sun, Y., and Yurochkin, M.
\newblock Sloth: scaling laws for llm skills to predict multi-benchmark performance across families, 2024.
\newblock URL \url{https://arxiv.org/abs/2412.06540}.

\bibitem[Porian et~al.(2024)Porian, Wortsman, Jitsev, Schmidt, and Carmon]{porian2024resolvingdiscrepanciescomputeoptimalscaling}
Porian, T., Wortsman, M., Jitsev, J., Schmidt, L., and Carmon, Y.
\newblock Resolving discrepancies in compute-optimal scaling of language models, 2024.
\newblock URL \url{https://arxiv.org/abs/2406.19146}.

\bibitem[Reuel et~al.(2024)Reuel, Bucknall, Casper, Fist, Soder, Aarne, Hammond, Ibrahim, Chan, Wills, Anderljung, Garfinkel, Heim, Trask, Mukobi, Schaeffer, Baker, Hooker, Solaiman, Luccioni, Rajkumar, Moës, Ladish, Guha, Newman, Bengio, South, Pentland, Koyejo, Kochenderfer, and Trager]{reuel2024openproblemstechnicalai}
Reuel, A., Bucknall, B., Casper, S., Fist, T., Soder, L., Aarne, O., Hammond, L., Ibrahim, L., Chan, A., Wills, P., Anderljung, M., Garfinkel, B., Heim, L., Trask, A., Mukobi, G., Schaeffer, R., Baker, M., Hooker, S., Solaiman, I., Luccioni, A.~S., Rajkumar, N., Moës, N., Ladish, J., Guha, N., Newman, J., Bengio, Y., South, T., Pentland, A., Koyejo, S., Kochenderfer, M.~J., and Trager, R.
\newblock Open problems in technical ai governance, 2024.
\newblock URL \url{https://arxiv.org/abs/2407.14981}.

\bibitem[Roberts et~al.(2022)Roberts, Yaida, and Hanin]{roberts2022principles}
Roberts, D.~A., Yaida, S., and Hanin, B.
\newblock \emph{The principles of deep learning theory}, volume~46.
\newblock Cambridge University Press Cambridge, MA, USA, 2022.

\bibitem[Romani et~al.(2013)Romani, Pinkoviezky, Rubin, and Tsodyks]{romani2013scalingassociativememory}
Romani, S., Pinkoviezky, I., Rubin, A., and Tsodyks, M.
\newblock Scaling laws of associative memory retrieval.
\newblock \emph{Neural computation}, 25\penalty0 (10):\penalty0 2523--2544, 2013.

\bibitem[Rosenfeld et~al.(2020)Rosenfeld, Rosenfeld, Belinkov, and Shavit]{rosenfeld2020constructive}
Rosenfeld, J.~S., Rosenfeld, A., Belinkov, Y., and Shavit, N.
\newblock A constructive prediction of the generalization error across scales.
\newblock In \emph{International Conference on Learning Representations}, 2020.

\bibitem[Rosenfeld et~al.(2021)Rosenfeld, Frankle, Carbin, and Shavit]{rosenfeld2021pruningacrossscales}
Rosenfeld, J.~S., Frankle, J., Carbin, M., and Shavit, N.
\newblock On the predictability of pruning across scales.
\newblock In Meila, M. and Zhang, T. (eds.), \emph{Proceedings of the 38th International Conference on Machine Learning}, volume 139 of \emph{Proceedings of Machine Learning Research}, pp.\  9075--9083. PMLR, 18--24 Jul 2021.
\newblock URL \url{https://proceedings.mlr.press/v139/rosenfeld21a.html}.

\bibitem[Ruan et~al.(2024)Ruan, Maddison, and Hashimoto]{ruan2024observationalscalinglawspredictability}
Ruan, Y., Maddison, C.~J., and Hashimoto, T.
\newblock Observational scaling laws and the predictability of language model performance, 2024.
\newblock URL \url{https://arxiv.org/abs/2405.10938}.

\bibitem[Sardana et~al.(2023)Sardana, Portes, Doubov, and Frankle]{sardana2023beyondchinchillaoptimal}
Sardana, N., Portes, J., Doubov, S., and Frankle, J.
\newblock Beyond chinchilla-optimal: Accounting for inference in language model scaling laws.
\newblock In \emph{Forty-first International Conference on Machine Learning}, 2023.

\bibitem[Schaeffer(2023)]{schaeffer2023pretrainingtestsetneed}
Schaeffer, R.
\newblock Pretraining on the test set is all you need, 2023.
\newblock URL \url{https://arxiv.org/abs/2309.08632}.

\bibitem[Schaeffer et~al.(2023)Schaeffer, Miranda, and Koyejo]{schaeffer2024mirage}
Schaeffer, R., Miranda, B., and Koyejo, S.
\newblock Are emergent abilities of large language models a mirage?
\newblock In Oh, A., Naumann, T., Globerson, A., Saenko, K., Hardt, M., and Levine, S. (eds.), \emph{Advances in Neural Information Processing Systems}, volume~36, pp.\  55565--55581. Curran Associates, Inc., 2023.
\newblock URL \url{https://proceedings.neurips.cc/paper_files/paper/2023/file/adc98a266f45005c403b8311ca7e8bd7-Paper-Conference.pdf}.

\bibitem[Schaeffer et~al.(2024{\natexlab{a}})Schaeffer, Lecomte, Pai, Carranza, Isik, Unell, Khona, Yerxa, LeCun, Chung, Gromov, Shwartz-Ziv, and Koyejo]{schaeffer2024improvedunderstandingutilizationmaximum}
Schaeffer, R., Lecomte, V., Pai, D.~B., Carranza, A., Isik, B., Unell, A., Khona, M., Yerxa, T., LeCun, Y., Chung, S., Gromov, A., Shwartz-Ziv, R., and Koyejo, S.
\newblock Towards an improved understanding and utilization of maximum manifold capacity representations, 2024{\natexlab{a}}.
\newblock URL \url{https://arxiv.org/abs/2406.09366}.

\bibitem[Schaeffer et~al.(2024{\natexlab{b}})Schaeffer, Schoelkopf, Miranda, Mukobi, Madan, Ibrahim, Bradley, Biderman, and Koyejo]{schaeffer2024elusive}
Schaeffer, R., Schoelkopf, H., Miranda, B., Mukobi, G., Madan, V., Ibrahim, A., Bradley, H., Biderman, S., and Koyejo, S.
\newblock Why has predicting downstream capabilities of frontier ai models with scale remained elusive?, 2024{\natexlab{b}}.
\newblock URL \url{https://arxiv.org/abs/2406.04391}.

\bibitem[Schaeffer et~al.(2024{\natexlab{c}})Schaeffer, Zahedi, Khona, Pai, Truong, Du, Ostrow, Chandra, Carranza, Fiete, Gromov, and Koyejo]{schaeffer2024bridgingassociativememoryprobabilistic}
Schaeffer, R., Zahedi, N., Khona, M., Pai, D., Truong, S., Du, Y., Ostrow, M., Chandra, S., Carranza, A., Fiete, I.~R., Gromov, A., and Koyejo, S.
\newblock Bridging associative memory and probabilistic modeling, 2024{\natexlab{c}}.
\newblock URL \url{https://arxiv.org/abs/2402.10202}.

\bibitem[Sharma \& Kaplan(2022)Sharma and Kaplan]{sharma2022scaling}
Sharma, U. and Kaplan, J.
\newblock Scaling laws from the data manifold dimension.
\newblock \emph{Journal of Machine Learning Research}, 23\penalty0 (9):\penalty0 1--34, 2022.

\bibitem[Snell et~al.(2024{\natexlab{a}})Snell, Lee, Xu, and Kumar]{snell2024scaling}
Snell, C., Lee, J., Xu, K., and Kumar, A.
\newblock Scaling llm test-time compute optimally can be more effective than scaling model parameters.
\newblock \emph{arXiv preprint arXiv:2408.03314}, 2024{\natexlab{a}}.

\bibitem[Snell et~al.(2024{\natexlab{b}})Snell, Wallace, Klein, and Levine]{snell2024predictingemergentcapabilitiesfinetuning}
Snell, C., Wallace, E., Klein, D., and Levine, S.
\newblock Predicting emergent capabilities by finetuning, 2024{\natexlab{b}}.
\newblock URL \url{https://arxiv.org/abs/2411.16035}.

\bibitem[Sorscher et~al.(2022)Sorscher, Geirhos, Shekhar, Ganguli, and Morcos]{sorscher2022beyond}
Sorscher, B., Geirhos, R., Shekhar, S., Ganguli, S., and Morcos, A.
\newblock Beyond neural scaling laws: beating power law scaling via data pruning.
\newblock \emph{Advances in Neural Information Processing Systems}, 35:\penalty0 19523--19536, 2022.

\bibitem[Spigler et~al.(2020)Spigler, Geiger, and Wyart]{spigler2020asymptoticlearningcurves}
Spigler, S., Geiger, M., and Wyart, M.
\newblock Asymptotic learning curves of kernel methods: empirical data versus teacher–student paradigm.
\newblock \emph{Journal of Statistical Mechanics: Theory and Experiment}, 2020\penalty0 (12):\penalty0 124001, December 2020.
\newblock ISSN 1742-5468.
\newblock \doi{10.1088/1742-5468/abc61d}.
\newblock URL \url{http://dx.doi.org/10.1088/1742-5468/abc61d}.

\bibitem[Srivastava et~al.(2023)Srivastava, Rastogi, Rao, Shoeb, Abid, Fisch, Brown, Santoro, Gupta, Garriga-Alonso, Kluska, Lewkowycz, Agarwal, Power, Ray, Warstadt, Kocurek, Safaya, Tazarv, Xiang, Parrish, Nie, Hussain, Askell, Dsouza, Slone, Rahane, Iyer, Andreassen, Madotto, Santilli, Stuhlmüller, Dai, La, Lampinen, Zou, Jiang, Chen, Vuong, Gupta, Gottardi, Norelli, Venkatesh, Gholamidavoodi, Tabassum, Menezes, Kirubarajan, Mullokandov, Sabharwal, Herrick, Efrat, Erdem, Karakaş, Roberts, Loe, Zoph, Bojanowski, Özyurt, Hedayatnia, Neyshabur, Inden, Stein, Ekmekci, Lin, Howald, Orinion, Diao, Dour, Stinson, Argueta, Ramírez, Singh, Rathkopf, Meng, Baral, Wu, Callison-Burch, Waites, Voigt, Manning, Potts, Ramirez, Rivera, Siro, Raffel, Ashcraft, Garbacea, Sileo, Garrette, Hendrycks, Kilman, Roth, Freeman, Khashabi, Levy, González, Perszyk, Hernandez, Chen, Ippolito, Gilboa, Dohan, Drakard, Jurgens, Datta, Ganguli, Emelin, Kleyko, Yuret, Chen, Tam, Hupkes, Misra, Buzan, Mollo, Yang, Lee, Schrader,
  Shutova, Cubuk, Segal, Hagerman, Barnes, Donoway, Pavlick, Rodola, Lam, Chu, Tang, Erdem, Chang, Chi, Dyer, Jerzak, Kim, Manyasi, Zheltonozhskii, Xia, Siar, Martínez-Plumed, Happé, Chollet, Rong, Mishra, Winata, de~Melo, Kruszewski, Parascandolo, Mariani, Wang, Jaimovitch-López, Betz, Gur-Ari, Galijasevic, Kim, Rashkin, Hajishirzi, Mehta, Bogar, Shevlin, Schütze, Yakura, Zhang, Wong, Ng, Noble, Jumelet, Geissinger, Kernion, Hilton, Lee, Fisac, Simon, Koppel, Zheng, Zou, Kocoń, Thompson, Wingfield, Kaplan, Radom, Sohl-Dickstein, Phang, Wei, Yosinski, Novikova, Bosscher, Marsh, Kim, Taal, Engel, Alabi, Xu, Song, Tang, Waweru, Burden, Miller, Balis, Batchelder, Berant, Frohberg, Rozen, Hernandez-Orallo, Boudeman, Guerr, Jones, Tenenbaum, Rule, Chua, Kanclerz, Livescu, Krauth, Gopalakrishnan, Ignatyeva, Markert, Dhole, Gimpel, Omondi, Mathewson, Chiafullo, Shkaruta, Shridhar, McDonell, Richardson, Reynolds, Gao, Zhang, Dugan, Qin, Contreras-Ochando, Morency, Moschella, Lam, Noble, Schmidt, He, Colón,
  Metz, Şenel, Bosma, Sap, ter Hoeve, Farooqi, Faruqui, Mazeika, Baturan, Marelli, Maru, Quintana, Tolkiehn, Giulianelli, Lewis, Potthast, Leavitt, Hagen, Schubert, Baitemirova, Arnaud, McElrath, Yee, Cohen, Gu, Ivanitskiy, Starritt, Strube, Swędrowski, Bevilacqua, Yasunaga, Kale, Cain, Xu, Suzgun, Walker, Tiwari, Bansal, Aminnaseri, Geva, Gheini, T, Peng, Chi, Lee, Krakover, Cameron, Roberts, Doiron, Martinez, Nangia, Deckers, Muennighoff, Keskar, Iyer, Constant, Fiedel, Wen, Zhang, Agha, Elbaghdadi, Levy, Evans, Casares, Doshi, Fung, Liang, Vicol, Alipoormolabashi, Liao, Liang, Chang, Eckersley, Htut, Hwang, Miłkowski, Patil, Pezeshkpour, Oli, Mei, Lyu, Chen, Banjade, Rudolph, Gabriel, Habacker, Risco, Millière, Garg, Barnes, Saurous, Arakawa, Raymaekers, Frank, Sikand, Novak, Sitelew, LeBras, Liu, Jacobs, Zhang, Salakhutdinov, Chi, Lee, Stovall, Teehan, Yang, Singh, Mohammad, Anand, Dillavou, Shleifer, Wiseman, Gruetter, Bowman, Schoenholz, Han, Kwatra, Rous, Ghazarian, Ghosh, Casey, Bischoff,
  Gehrmann, Schuster, Sadeghi, Hamdan, Zhou, Srivastava, Shi, Singh, Asaadi, Gu, Pachchigar, Toshniwal, Upadhyay, Shyamolima, Debnath, Shakeri, Thormeyer, Melzi, Reddy, Makini, Lee, Torene, Hatwar, Dehaene, Divic, Ermon, Biderman, Lin, Prasad, Piantadosi, Shieber, Misherghi, Kiritchenko, Mishra, Linzen, Schuster, Li, Yu, Ali, Hashimoto, Wu, Desbordes, Rothschild, Phan, Wang, Nkinyili, Schick, Kornev, Tunduny, Gerstenberg, Chang, Neeraj, Khot, Shultz, Shaham, Misra, Demberg, Nyamai, Raunak, Ramasesh, Prabhu, Padmakumar, Srikumar, Fedus, Saunders, Zhang, Vossen, Ren, Tong, Zhao, Wu, Shen, Yaghoobzadeh, Lakretz, Song, Bahri, Choi, Yang, Hao, Chen, Belinkov, Hou, Hou, Bai, Seid, Zhao, Wang, Wang, Wang, and Wu]{srivastava2023imitationgamequantifyingextrapolating}
Srivastava, A., Rastogi, A., Rao, A., Shoeb, A. A.~M., Abid, A., Fisch, A., Brown, A.~R., Santoro, A., Gupta, A., Garriga-Alonso, A., Kluska, A., Lewkowycz, A., Agarwal, A., Power, A., Ray, A., Warstadt, A., Kocurek, A.~W., Safaya, A., Tazarv, A., Xiang, A., Parrish, A., Nie, A., Hussain, A., Askell, A., Dsouza, A., Slone, A., Rahane, A., Iyer, A.~S., Andreassen, A., Madotto, A., Santilli, A., Stuhlmüller, A., Dai, A., La, A., Lampinen, A., Zou, A., Jiang, A., Chen, A., Vuong, A., Gupta, A., Gottardi, A., Norelli, A., Venkatesh, A., Gholamidavoodi, A., Tabassum, A., Menezes, A., Kirubarajan, A., Mullokandov, A., Sabharwal, A., Herrick, A., Efrat, A., Erdem, A., Karakaş, A., Roberts, B.~R., Loe, B.~S., Zoph, B., Bojanowski, B., Özyurt, B., Hedayatnia, B., Neyshabur, B., Inden, B., Stein, B., Ekmekci, B., Lin, B.~Y., Howald, B., Orinion, B., Diao, C., Dour, C., Stinson, C., Argueta, C., Ramírez, C.~F., Singh, C., Rathkopf, C., Meng, C., Baral, C., Wu, C., Callison-Burch, C., Waites, C., Voigt, C., Manning,
  C.~D., Potts, C., Ramirez, C., Rivera, C.~E., Siro, C., Raffel, C., Ashcraft, C., Garbacea, C., Sileo, D., Garrette, D., Hendrycks, D., Kilman, D., Roth, D., Freeman, D., Khashabi, D., Levy, D., González, D.~M., Perszyk, D., Hernandez, D., Chen, D., Ippolito, D., Gilboa, D., Dohan, D., Drakard, D., Jurgens, D., Datta, D., Ganguli, D., Emelin, D., Kleyko, D., Yuret, D., Chen, D., Tam, D., Hupkes, D., Misra, D., Buzan, D., Mollo, D.~C., Yang, D., Lee, D.-H., Schrader, D., Shutova, E., Cubuk, E.~D., Segal, E., Hagerman, E., Barnes, E., Donoway, E., Pavlick, E., Rodola, E., Lam, E., Chu, E., Tang, E., Erdem, E., Chang, E., Chi, E.~A., Dyer, E., Jerzak, E., Kim, E., Manyasi, E.~E., Zheltonozhskii, E., Xia, F., Siar, F., Martínez-Plumed, F., Happé, F., Chollet, F., Rong, F., Mishra, G., Winata, G.~I., de~Melo, G., Kruszewski, G., Parascandolo, G., Mariani, G., Wang, G., Jaimovitch-López, G., Betz, G., Gur-Ari, G., Galijasevic, H., Kim, H., Rashkin, H., Hajishirzi, H., Mehta, H., Bogar, H., Shevlin, H.,
  Schütze, H., Yakura, H., Zhang, H., Wong, H.~M., Ng, I., Noble, I., Jumelet, J., Geissinger, J., Kernion, J., Hilton, J., Lee, J., Fisac, J.~F., Simon, J.~B., Koppel, J., Zheng, J., Zou, J., Kocoń, J., Thompson, J., Wingfield, J., Kaplan, J., Radom, J., Sohl-Dickstein, J., Phang, J., Wei, J., Yosinski, J., Novikova, J., Bosscher, J., Marsh, J., Kim, J., Taal, J., Engel, J., Alabi, J., Xu, J., Song, J., Tang, J., Waweru, J., Burden, J., Miller, J., Balis, J.~U., Batchelder, J., Berant, J., Frohberg, J., Rozen, J., Hernandez-Orallo, J., Boudeman, J., Guerr, J., Jones, J., Tenenbaum, J.~B., Rule, J.~S., Chua, J., Kanclerz, K., Livescu, K., Krauth, K., Gopalakrishnan, K., Ignatyeva, K., Markert, K., Dhole, K.~D., Gimpel, K., Omondi, K., Mathewson, K., Chiafullo, K., Shkaruta, K., Shridhar, K., McDonell, K., Richardson, K., Reynolds, L., Gao, L., Zhang, L., Dugan, L., Qin, L., Contreras-Ochando, L., Morency, L.-P., Moschella, L., Lam, L., Noble, L., Schmidt, L., He, L., Colón, L.~O., Metz, L., Şenel, L.~K.,
  Bosma, M., Sap, M., ter Hoeve, M., Farooqi, M., Faruqui, M., Mazeika, M., Baturan, M., Marelli, M., Maru, M., Quintana, M. J.~R., Tolkiehn, M., Giulianelli, M., Lewis, M., Potthast, M., Leavitt, M.~L., Hagen, M., Schubert, M., Baitemirova, M.~O., Arnaud, M., McElrath, M., Yee, M.~A., Cohen, M., Gu, M., Ivanitskiy, M., Starritt, M., Strube, M., Swędrowski, M., Bevilacqua, M., Yasunaga, M., Kale, M., Cain, M., Xu, M., Suzgun, M., Walker, M., Tiwari, M., Bansal, M., Aminnaseri, M., Geva, M., Gheini, M., T, M.~V., Peng, N., Chi, N.~A., Lee, N., Krakover, N. G.-A., Cameron, N., Roberts, N., Doiron, N., Martinez, N., Nangia, N., Deckers, N., Muennighoff, N., Keskar, N.~S., Iyer, N.~S., Constant, N., Fiedel, N., Wen, N., Zhang, O., Agha, O., Elbaghdadi, O., Levy, O., Evans, O., Casares, P. A.~M., Doshi, P., Fung, P., Liang, P.~P., Vicol, P., Alipoormolabashi, P., Liao, P., Liang, P., Chang, P., Eckersley, P., Htut, P.~M., Hwang, P., Miłkowski, P., Patil, P., Pezeshkpour, P., Oli, P., Mei, Q., Lyu, Q., Chen, Q.,
  Banjade, R., Rudolph, R.~E., Gabriel, R., Habacker, R., Risco, R., Millière, R., Garg, R., Barnes, R., Saurous, R.~A., Arakawa, R., Raymaekers, R., Frank, R., Sikand, R., Novak, R., Sitelew, R., LeBras, R., Liu, R., Jacobs, R., Zhang, R., Salakhutdinov, R., Chi, R., Lee, R., Stovall, R., Teehan, R., Yang, R., Singh, S., Mohammad, S.~M., Anand, S., Dillavou, S., Shleifer, S., Wiseman, S., Gruetter, S., Bowman, S.~R., Schoenholz, S.~S., Han, S., Kwatra, S., Rous, S.~A., Ghazarian, S., Ghosh, S., Casey, S., Bischoff, S., Gehrmann, S., Schuster, S., Sadeghi, S., Hamdan, S., Zhou, S., Srivastava, S., Shi, S., Singh, S., Asaadi, S., Gu, S.~S., Pachchigar, S., Toshniwal, S., Upadhyay, S., Shyamolima, Debnath, Shakeri, S., Thormeyer, S., Melzi, S., Reddy, S., Makini, S.~P., Lee, S.-H., Torene, S., Hatwar, S., Dehaene, S., Divic, S., Ermon, S., Biderman, S., Lin, S., Prasad, S., Piantadosi, S.~T., Shieber, S.~M., Misherghi, S., Kiritchenko, S., Mishra, S., Linzen, T., Schuster, T., Li, T., Yu, T., Ali, T.,
  Hashimoto, T., Wu, T.-L., Desbordes, T., Rothschild, T., Phan, T., Wang, T., Nkinyili, T., Schick, T., Kornev, T., Tunduny, T., Gerstenberg, T., Chang, T., Neeraj, T., Khot, T., Shultz, T., Shaham, U., Misra, V., Demberg, V., Nyamai, V., Raunak, V., Ramasesh, V., Prabhu, V.~U., Padmakumar, V., Srikumar, V., Fedus, W., Saunders, W., Zhang, W., Vossen, W., Ren, X., Tong, X., Zhao, X., Wu, X., Shen, X., Yaghoobzadeh, Y., Lakretz, Y., Song, Y., Bahri, Y., Choi, Y., Yang, Y., Hao, Y., Chen, Y., Belinkov, Y., Hou, Y., Hou, Y., Bai, Y., Seid, Z., Zhao, Z., Wang, Z., Wang, Z.~J., Wang, Z., and Wu, Z.
\newblock Beyond the imitation game: Quantifying and extrapolating the capabilities of language models, 2023.
\newblock URL \url{https://arxiv.org/abs/2206.04615}.

\bibitem[Sun et~al.(2025)Sun, Li, Xie, Han, Wu, Yang, Li, Wang, Li, Xue, Cheng, Tao, Kang, Xu, Wang, and Jiang]{sun2025scalinglawsfloatingpoint}
Sun, X., Li, S., Xie, R., Han, W., Wu, K., Yang, Z., Li, Y., Wang, A., Li, S., Xue, J., Cheng, Y., Tao, Y., Kang, Z., Xu, C., Wang, D., and Jiang, J.
\newblock Scaling laws for floating point quantization training, 2025.
\newblock URL \url{https://arxiv.org/abs/2501.02423}.

\bibitem[Tao et~al.(2024)Tao, Liu, Dou, Muennighoff, Wan, Luo, Lin, and Wong]{tao2024scalingvocabulary}
Tao, C., Liu, Q., Dou, L., Muennighoff, N., Wan, Z., Luo, P., Lin, M., and Wong, N.
\newblock Scaling laws with vocabulary: Larger models deserve larger vocabularies.
\newblock \emph{arXiv preprint arXiv:2407.13623}, 2024.

\bibitem[Tay et~al.(2021)Tay, Dehghani, Rao, Fedus, Abnar, Chung, Narang, Yogatama, Vaswani, and Metzler]{tay2021scaleefficiently}
Tay, Y., Dehghani, M., Rao, J., Fedus, W., Abnar, S., Chung, H.~W., Narang, S., Yogatama, D., Vaswani, A., and Metzler, D.
\newblock Scale efficiently: Insights from pre-training and fine-tuning transformers.
\newblock \emph{arXiv preprint arXiv:2109.10686}, 2021.

\bibitem[Tay et~al.(2022{\natexlab{a}})Tay, Dehghani, Abnar, Chung, Fedus, Rao, Narang, Tran, Yogatama, and Metzler]{tay2023scalingvsarchitecture}
Tay, Y., Dehghani, M., Abnar, S., Chung, H.~W., Fedus, W., Rao, J., Narang, S., Tran, V.~Q., Yogatama, D., and Metzler, D.
\newblock Scaling laws vs model architectures: How does inductive bias influence scaling?
\newblock In \emph{The 2023 Conference on Empirical Methods in Natural Language Processing}, 2022{\natexlab{a}}.

\bibitem[Tay et~al.(2022{\natexlab{b}})Tay, Wei, Chung, Tran, So, Shakeri, Garcia, Zheng, Rao, Chowdhery, Zhou, Metzler, Petrov, Houlsby, Le, and Dehghani]{tay2022transcendingscalinglaws01}
Tay, Y., Wei, J., Chung, H.~W., Tran, V.~Q., So, D.~R., Shakeri, S., Garcia, X., Zheng, H.~S., Rao, J., Chowdhery, A., Zhou, D., Metzler, D., Petrov, S., Houlsby, N., Le, Q.~V., and Dehghani, M.
\newblock Transcending scaling laws with 0.1
\newblock URL \url{https://arxiv.org/abs/2210.11399}.

\bibitem[Team et~al.(2024{\natexlab{a}})Team, Anil, Borgeaud, Alayrac, Yu, Soricut, Schalkwyk, Dai, Hauth, Millican, Silver, Johnson, Antonoglou, Schrittwieser, Glaese, Chen, Pitler, Lillicrap, Lazaridou, Firat, Molloy, Isard, Barham, Hennigan, Lee, Viola, Reynolds, Xu, Doherty, Collins, Meyer, Rutherford, Moreira, Ayoub, Goel, Krawczyk, Du, Chi, Cheng, Ni, Shah, Kane, Chan, Faruqui, Severyn, Lin, Li, Cheng, Ittycheriah, Mahdieh, Chen, Sun, Tran, Bagri, Lakshminarayanan, Liu, Orban, Güra, Zhou, Song, Boffy, Ganapathy, Zheng, Choe, Ágoston Weisz, Zhu, Lu, Gopal, Kahn, Kula, Pitman, Shah, Taropa, Merey, Baeuml, Chen, Shafey, Zhang, Sercinoglu, Tucker, Piqueras, Krikun, Barr, Savinov, Danihelka, Roelofs, White, Andreassen, von Glehn, Yagati, Kazemi, Gonzalez, Khalman, Sygnowski, Frechette, Smith, Culp, Proleev, Luan, Chen, Lottes, Schucher, Lebron, Rrustemi, Clay, Crone, Kocisky, Zhao, Perz, Yu, Howard, Bloniarz, Rae, Lu, Sifre, Maggioni, Alcober, Garrette, Barnes, Thakoor, Austin, Barth-Maron, Wong, Joshi,
  Chaabouni, Fatiha, Ahuja, Tomar, Senter, Chadwick, Kornakov, Attaluri, Iturrate, Liu, Li, Cogan, Chen, Jia, Gu, Zhang, Grimstad, Hartman, Garcia, Pillai, Devlin, Laskin, de~Las~Casas, Valter, Tao, Blanco, Badia, Reitter, Chen, Brennan, Rivera, Brin, Iqbal, Surita, Labanowski, Rao, Winkler, Parisotto, Gu, Olszewska, Addanki, Miech, Louis, Teplyashin, Brown, Catt, Balaguer, Xiang, Wang, Ashwood, Briukhov, Webson, Ganapathy, Sanghavi, Kannan, Chang, Stjerngren, Djolonga, Sun, Bapna, Aitchison, Pejman, Michalewski, Yu, Wang, Love, Ahn, Bloxwich, Han, Humphreys, Sellam, Bradbury, Godbole, Samangooei, Damoc, Kaskasoli, Arnold, Vasudevan, Agrawal, Riesa, Lepikhin, Tanburn, Srinivasan, Lim, Hodkinson, Shyam, Ferret, Hand, Garg, Paine, Li, Li, Giang, Neitz, Abbas, York, Reid, Cole, Chowdhery, Das, Rogozińska, Nikolaev, Sprechmann, Nado, Zilka, Prost, He, Monteiro, Mishra, Welty, Newlan, Jia, Allamanis, Hu, de~Liedekerke, Gilmer, Saroufim, Rijhwani, Hou, Shrivastava, Baddepudi, Goldin, Ozturel, Cassirer, Xu, Sohn,
  Sachan, Amplayo, Swanson, Petrova, Narayan, Guez, Brahma, Landon, Patel, Zhao, Villela, Wang, Jia, Rahtz, Giménez, Yeung, Keeling, Georgiev, Mincu, Wu, Haykal, Saputro, Vodrahalli, Qin, Cankara, Sharma, Fernando, Hawkins, Neyshabur, Kim, Hutter, Agrawal, Castro-Ros, van~den Driessche, Wang, Yang, yiin Chang, Komarek, McIlroy, Lučić, Zhang, Farhan, Sharman, Natsev, Michel, Bansal, Qiao, Cao, Shakeri, Butterfield, Chung, Rubenstein, Agrawal, Mensch, Soparkar, Lenc, Chung, Pope, Maggiore, Kay, Jhakra, Wang, Maynez, Phuong, Tobin, Tacchetti, Trebacz, Robinson, Katariya, Riedel, Bailey, Xiao, Ghelani, Aroyo, Slone, Houlsby, Xiong, Yang, Gribovskaya, Adler, Wirth, Lee, Li, Kagohara, Pavagadhi, Bridgers, Bortsova, Ghemawat, Ahmed, Liu, Powell, Bolina, Iinuma, Zablotskaia, Besley, Chung, Dozat, Comanescu, Si, Greer, Su, Polacek, Kaufman, Tokumine, Hu, Buchatskaya, Miao, Elhawaty, Siddhant, Tomasev, Xing, Greer, Miller, Ashraf, Roy, Zhang, Ma, Filos, Besta, Blevins, Klimenko, Yeh, Changpinyo, Mu, Chang,
  Pajarskas, Muir, Cohen, Lan, Haridasan, Marathe, Hansen, Douglas, Samuel, Wang, Austin, Lan, Jiang, Chiu, Lorenzo, Sjösund, Cevey, Gleicher, Avrahami, Boral, Srinivasan, Selo, May, Aisopos, Hussenot, Soares, Baumli, Chang, Recasens, Caine, Pritzel, Pavetic, Pardo, Gergely, Frye, Ramasesh, Horgan, Badola, Kassner, Roy, Dyer, Campos, Tomala, Tang, Badawy, White, Mustafa, Lang, Jindal, Vikram, Gong, Caelles, Hemsley, Thornton, Feng, Stokowiec, Zheng, Thacker, Çağlar Ünlü, Zhang, Saleh, Svensson, Bileschi, Patil, Anand, Ring, Tsihlas, Vezer, Selvi, Shevlane, Rodriguez, Kwiatkowski, Daruki, Rong, Dafoe, FitzGerald, Gu-Lemberg, Khan, Hendricks, Pellat, Feinberg, Cobon-Kerr, Sainath, Rauh, Hashemi, Ives, Hasson, Noland, Cao, Byrd, Hou, Wang, Sottiaux, Paganini, Lespiau, Moufarek, Hassan, Shivakumar, van Amersfoort, Mandhane, Joshi, Goyal, Tung, Brock, Sheahan, Misra, Li, Rakićević, Dehghani, Liu, Mittal, Oh, Noury, Sezener, Huot, Lamm, Cao, Chen, Mudgal, Stella, Brooks, Vasudevan, Liu, Chain, Melinkeri,
  Cohen, Wang, Seymore, Zubkov, Goel, Yue, Krishnakumaran, Albert, Hurley, Sano, Mohananey, Joughin, Filonov, Kepa, Eldawy, Lim, Rishi, Badiezadegan, Bos, Chang, Jain, Padmanabhan, Puttagunta, Krishna, Baker, Kalb, Bedapudi, Kurzrok, Lei, Yu, Litvin, Zhou, Wu, Sobell, Siciliano, Papir, Neale, Bragagnolo, Toor, Chen, Anklin, Wang, Feng, Gholami, Ling, Liu, Walter, Moghaddam, Kishore, Adamek, Mercado, Mallinson, Wandekar, Cagle, Ofek, Garrido, Lombriser, Mukha, Sun, Mohammad, Matak, Qian, Peswani, Janus, Yuan, Schelin, David, Garg, He, Duzhyi, Älgmyr, Lottaz, Li, Yadav, Xu, Chinien, Shivanna, Chuklin, Li, Spadine, Wolfe, Mohamed, Das, Dai, He, von Dincklage, Upadhyay, Maurya, Chi, Krause, Salama, Rabinovitch, M, Selvan, Dektiarev, Ghiasi, Guven, Gupta, Liu, Sharma, Shtacher, Paul, Akerlund, Aubet, Huang, Zhu, Zhu, Teixeira, Fritze, Bertolini, Marinescu, Bölle, Paulus, Gupta, Latkar, Chang, Sanders, Wilson, Wu, Tan, Thiet, Doshi, Lall, Mishra, Chen, Luong, Benjamin, Lee, Andrejczuk, Rabiej, Ranjan, Styrc, Yin,
  Simon, Harriott, Bansal, Robsky, Bacon, Greene, Mirylenka, Zhou, Sarvana, Goyal, Andermatt, Siegler, Horn, Israel, Pongetti, Chen, Selvatici, Silva, Wang, Tolins, Guu, Yogev, Cai, Agostini, Shah, Nguyen, Donnaile, Pereira, Friso, Stambler, Kurzrok, Kuang, Romanikhin, Geller, Yan, Jang, Lee, Fica, Malmi, Tan, Banica, Balle, Pham, Huang, Avram, Shi, Singh, Hidey, Ahuja, Saxena, Dooley, Potharaju, O'Neill, Gokulchandran, Foley, Zhao, Dusenberry, Liu, Mehta, Kotikalapudi, Safranek-Shrader, Goodman, Kessinger, Globen, Kolhar, Gorgolewski, Ibrahim, Song, Eichenbaum, Brovelli, Potluri, Lahoti, Baetu, Ghorbani, Chen, Crawford, Pal, Sridhar, Gurita, Mujika, Petrovski, Cedoz, Li, Chen, Santo, Goyal, Punjabi, Kappaganthu, Kwak, LV, Velury, Choudhury, Hall, Shah, Figueira, Thomas, Lu, Zhou, Kumar, Jurdi, Chikkerur, Ma, Yu, Kwak, Ähdel, Rajayogam, Choma, Liu, Barua, Ji, Park, Hellendoorn, Bailey, Bilal, Zhou, Khatir, Sutton, Rzadkowski, Macintosh, Shagin, Medina, Liang, Zhou, Shah, Bi, Dankovics, Banga, Lehmann,
  Bredesen, Lin, Hoffmann, Lai, Chung, Yang, Balani, Bražinskas, Sozanschi, Hayes, Alcalde, Makarov, Chen, Stella, Snijders, Mandl, Kärrman, Nowak, Wu, Dyck, Vaidyanathan, R, Mallet, Rudominer, Johnston, Mittal, Udathu, Christensen, Verma, Irving, Santucci, Elsayed, Davoodi, Georgiev, Tenney, Hua, Cideron, Leurent, Alnahlawi, Georgescu, Wei, Zheng, Scandinaro, Jiang, Snoek, Sundararajan, Wang, Ontiveros, Karo, Cole, Rajashekhar, Tumeh, Ben-David, Jain, Uesato, Datta, Bunyan, Wu, Zhang, Stanczyk, Zhang, Steiner, Naskar, Azzam, Johnson, Paszke, Chiu, Elias, Mohiuddin, Muhammad, Miao, Lee, Vieillard, Park, Zhang, Stanway, Garmon, Karmarkar, Dong, Lee, Kumar, Zhou, Evens, Isaac, Irving, Loper, Fink, Arkatkar, Chen, Shafran, Petrychenko, Chen, Jia, Levskaya, Zhu, Grabowski, Mao, Magni, Yao, Snaider, Casagrande, Palmer, Suganthan, Castaño, Giannoumis, Kim, Rybiński, Sreevatsa, Prendki, Soergel, Goedeckemeyer, Gierke, Jafari, Gaba, Wiesner, Wright, Wei, Vashisht, Kulizhskaya, Hoover, Le, Li, Iwuanyanwu, Liu,
  Ramirez, Khorlin, Cui, LIN, Wu, Aguilar, Pallo, Chakladar, Perng, Abellan, Zhang, Dasgupta, Kushman, Penchev, Repina, Wu, van~der Weide, Ponnapalli, Kaplan, Simsa, Li, Dousse, Yang, Piper, Ie, Pasumarthi, Lintz, Vijayakumar, Andor, Valenzuela, Lui, Paduraru, Peng, Lee, Zhang, Greene, Nguyen, Kurylowicz, Hardin, Dixon, Janzer, Choo, Feng, Zhang, Singhal, Du, McKinnon, Antropova, Bolukbasi, Keller, Reid, Finchelstein, Raad, Crocker, Hawkins, Dadashi, Gaffney, Franko, Bulanova, Leblond, Chung, Askham, Cobo, Xu, Fischer, Xu, Sorokin, Alberti, Lin, Evans, Dimitriev, Forbes, Banarse, Tung, Omernick, Bishop, Sterneck, Jain, Xia, Amid, Piccinno, Wang, Banzal, Mankowitz, Polozov, Krakovna, Brown, Bateni, Duan, Firoiu, Thotakuri, Natan, Geist, tan Girgin, Li, Ye, Roval, Tojo, Kwong, Lee-Thorp, Yew, Sinopalnikov, Ramos, Mellor, Sharma, Wu, Miller, Sonnerat, Vnukov, Greig, Beattie, Caveness, Bai, Eisenschlos, Korchemniy, Tsai, Jasarevic, Kong, Dao, Zheng, Liu, Yang, Zhu, Teh, Sanmiya, Gladchenko, Trdin, Toyama, Rosen,
  Tavakkol, Xue, Elkind, Woodman, Carpenter, Papamakarios, Kemp, Kafle, Grunina, Sinha, Talbert, Wu, Owusu-Afriyie, Du, Thornton, Pont-Tuset, Narayana, Li, Fatehi, Wieting, Ajmeri, Uria, Ko, Knight, Héliou, Niu, Gu, Pang, Li, Levine, Stolovich, Santamaria-Fernandez, Goenka, Yustalim, Strudel, Elqursh, Deck, Lee, Li, Levin, Hoffmann, Holtmann-Rice, Bachem, Arora, Koh, Yeganeh, Põder, Tariq, Sun, Ionita, Seyedhosseini, Tafti, Liu, Gulati, Liu, Ye, Chrzaszcz, Wang, Sethi, Li, Brown, Singh, Fan, Parisi, Stanton, Koverkathu, Choquette-Choo, Li, Lu, Ittycheriah, Shroff, Varadarajan, Bahargam, Willoughby, Gaddy, Desjardins, Cornero, Robenek, Mittal, Albrecht, Shenoy, Moiseev, Jacobsson, Ghaffarkhah, Rivière, Walton, Crepy, Parrish, Zhou, Farabet, Radebaugh, Srinivasan, van~der Salm, Fidjeland, Scellato, Latorre-Chimoto, Klimczak-Plucińska, Bridson, de~Cesare, Hudson, Mendolicchio, Walker, Morris, Mauger, Guseynov, Reid, Odoom, Loher, Cotruta, Yenugula, Grewe, Petrushkina, Duerig, Sanchez, Yadlowsky, Shen,
  Globerson, Webb, Dua, Li, Bhupatiraju, Hurt, Qureshi, Agarwal, Shani, Eyal, Khare, Belle, Wang, Tekur, Kale, Wei, Sang, Saeta, Liechty, Sun, Zhao, Lee, Nayak, Fritz, Vuyyuru, Aslanides, Vyas, Wicke, Ma, Eltyshev, Martin, Cate, Manyika, Amiri, Kim, Xiong, Kang, Luisier, Tripuraneni, Madras, Guo, Waters, Wang, Ainslie, Baldridge, Zhang, Pruthi, Bauer, Yang, Mansour, Gelman, Xu, Polovets, Liu, Cai, Chen, Sheng, Xue, Ozair, Angermueller, Li, Sinha, Wang, Wiesinger, Koukoumidis, Tian, Iyer, Gurumurthy, Goldenson, Shah, Blake, Yu, Urbanowicz, Palomaki, Fernando, Durden, Mehta, Momchev, Rahimtoroghi, Georgaki, Raul, Ruder, Redshaw, Lee, Zhou, Jalan, Li, Hechtman, Schuh, Nasr, Milan, Mikulik, Franco, Green, Nguyen, Kelley, Mahendru, Hu, Howland, Vargas, Hui, Bansal, Rao, Ghiya, Wang, Ye, Sarr, Preston, Elish, Li, Kaku, Gupta, Pasupat, Juan, Someswar, M., Chen, Amini, Fabrikant, Chu, Dong, Muthal, Buthpitiya, Jauhari, Hua, Khandelwal, Hitron, Ren, Rinaldi, Drath, Dabush, Jiang, Godhia, Sachs, Chen, Fan, Taitelbaum,
  Noga, Dai, Wang, Liang, Hamer, Ferng, Elkind, Atias, Lee, Listík, Carlen, van~de Kerkhof, Pikus, Zaher, Müller, Zykova, Stefanec, Gatsko, Hirnschall, Sethi, Xu, Ahuja, Tsai, Stefanoiu, Feng, Dhandhania, Katyal, Gupta, Parulekar, Pitta, Zhao, Bhatia, Bhavnani, Alhadlaq, Li, Danenberg, Tu, Pine, Filippova, Ghosh, Limonchik, Urala, Lanka, Clive, Sun, Li, Wu, Hongtongsak, Li, Thakkar, Omarov, Majmundar, Alverson, Kucharski, Patel, Jain, Zabelin, Pelagatti, Kohli, Kumar, Kim, Sankar, Shah, Ramachandruni, Zeng, Bariach, Weidinger, Vu, Andreev, He, Hui, Kashem, Subramanya, Hsiao, Hassabis, Kavukcuoglu, Sadovsky, Le, Strohman, Wu, Petrov, Dean, and Vinyals]{anil2024geminifamilyhighlycapable}
Team, G., Anil, R., Borgeaud, S., Alayrac, J.-B., Yu, J., Soricut, R., Schalkwyk, J., Dai, A.~M., Hauth, A., Millican, K., Silver, D., Johnson, M., Antonoglou, I., Schrittwieser, J., Glaese, A., Chen, J., Pitler, E., Lillicrap, T., Lazaridou, A., Firat, O., Molloy, J., Isard, M., Barham, P.~R., Hennigan, T., Lee, B., Viola, F., Reynolds, M., Xu, Y., Doherty, R., Collins, E., Meyer, C., Rutherford, E., Moreira, E., Ayoub, K., Goel, M., Krawczyk, J., Du, C., Chi, E., Cheng, H.-T., Ni, E., Shah, P., Kane, P., Chan, B., Faruqui, M., Severyn, A., Lin, H., Li, Y., Cheng, Y., Ittycheriah, A., Mahdieh, M., Chen, M., Sun, P., Tran, D., Bagri, S., Lakshminarayanan, B., Liu, J., Orban, A., Güra, F., Zhou, H., Song, X., Boffy, A., Ganapathy, H., Zheng, S., Choe, H., Ágoston Weisz, Zhu, T., Lu, Y., Gopal, S., Kahn, J., Kula, M., Pitman, J., Shah, R., Taropa, E., Merey, M.~A., Baeuml, M., Chen, Z., Shafey, L.~E., Zhang, Y., Sercinoglu, O., Tucker, G., Piqueras, E., Krikun, M., Barr, I., Savinov, N., Danihelka, I.,
  Roelofs, B., White, A., Andreassen, A., von Glehn, T., Yagati, L., Kazemi, M., Gonzalez, L., Khalman, M., Sygnowski, J., Frechette, A., Smith, C., Culp, L., Proleev, L., Luan, Y., Chen, X., Lottes, J., Schucher, N., Lebron, F., Rrustemi, A., Clay, N., Crone, P., Kocisky, T., Zhao, J., Perz, B., Yu, D., Howard, H., Bloniarz, A., Rae, J.~W., Lu, H., Sifre, L., Maggioni, M., Alcober, F., Garrette, D., Barnes, M., Thakoor, S., Austin, J., Barth-Maron, G., Wong, W., Joshi, R., Chaabouni, R., Fatiha, D., Ahuja, A., Tomar, G.~S., Senter, E., Chadwick, M., Kornakov, I., Attaluri, N., Iturrate, I., Liu, R., Li, Y., Cogan, S., Chen, J., Jia, C., Gu, C., Zhang, Q., Grimstad, J., Hartman, A.~J., Garcia, X., Pillai, T.~S., Devlin, J., Laskin, M., de~Las~Casas, D., Valter, D., Tao, C., Blanco, L., Badia, A.~P., Reitter, D., Chen, M., Brennan, J., Rivera, C., Brin, S., Iqbal, S., Surita, G., Labanowski, J., Rao, A., Winkler, S., Parisotto, E., Gu, Y., Olszewska, K., Addanki, R., Miech, A., Louis, A., Teplyashin, D.,
  Brown, G., Catt, E., Balaguer, J., Xiang, J., Wang, P., Ashwood, Z., Briukhov, A., Webson, A., Ganapathy, S., Sanghavi, S., Kannan, A., Chang, M.-W., Stjerngren, A., Djolonga, J., Sun, Y., Bapna, A., Aitchison, M., Pejman, P., Michalewski, H., Yu, T., Wang, C., Love, J., Ahn, J., Bloxwich, D., Han, K., Humphreys, P., Sellam, T., Bradbury, J., Godbole, V., Samangooei, S., Damoc, B., Kaskasoli, A., Arnold, S. M.~R., Vasudevan, V., Agrawal, S., Riesa, J., Lepikhin, D., Tanburn, R., Srinivasan, S., Lim, H., Hodkinson, S., Shyam, P., Ferret, J., Hand, S., Garg, A., Paine, T.~L., Li, J., Li, Y., Giang, M., Neitz, A., Abbas, Z., York, S., Reid, M., Cole, E., Chowdhery, A., Das, D., Rogozińska, D., Nikolaev, V., Sprechmann, P., Nado, Z., Zilka, L., Prost, F., He, L., Monteiro, M., Mishra, G., Welty, C., Newlan, J., Jia, D., Allamanis, M., Hu, C.~H., de~Liedekerke, R., Gilmer, J., Saroufim, C., Rijhwani, S., Hou, S., Shrivastava, D., Baddepudi, A., Goldin, A., Ozturel, A., Cassirer, A., Xu, Y., Sohn, D., Sachan,
  D., Amplayo, R.~K., Swanson, C., Petrova, D., Narayan, S., Guez, A., Brahma, S., Landon, J., Patel, M., Zhao, R., Villela, K., Wang, L., Jia, W., Rahtz, M., Giménez, M., Yeung, L., Keeling, J., Georgiev, P., Mincu, D., Wu, B., Haykal, S., Saputro, R., Vodrahalli, K., Qin, J., Cankara, Z., Sharma, A., Fernando, N., Hawkins, W., Neyshabur, B., Kim, S., Hutter, A., Agrawal, P., Castro-Ros, A., van~den Driessche, G., Wang, T., Yang, F., yiin Chang, S., Komarek, P., McIlroy, R., Lučić, M., Zhang, G., Farhan, W., Sharman, M., Natsev, P., Michel, P., Bansal, Y., Qiao, S., Cao, K., Shakeri, S., Butterfield, C., Chung, J., Rubenstein, P.~K., Agrawal, S., Mensch, A., Soparkar, K., Lenc, K., Chung, T., Pope, A., Maggiore, L., Kay, J., Jhakra, P., Wang, S., Maynez, J., Phuong, M., Tobin, T., Tacchetti, A., Trebacz, M., Robinson, K., Katariya, Y., Riedel, S., Bailey, P., Xiao, K., Ghelani, N., Aroyo, L., Slone, A., Houlsby, N., Xiong, X., Yang, Z., Gribovskaya, E., Adler, J., Wirth, M., Lee, L., Li, M., Kagohara, T.,
  Pavagadhi, J., Bridgers, S., Bortsova, A., Ghemawat, S., Ahmed, Z., Liu, T., Powell, R., Bolina, V., Iinuma, M., Zablotskaia, P., Besley, J., Chung, D.-W., Dozat, T., Comanescu, R., Si, X., Greer, J., Su, G., Polacek, M., Kaufman, R.~L., Tokumine, S., Hu, H., Buchatskaya, E., Miao, Y., Elhawaty, M., Siddhant, A., Tomasev, N., Xing, J., Greer, C., Miller, H., Ashraf, S., Roy, A., Zhang, Z., Ma, A., Filos, A., Besta, M., Blevins, R., Klimenko, T., Yeh, C.-K., Changpinyo, S., Mu, J., Chang, O., Pajarskas, M., Muir, C., Cohen, V., Lan, C.~L., Haridasan, K., Marathe, A., Hansen, S., Douglas, S., Samuel, R., Wang, M., Austin, S., Lan, C., Jiang, J., Chiu, J., Lorenzo, J.~A., Sjösund, L.~L., Cevey, S., Gleicher, Z., Avrahami, T., Boral, A., Srinivasan, H., Selo, V., May, R., Aisopos, K., Hussenot, L., Soares, L.~B., Baumli, K., Chang, M.~B., Recasens, A., Caine, B., Pritzel, A., Pavetic, F., Pardo, F., Gergely, A., Frye, J., Ramasesh, V., Horgan, D., Badola, K., Kassner, N., Roy, S., Dyer, E., Campos, V.~C.,
  Tomala, A., Tang, Y., Badawy, D.~E., White, E., Mustafa, B., Lang, O., Jindal, A., Vikram, S., Gong, Z., Caelles, S., Hemsley, R., Thornton, G., Feng, F., Stokowiec, W., Zheng, C., Thacker, P., Çağlar Ünlü, Zhang, Z., Saleh, M., Svensson, J., Bileschi, M., Patil, P., Anand, A., Ring, R., Tsihlas, K., Vezer, A., Selvi, M., Shevlane, T., Rodriguez, M., Kwiatkowski, T., Daruki, S., Rong, K., Dafoe, A., FitzGerald, N., Gu-Lemberg, K., Khan, M., Hendricks, L.~A., Pellat, M., Feinberg, V., Cobon-Kerr, J., Sainath, T., Rauh, M., Hashemi, S.~H., Ives, R., Hasson, Y., Noland, E., Cao, Y., Byrd, N., Hou, L., Wang, Q., Sottiaux, T., Paganini, M., Lespiau, J.-B., Moufarek, A., Hassan, S., Shivakumar, K., van Amersfoort, J., Mandhane, A., Joshi, P., Goyal, A., Tung, M., Brock, A., Sheahan, H., Misra, V., Li, C., Rakićević, N., Dehghani, M., Liu, F., Mittal, S., Oh, J., Noury, S., Sezener, E., Huot, F., Lamm, M., Cao, N.~D., Chen, C., Mudgal, S., Stella, R., Brooks, K., Vasudevan, G., Liu, C., Chain, M., Melinkeri,
  N., Cohen, A., Wang, V., Seymore, K., Zubkov, S., Goel, R., Yue, S., Krishnakumaran, S., Albert, B., Hurley, N., Sano, M., Mohananey, A., Joughin, J., Filonov, E., Kepa, T., Eldawy, Y., Lim, J., Rishi, R., Badiezadegan, S., Bos, T., Chang, J., Jain, S., Padmanabhan, S. G.~S., Puttagunta, S., Krishna, K., Baker, L., Kalb, N., Bedapudi, V., Kurzrok, A., Lei, S., Yu, A., Litvin, O., Zhou, X., Wu, Z., Sobell, S., Siciliano, A., Papir, A., Neale, R., Bragagnolo, J., Toor, T., Chen, T., Anklin, V., Wang, F., Feng, R., Gholami, M., Ling, K., Liu, L., Walter, J., Moghaddam, H., Kishore, A., Adamek, J., Mercado, T., Mallinson, J., Wandekar, S., Cagle, S., Ofek, E., Garrido, G., Lombriser, C., Mukha, M., Sun, B., Mohammad, H.~R., Matak, J., Qian, Y., Peswani, V., Janus, P., Yuan, Q., Schelin, L., David, O., Garg, A., He, Y., Duzhyi, O., Älgmyr, A., Lottaz, T., Li, Q., Yadav, V., Xu, L., Chinien, A., Shivanna, R., Chuklin, A., Li, J., Spadine, C., Wolfe, T., Mohamed, K., Das, S., Dai, Z., He, K., von Dincklage, D.,
  Upadhyay, S., Maurya, A., Chi, L., Krause, S., Salama, K., Rabinovitch, P.~G., M, P. K.~R., Selvan, A., Dektiarev, M., Ghiasi, G., Guven, E., Gupta, H., Liu, B., Sharma, D., Shtacher, I.~H., Paul, S., Akerlund, O., Aubet, F.-X., Huang, T., Zhu, C., Zhu, E., Teixeira, E., Fritze, M., Bertolini, F., Marinescu, L.-E., Bölle, M., Paulus, D., Gupta, K., Latkar, T., Chang, M., Sanders, J., Wilson, R., Wu, X., Tan, Y.-X., Thiet, L.~N., Doshi, T., Lall, S., Mishra, S., Chen, W., Luong, T., Benjamin, S., Lee, J., Andrejczuk, E., Rabiej, D., Ranjan, V., Styrc, K., Yin, P., Simon, J., Harriott, M.~R., Bansal, M., Robsky, A., Bacon, G., Greene, D., Mirylenka, D., Zhou, C., Sarvana, O., Goyal, A., Andermatt, S., Siegler, P., Horn, B., Israel, A., Pongetti, F., Chen, C.-W.~L., Selvatici, M., Silva, P., Wang, K., Tolins, J., Guu, K., Yogev, R., Cai, X., Agostini, A., Shah, M., Nguyen, H., Donnaile, N.~O., Pereira, S., Friso, L., Stambler, A., Kurzrok, A., Kuang, C., Romanikhin, Y., Geller, M., Yan, Z., Jang, K., Lee,
  C.-C., Fica, W., Malmi, E., Tan, Q., Banica, D., Balle, D., Pham, R., Huang, Y., Avram, D., Shi, H., Singh, J., Hidey, C., Ahuja, N., Saxena, P., Dooley, D., Potharaju, S.~P., O'Neill, E., Gokulchandran, A., Foley, R., Zhao, K., Dusenberry, M., Liu, Y., Mehta, P., Kotikalapudi, R., Safranek-Shrader, C., Goodman, A., Kessinger, J., Globen, E., Kolhar, P., Gorgolewski, C., Ibrahim, A., Song, Y., Eichenbaum, A., Brovelli, T., Potluri, S., Lahoti, P., Baetu, C., Ghorbani, A., Chen, C., Crawford, A., Pal, S., Sridhar, M., Gurita, P., Mujika, A., Petrovski, I., Cedoz, P.-L., Li, C., Chen, S., Santo, N.~D., Goyal, S., Punjabi, J., Kappaganthu, K., Kwak, C., LV, P., Velury, S., Choudhury, H., Hall, J., Shah, P., Figueira, R., Thomas, M., Lu, M., Zhou, T., Kumar, C., Jurdi, T., Chikkerur, S., Ma, Y., Yu, A., Kwak, S., Ähdel, V., Rajayogam, S., Choma, T., Liu, F., Barua, A., Ji, C., Park, J.~H., Hellendoorn, V., Bailey, A., Bilal, T., Zhou, H., Khatir, M., Sutton, C., Rzadkowski, W., Macintosh, F., Shagin, K.,
  Medina, P., Liang, C., Zhou, J., Shah, P., Bi, Y., Dankovics, A., Banga, S., Lehmann, S., Bredesen, M., Lin, Z., Hoffmann, J.~E., Lai, J., Chung, R., Yang, K., Balani, N., Bražinskas, A., Sozanschi, A., Hayes, M., Alcalde, H.~F., Makarov, P., Chen, W., Stella, A., Snijders, L., Mandl, M., Kärrman, A., Nowak, P., Wu, X., Dyck, A., Vaidyanathan, K., R, R., Mallet, J., Rudominer, M., Johnston, E., Mittal, S., Udathu, A., Christensen, J., Verma, V., Irving, Z., Santucci, A., Elsayed, G., Davoodi, E., Georgiev, M., Tenney, I., Hua, N., Cideron, G., Leurent, E., Alnahlawi, M., Georgescu, I., Wei, N., Zheng, I., Scandinaro, D., Jiang, H., Snoek, J., Sundararajan, M., Wang, X., Ontiveros, Z., Karo, I., Cole, J., Rajashekhar, V., Tumeh, L., Ben-David, E., Jain, R., Uesato, J., Datta, R., Bunyan, O., Wu, S., Zhang, J., Stanczyk, P., Zhang, Y., Steiner, D., Naskar, S., Azzam, M., Johnson, M., Paszke, A., Chiu, C.-C., Elias, J.~S., Mohiuddin, A., Muhammad, F., Miao, J., Lee, A., Vieillard, N., Park, J., Zhang, J.,
  Stanway, J., Garmon, D., Karmarkar, A., Dong, Z., Lee, J., Kumar, A., Zhou, L., Evens, J., Isaac, W., Irving, G., Loper, E., Fink, M., Arkatkar, I., Chen, N., Shafran, I., Petrychenko, I., Chen, Z., Jia, J., Levskaya, A., Zhu, Z., Grabowski, P., Mao, Y., Magni, A., Yao, K., Snaider, J., Casagrande, N., Palmer, E., Suganthan, P., Castaño, A., Giannoumis, I., Kim, W., Rybiński, M., Sreevatsa, A., Prendki, J., Soergel, D., Goedeckemeyer, A., Gierke, W., Jafari, M., Gaba, M., Wiesner, J., Wright, D.~G., Wei, Y., Vashisht, H., Kulizhskaya, Y., Hoover, J., Le, M., Li, L., Iwuanyanwu, C., Liu, L., Ramirez, K., Khorlin, A., Cui, A., LIN, T., Wu, M., Aguilar, R., Pallo, K., Chakladar, A., Perng, G., Abellan, E.~A., Zhang, M., Dasgupta, I., Kushman, N., Penchev, I., Repina, A., Wu, X., van~der Weide, T., Ponnapalli, P., Kaplan, C., Simsa, J., Li, S., Dousse, O., Yang, F., Piper, J., Ie, N., Pasumarthi, R., Lintz, N., Vijayakumar, A., Andor, D., Valenzuela, P., Lui, M., Paduraru, C., Peng, D., Lee, K., Zhang, S.,
  Greene, S., Nguyen, D.~D., Kurylowicz, P., Hardin, C., Dixon, L., Janzer, L., Choo, K., Feng, Z., Zhang, B., Singhal, A., Du, D., McKinnon, D., Antropova, N., Bolukbasi, T., Keller, O., Reid, D., Finchelstein, D., Raad, M.~A., Crocker, R., Hawkins, P., Dadashi, R., Gaffney, C., Franko, K., Bulanova, A., Leblond, R., Chung, S., Askham, H., Cobo, L.~C., Xu, K., Fischer, F., Xu, J., Sorokin, C., Alberti, C., Lin, C.-C., Evans, C., Dimitriev, A., Forbes, H., Banarse, D., Tung, Z., Omernick, M., Bishop, C., Sterneck, R., Jain, R., Xia, J., Amid, E., Piccinno, F., Wang, X., Banzal, P., Mankowitz, D.~J., Polozov, A., Krakovna, V., Brown, S., Bateni, M., Duan, D., Firoiu, V., Thotakuri, M., Natan, T., Geist, M., tan Girgin, S., Li, H., Ye, J., Roval, O., Tojo, R., Kwong, M., Lee-Thorp, J., Yew, C., Sinopalnikov, D., Ramos, S., Mellor, J., Sharma, A., Wu, K., Miller, D., Sonnerat, N., Vnukov, D., Greig, R., Beattie, J., Caveness, E., Bai, L., Eisenschlos, J., Korchemniy, A., Tsai, T., Jasarevic, M., Kong, W., Dao,
  P., Zheng, Z., Liu, F., Yang, F., Zhu, R., Teh, T.~H., Sanmiya, J., Gladchenko, E., Trdin, N., Toyama, D., Rosen, E., Tavakkol, S., Xue, L., Elkind, C., Woodman, O., Carpenter, J., Papamakarios, G., Kemp, R., Kafle, S., Grunina, T., Sinha, R., Talbert, A., Wu, D., Owusu-Afriyie, D., Du, C., Thornton, C., Pont-Tuset, J., Narayana, P., Li, J., Fatehi, S., Wieting, J., Ajmeri, O., Uria, B., Ko, Y., Knight, L., Héliou, A., Niu, N., Gu, S., Pang, C., Li, Y., Levine, N., Stolovich, A., Santamaria-Fernandez, R., Goenka, S., Yustalim, W., Strudel, R., Elqursh, A., Deck, C., Lee, H., Li, Z., Levin, K., Hoffmann, R., Holtmann-Rice, D., Bachem, O., Arora, S., Koh, C., Yeganeh, S.~H., Põder, S., Tariq, M., Sun, Y., Ionita, L., Seyedhosseini, M., Tafti, P., Liu, Z., Gulati, A., Liu, J., Ye, X., Chrzaszcz, B., Wang, L., Sethi, N., Li, T., Brown, B., Singh, S., Fan, W., Parisi, A., Stanton, J., Koverkathu, V., Choquette-Choo, C.~A., Li, Y., Lu, T., Ittycheriah, A., Shroff, P., Varadarajan, M., Bahargam, S., Willoughby,
  R., Gaddy, D., Desjardins, G., Cornero, M., Robenek, B., Mittal, B., Albrecht, B., Shenoy, A., Moiseev, F., Jacobsson, H., Ghaffarkhah, A., Rivière, M., Walton, A., Crepy, C., Parrish, A., Zhou, Z., Farabet, C., Radebaugh, C., Srinivasan, P., van~der Salm, C., Fidjeland, A., Scellato, S., Latorre-Chimoto, E., Klimczak-Plucińska, H., Bridson, D., de~Cesare, D., Hudson, T., Mendolicchio, P., Walker, L., Morris, A., Mauger, M., Guseynov, A., Reid, A., Odoom, S., Loher, L., Cotruta, V., Yenugula, M., Grewe, D., Petrushkina, A., Duerig, T., Sanchez, A., Yadlowsky, S., Shen, A., Globerson, A., Webb, L., Dua, S., Li, D., Bhupatiraju, S., Hurt, D., Qureshi, H., Agarwal, A., Shani, T., Eyal, M., Khare, A., Belle, S.~R., Wang, L., Tekur, C., Kale, M.~S., Wei, J., Sang, R., Saeta, B., Liechty, T., Sun, Y., Zhao, Y., Lee, S., Nayak, P., Fritz, D., Vuyyuru, M.~R., Aslanides, J., Vyas, N., Wicke, M., Ma, X., Eltyshev, E., Martin, N., Cate, H., Manyika, J., Amiri, K., Kim, Y., Xiong, X., Kang, K., Luisier, F.,
  Tripuraneni, N., Madras, D., Guo, M., Waters, A., Wang, O., Ainslie, J., Baldridge, J., Zhang, H., Pruthi, G., Bauer, J., Yang, F., Mansour, R., Gelman, J., Xu, Y., Polovets, G., Liu, J., Cai, H., Chen, W., Sheng, X., Xue, E., Ozair, S., Angermueller, C., Li, X., Sinha, A., Wang, W., Wiesinger, J., Koukoumidis, E., Tian, Y., Iyer, A., Gurumurthy, M., Goldenson, M., Shah, P., Blake, M., Yu, H., Urbanowicz, A., Palomaki, J., Fernando, C., Durden, K., Mehta, H., Momchev, N., Rahimtoroghi, E., Georgaki, M., Raul, A., Ruder, S., Redshaw, M., Lee, J., Zhou, D., Jalan, K., Li, D., Hechtman, B., Schuh, P., Nasr, M., Milan, K., Mikulik, V., Franco, J., Green, T., Nguyen, N., Kelley, J., Mahendru, A., Hu, A., Howland, J., Vargas, B., Hui, J., Bansal, K., Rao, V., Ghiya, R., Wang, E., Ye, K., Sarr, J.~M., Preston, M.~M., Elish, M., Li, S., Kaku, A., Gupta, J., Pasupat, I., Juan, D.-C., Someswar, M., M., T., Chen, X., Amini, A., Fabrikant, A., Chu, E., Dong, X., Muthal, A., Buthpitiya, S., Jauhari, S., Hua, N.,
  Khandelwal, U., Hitron, A., Ren, J., Rinaldi, L., Drath, S., Dabush, A., Jiang, N.-J., Godhia, H., Sachs, U., Chen, A., Fan, Y., Taitelbaum, H., Noga, H., Dai, Z., Wang, J., Liang, C., Hamer, J., Ferng, C.-S., Elkind, C., Atias, A., Lee, P., Listík, V., Carlen, M., van~de Kerkhof, J., Pikus, M., Zaher, K., Müller, P., Zykova, S., Stefanec, R., Gatsko, V., Hirnschall, C., Sethi, A., Xu, X.~F., Ahuja, C., Tsai, B., Stefanoiu, A., Feng, B., Dhandhania, K., Katyal, M., Gupta, A., Parulekar, A., Pitta, D., Zhao, J., Bhatia, V., Bhavnani, Y., Alhadlaq, O., Li, X., Danenberg, P., Tu, D., Pine, A., Filippova, V., Ghosh, A., Limonchik, B., Urala, B., Lanka, C.~K., Clive, D., Sun, Y., Li, E., Wu, H., Hongtongsak, K., Li, I., Thakkar, K., Omarov, K., Majmundar, K., Alverson, M., Kucharski, M., Patel, M., Jain, M., Zabelin, M., Pelagatti, P., Kohli, R., Kumar, S., Kim, J., Sankar, S., Shah, V., Ramachandruni, L., Zeng, X., Bariach, B., Weidinger, L., Vu, T., Andreev, A., He, A., Hui, K., Kashem, S., Subramanya, A.,
  Hsiao, S., Hassabis, D., Kavukcuoglu, K., Sadovsky, A., Le, Q., Strohman, T., Wu, Y., Petrov, S., Dean, J., and Vinyals, O.
\newblock Gemini: A family of highly capable multimodal models, 2024{\natexlab{a}}.
\newblock URL \url{https://arxiv.org/abs/2312.11805}.

\bibitem[Team et~al.(2024{\natexlab{b}})Team, Georgiev, Lei, Burnell, Bai, Gulati, Tanzer, Vincent, Pan, Wang, Mariooryad, Ding, Geng, Alcober, Frostig, Omernick, Walker, Paduraru, Sorokin, Tacchetti, Gaffney, Daruki, Sercinoglu, Gleicher, Love, Voigtlaender, Jain, Surita, Mohamed, Blevins, Ahn, Zhu, Kawintiranon, Firat, Gu, Zhang, Rahtz, Faruqui, Clay, Gilmer, Co-Reyes, Penchev, Zhu, Morioka, Hui, Haridasan, Campos, Mahdieh, Guo, Hassan, Kilgour, Vezer, Cheng, de~Liedekerke, Goyal, Barham, Strouse, Noury, Adler, Sundararajan, Vikram, Lepikhin, Paganini, Garcia, Yang, Valter, Trebacz, Vodrahalli, Asawaroengchai, Ring, Kalb, Soares, Brahma, Steiner, Yu, Mentzer, He, Gonzalez, Xu, Kaufman, Shafey, Oh, Hennigan, van~den Driessche, Odoom, Lucic, Roelofs, Lall, Marathe, Chan, Ontanon, He, Teplyashin, Lai, Crone, Damoc, Ho, Riedel, Lenc, Yeh, Chowdhery, Xu, Kazemi, Amid, Petrushkina, Swersky, Khodaei, Chen, Larkin, Pinto, Yan, Badia, Patil, Hansen, Orr, Arnold, Grimstad, Dai, Douglas, Sinha, Yadav, Chen,
  Gribovskaya, Austin, Zhao, Patel, Komarek, Austin, Borgeaud, Friso, Goyal, Caine, Cao, Chung, Lamm, Barth-Maron, Kagohara, Olszewska, Chen, Shivakumar, Agarwal, Godhia, Rajwar, Snaider, Dotiwalla, Liu, Barua, Ungureanu, Zhang, Batsaikhan, Wirth, Qin, Danihelka, Doshi, Chadwick, Chen, Jain, Le, Kar, Gurumurthy, Li, Sang, Liu, Lamprou, Munoz, Lintz, Mehta, Howard, Reynolds, Aroyo, Wang, Blanco, Cassirer, Griffith, Das, Lee, Sygnowski, Fisher, Besley, Powell, Ahmed, Paulus, Reitter, Borsos, Joshi, Pope, Hand, Selo, Jain, Sethi, Goel, Makino, May, Yang, Schalkwyk, Butterfield, Hauth, Goldin, Hawkins, Senter, Brin, Woodman, Ritter, Noland, Giang, Bolina, Lee, Blyth, Mackinnon, Reid, Sarvana, Silver, Chen, Wang, Maggiore, Chang, Attaluri, Thornton, Chiu, Bunyan, Levine, Chung, Eltyshev, Si, Lillicrap, Brady, Aggarwal, Wu, Xu, McIlroy, Badola, Sandhu, Moreira, Stokowiec, Hemsley, Li, Tudor, Shyam, Rahimtoroghi, Haykal, Sprechmann, Zhou, Mincu, Li, Addanki, Krishna, Wu, Frechette, Eyal, Dafoe, Lacey, Whang,
  Avrahami, Zhang, Taropa, Lin, Toyama, Rutherford, Sano, Choe, Tomala, Safranek-Shrader, Kassner, Pajarskas, Harvey, Sechrist, Fortunato, Lyu, Elsayed, Kuang, Lottes, Chu, Jia, Chen, Humphreys, Baumli, Tao, Samuel, dos Santos, Andreassen, Rakićević, Grewe, Kumar, Winkler, Caton, Brock, Dalmia, Sheahan, Barr, Miao, Natsev, Devlin, Behbahani, Prost, Sun, Myaskovsky, Pillai, Hurt, Lazaridou, Xiong, Zheng, Pardo, Li, Horgan, Stanton, Ambar, Xia, Lince, Wang, Mustafa, Webson, Lee, Anil, Wicke, Dozat, Sinha, Piqueras, Dabir, Upadhyay, Boral, Hendricks, Fry, Djolonga, Su, Walker, Labanowski, Huang, Misra, Chen, Skerry-Ryan, Singh, Rijhwani, Yu, Castro-Ros, Changpinyo, Datta, Bagri, Hrafnkelsson, Maggioni, Zheng, Sulsky, Hou, Paine, Yang, Riesa, Rogozinska, Marcus, Badawy, Zhang, Wang, Miller, Greer, Sjos, Nova, Zen, Chaabouni, Rosca, Jiang, Chen, Liu, Sainath, Krikun, Polozov, Lespiau, Newlan, Cankara, Kwak, Xu, Chen, Coenen, Meyer, Tsihlas, Ma, Gottweis, Xing, Gu, Miao, Frank, Cankara, Ganapathy, Dasgupta,
  Hughes-Fitt, Chen, Reid, Rong, Fan, van Amersfoort, Zhuang, Cohen, Gu, Mohananey, Ilic, Tobin, Wieting, Bortsova, Thacker, Wang, Caveness, Chiu, Sezener, Kaskasoli, Baker, Millican, Elhawaty, Aisopos, Lebsack, Byrd, Dai, Jia, Wiethoff, Davoodi, Weston, Yagati, Ahuja, Gao, Pundak, Zhang, Azzam, Sim, Caelles, Keeling, Sharma, Swing, Li, Liu, Bostock, Bansal, Nado, Anand, Lipschultz, Karmarkar, Proleev, Ittycheriah, Yeganeh, Polovets, Faust, Sun, Rrustemi, Li, Shivanna, Liu, Welty, Lebron, Baddepudi, Krause, Parisotto, Soricut, Xu, Bloxwich, Johnson, Neyshabur, Mao-Jones, Wang, Ramasesh, Abbas, Guez, Segal, Nguyen, Svensson, Hou, York, Milan, Bridgers, Gworek, Tagliasacchi, Lee-Thorp, Chang, Guseynov, Hartman, Kwong, Zhao, Kashem, Cole, Miech, Tanburn, Phuong, Pavetic, Cevey, Comanescu, Ives, Yang, Du, Li, Zhang, Iinuma, Hu, Roy, Bijwadia, Zhu, Martins, Saputro, Gergely, Zheng, Jia, Antonoglou, Sadovsky, Gu, Bi, Andreev, Samangooei, Khan, Kocisky, Filos, Kumar, Bishop, Yu, Hodkinson, Mittal, Shah, Moufarek,
  Cheng, Bloniarz, Lee, Pejman, Michel, Spencer, Feinberg, Xiong, Savinov, Smith, Shakeri, Tran, Chesus, Bohnet, Tucker, von Glehn, Muir, Mao, Kazawa, Slone, Soparkar, Shrivastava, Cobon-Kerr, Sharman, Pavagadhi, Araya, Misiunas, Ghelani, Laskin, Barker, Li, Briukhov, Houlsby, Glaese, Lakshminarayanan, Schucher, Tang, Collins, Lim, Feng, Recasens, Lai, Magni, Cao, Siddhant, Ashwood, Orbay, Dehghani, Brennan, He, Xu, Gao, Saroufim, Molloy, Wu, Arnold, Chang, Schrittwieser, Buchatskaya, Radpour, Polacek, Giordano, Bapna, Tokumine, Hellendoorn, Sottiaux, Cogan, Severyn, Saleh, Thakoor, Shefey, Qiao, Gaba, yiin Chang, Swanson, Zhang, Lee, Rubenstein, Song, Kwiatkowski, Koop, Kannan, Kao, Schuh, Stjerngren, Ghiasi, Gibson, Vilnis, Yuan, Ferreira, Kamath, Klimenko, Franko, Xiao, Bhattacharya, Patel, Wang, Morris, Strudel, Sharma, Choy, Hashemi, Landon, Finkelstein, Jhakra, Frye, Barnes, Mauger, Daun, Baatarsukh, Tung, Farhan, Michalewski, Viola, de~Chaumont~Quitry, Lan, Hudson, Wang, Fischer, Zheng, White, Dragan,
  baptiste Alayrac, Ni, Pritzel, Iwanicki, Isard, Bulanova, Zilka, Dyer, Sachan, Srinivasan, Muckenhirn, Cai, Mandhane, Tariq, Rae, Wang, Ayoub, FitzGerald, Zhao, Han, Alberti, Garrette, Krishnakumar, Gimenez, Levskaya, Sohn, Matak, Iturrate, Chang, Xiang, Cao, Ranka, Brown, Hutter, Mirrokni, Chen, Yao, Egyed, Galilee, Liechty, Kallakuri, Palmer, Ghemawat, Liu, Tao, Thornton, Green, Jasarevic, Lin, Cotruta, Tan, Fiedel, Yu, Chi, Neitz, Heitkaemper, Sinha, Zhou, Sun, Kaed, Hulse, Mishra, Georgaki, Kudugunta, Farabet, Shafran, Vlasic, Tsitsulin, Ananthanarayanan, Carin, Su, Sun, V, Carvajal, Broder, Comsa, Repina, Wong, Chen, Hawkins, Filonov, Loher, Hirnschall, Wang, Ye, Burns, Cate, Wright, Piccinini, Zhang, Lin, Gog, Kulizhskaya, Sreevatsa, Song, Cobo, Iyer, Tekur, Garrido, Xiao, Kemp, Zheng, Li, Agarwal, Ngani, Goshvadi, Santamaria-Fernandez, Fica, Chen, Gorgolewski, Sun, Garg, Ye, Eslami, Hua, Simon, Joshi, Kim, Tenney, Potluri, Thiet, Yuan, Luisier, Chronopoulou, Scellato, Srinivasan, Chen, Koverkathu,
  Dalibard, Xu, Saeta, Anderson, Sellam, Fernando, Huot, Jung, Varadarajan, Quinn, Raul, Le, Habalov, Clark, Jalan, Bullard, Singhal, Luong, Wang, Rajayogam, Eisenschlos, Jia, Finchelstein, Yakubovich, Balle, Fink, Agarwal, Li, Dvijotham, Pal, Kang, Konzelmann, Beattie, Dousse, Wu, Crocker, Elkind, Jonnalagadda, Lee, Holtmann-Rice, Kallarackal, Liu, Vnukov, Vats, Invernizzi, Jafari, Zhou, Taylor, Prendki, Wu, Eccles, Liu, Kopparapu, Beaufays, Angermueller, Marzoca, Sarcar, Dib, Stanway, Perbet, Trdin, Sterneck, Khorlin, Li, Wu, Goenka, Madras, Goldshtein, Gierke, Zhou, Liu, Liang, White, Li, Singh, Bahargam, Epstein, Basu, Lao, Ozturel, Crous, Zhai, Lu, Tung, Gaur, Walton, Dixon, Zhang, Globerson, Uy, Bolt, Wiles, Nasr, Shumailov, Selvi, Piccinno, Aguilar, McCarthy, Khalman, Shukla, Galic, Carpenter, Villela, Zhang, Richardson, Martens, Bosnjak, Belle, Seibert, Alnahlawi, McWilliams, Singh, Louis, Ding, Popovici, Simicich, Knight, Mehta, Gupta, Shi, Fatehi, Mitrovic, Grills, Pagadora, Munkhdalai, Petrova,
  Eisenbud, Zhang, Yates, Mittal, Tripuraneni, Assael, Brovelli, Jain, Velimirovic, Akbulut, Mu, Macherey, Kumar, Xu, Qureshi, Comanici, Wiesner, Gong, Ruddock, Bauer, Felt, GP, Arnab, Zelle, Rothfuss, Rosgen, Shenoy, Seybold, Li, Mudigonda, Erdogan, Xia, Simsa, Michi, Yao, Yew, Kan, Caswell, Radebaugh, Elisseeff, Valenzuela, McKinney, Paterson, Cui, Latorre-Chimoto, Kim, Zeng, Durden, Ponnapalli, Sosea, Choquette-Choo, Manyika, Robenek, Vashisht, Pereira, Lam, Velic, Owusu-Afriyie, Lee, Bolukbasi, Parrish, Lu, Park, Venkatraman, Talbert, Rosique, Cheng, Sozanschi, Paszke, Kumar, Austin, Li, Salama, Perz, Kim, Dukkipati, Baryshnikov, Kaplanis, Sheng, Chervonyi, Unlu, de~Las~Casas, Askham, Tunyasuvunakool, Gimeno, Poder, Kwak, Miecnikowski, Mirrokni, Dimitriev, Parisi, Liu, Tsai, Shevlane, Kouridi, Garmon, Goedeckemeyer, Brown, Vijayakumar, Elqursh, Jazayeri, Huang, Carthy, Hoover, Kim, Kumar, Chen, Biles, Bingham, Rosen, Wang, Tan, Engel, Pongetti, de~Cesare, Hwang, Yu, Pullman, Narayanan, Levin, Gopal, Li,
  Aharoni, Trinh, Lo, Casagrande, Vij, Matthey, Ramadhana, Matthews, Carey, Johnson, Goranova, Shah, Ashraf, Dasgupta, Larsen, Wang, Vuyyuru, Jiang, Ijazi, Osawa, Smith, Boppana, Bilal, Koizumi, Xu, Altun, Shabat, Bariach, Korchemniy, Choo, Ronneberger, Iwuanyanwu, Zhao, Soergel, Hsieh, Cai, Iqbal, Sundermeyer, Chen, Bursztein, Malaviya, Biadsy, Shroff, Dhillon, Latkar, Dyer, Forbes, Nicosia, Nikolaev, Greene, Georgiev, Wang, Martin, Sedghi, Zhang, Banzal, Fritz, Rao, Wang, Zhang, Patraucean, Du, Mordatch, Jurin, Liu, Dubey, Mohan, Nowakowski, Ion, Wei, Tojo, Raad, Hudson, Keshava, Agrawal, Ramirez, Wu, Nguyen, Liu, Sewak, Petrini, Choi, Philips, Wang, Bica, Garg, Wilkiewicz, Agrawal, Li, Guo, Xue, Shaik, Leach, Khan, Wiesinger, Jerome, Chakladar, Wang, Ornduff, Abu, Ghaffarkhah, Wainwright, Cortes, Liu, Maynez, Terzis, Samangouei, Mansour, Kepa, Aubet, Algymr, Banica, Weisz, Orban, Senges, Andrejczuk, Geller, Santo, Anklin, Merey, Baeuml, Strohman, Bai, Petrov, Wu, Hassabis, Kavukcuoglu, Dean, and
  Vinyals]{georgievgemini15unlockingmultimodal}
Team, G., Georgiev, P., Lei, V.~I., Burnell, R., Bai, L., Gulati, A., Tanzer, G., Vincent, D., Pan, Z., Wang, S., Mariooryad, S., Ding, Y., Geng, X., Alcober, F., Frostig, R., Omernick, M., Walker, L., Paduraru, C., Sorokin, C., Tacchetti, A., Gaffney, C., Daruki, S., Sercinoglu, O., Gleicher, Z., Love, J., Voigtlaender, P., Jain, R., Surita, G., Mohamed, K., Blevins, R., Ahn, J., Zhu, T., Kawintiranon, K., Firat, O., Gu, Y., Zhang, Y., Rahtz, M., Faruqui, M., Clay, N., Gilmer, J., Co-Reyes, J., Penchev, I., Zhu, R., Morioka, N., Hui, K., Haridasan, K., Campos, V., Mahdieh, M., Guo, M., Hassan, S., Kilgour, K., Vezer, A., Cheng, H.-T., de~Liedekerke, R., Goyal, S., Barham, P., Strouse, D., Noury, S., Adler, J., Sundararajan, M., Vikram, S., Lepikhin, D., Paganini, M., Garcia, X., Yang, F., Valter, D., Trebacz, M., Vodrahalli, K., Asawaroengchai, C., Ring, R., Kalb, N., Soares, L.~B., Brahma, S., Steiner, D., Yu, T., Mentzer, F., He, A., Gonzalez, L., Xu, B., Kaufman, R.~L., Shafey, L.~E., Oh, J., Hennigan,
  T., van~den Driessche, G., Odoom, S., Lucic, M., Roelofs, B., Lall, S., Marathe, A., Chan, B., Ontanon, S., He, L., Teplyashin, D., Lai, J., Crone, P., Damoc, B., Ho, L., Riedel, S., Lenc, K., Yeh, C.-K., Chowdhery, A., Xu, Y., Kazemi, M., Amid, E., Petrushkina, A., Swersky, K., Khodaei, A., Chen, G., Larkin, C., Pinto, M., Yan, G., Badia, A.~P., Patil, P., Hansen, S., Orr, D., Arnold, S. M.~R., Grimstad, J., Dai, A., Douglas, S., Sinha, R., Yadav, V., Chen, X., Gribovskaya, E., Austin, J., Zhao, J., Patel, K., Komarek, P., Austin, S., Borgeaud, S., Friso, L., Goyal, A., Caine, B., Cao, K., Chung, D.-W., Lamm, M., Barth-Maron, G., Kagohara, T., Olszewska, K., Chen, M., Shivakumar, K., Agarwal, R., Godhia, H., Rajwar, R., Snaider, J., Dotiwalla, X., Liu, Y., Barua, A., Ungureanu, V., Zhang, Y., Batsaikhan, B.-O., Wirth, M., Qin, J., Danihelka, I., Doshi, T., Chadwick, M., Chen, J., Jain, S., Le, Q., Kar, A., Gurumurthy, M., Li, C., Sang, R., Liu, F., Lamprou, L., Munoz, R., Lintz, N., Mehta, H., Howard, H.,
  Reynolds, M., Aroyo, L., Wang, Q., Blanco, L., Cassirer, A., Griffith, J., Das, D., Lee, S., Sygnowski, J., Fisher, Z., Besley, J., Powell, R., Ahmed, Z., Paulus, D., Reitter, D., Borsos, Z., Joshi, R., Pope, A., Hand, S., Selo, V., Jain, V., Sethi, N., Goel, M., Makino, T., May, R., Yang, Z., Schalkwyk, J., Butterfield, C., Hauth, A., Goldin, A., Hawkins, W., Senter, E., Brin, S., Woodman, O., Ritter, M., Noland, E., Giang, M., Bolina, V., Lee, L., Blyth, T., Mackinnon, I., Reid, M., Sarvana, O., Silver, D., Chen, A., Wang, L., Maggiore, L., Chang, O., Attaluri, N., Thornton, G., Chiu, C.-C., Bunyan, O., Levine, N., Chung, T., Eltyshev, E., Si, X., Lillicrap, T., Brady, D., Aggarwal, V., Wu, B., Xu, Y., McIlroy, R., Badola, K., Sandhu, P., Moreira, E., Stokowiec, W., Hemsley, R., Li, D., Tudor, A., Shyam, P., Rahimtoroghi, E., Haykal, S., Sprechmann, P., Zhou, X., Mincu, D., Li, Y., Addanki, R., Krishna, K., Wu, X., Frechette, A., Eyal, M., Dafoe, A., Lacey, D., Whang, J., Avrahami, T., Zhang, Y., Taropa,
  E., Lin, H., Toyama, D., Rutherford, E., Sano, M., Choe, H., Tomala, A., Safranek-Shrader, C., Kassner, N., Pajarskas, M., Harvey, M., Sechrist, S., Fortunato, M., Lyu, C., Elsayed, G., Kuang, C., Lottes, J., Chu, E., Jia, C., Chen, C.-W., Humphreys, P., Baumli, K., Tao, C., Samuel, R., dos Santos, C.~N., Andreassen, A., Rakićević, N., Grewe, D., Kumar, A., Winkler, S., Caton, J., Brock, A., Dalmia, S., Sheahan, H., Barr, I., Miao, Y., Natsev, P., Devlin, J., Behbahani, F., Prost, F., Sun, Y., Myaskovsky, A., Pillai, T.~S., Hurt, D., Lazaridou, A., Xiong, X., Zheng, C., Pardo, F., Li, X., Horgan, D., Stanton, J., Ambar, M., Xia, F., Lince, A., Wang, M., Mustafa, B., Webson, A., Lee, H., Anil, R., Wicke, M., Dozat, T., Sinha, A., Piqueras, E., Dabir, E., Upadhyay, S., Boral, A., Hendricks, L.~A., Fry, C., Djolonga, J., Su, Y., Walker, J., Labanowski, J., Huang, R., Misra, V., Chen, J., Skerry-Ryan, R., Singh, A., Rijhwani, S., Yu, D., Castro-Ros, A., Changpinyo, B., Datta, R., Bagri, S., Hrafnkelsson,
  A.~M., Maggioni, M., Zheng, D., Sulsky, Y., Hou, S., Paine, T.~L., Yang, A., Riesa, J., Rogozinska, D., Marcus, D., Badawy, D.~E., Zhang, Q., Wang, L., Miller, H., Greer, J., Sjos, L.~L., Nova, A., Zen, H., Chaabouni, R., Rosca, M., Jiang, J., Chen, C., Liu, R., Sainath, T., Krikun, M., Polozov, A., Lespiau, J.-B., Newlan, J., Cankara, Z., Kwak, S., Xu, Y., Chen, P., Coenen, A., Meyer, C., Tsihlas, K., Ma, A., Gottweis, J., Xing, J., Gu, C., Miao, J., Frank, C., Cankara, Z., Ganapathy, S., Dasgupta, I., Hughes-Fitt, S., Chen, H., Reid, D., Rong, K., Fan, H., van Amersfoort, J., Zhuang, V., Cohen, A., Gu, S.~S., Mohananey, A., Ilic, A., Tobin, T., Wieting, J., Bortsova, A., Thacker, P., Wang, E., Caveness, E., Chiu, J., Sezener, E., Kaskasoli, A., Baker, S., Millican, K., Elhawaty, M., Aisopos, K., Lebsack, C., Byrd, N., Dai, H., Jia, W., Wiethoff, M., Davoodi, E., Weston, A., Yagati, L., Ahuja, A., Gao, I., Pundak, G., Zhang, S., Azzam, M., Sim, K.~C., Caelles, S., Keeling, J., Sharma, A., Swing, A., Li,
  Y., Liu, C., Bostock, C.~G., Bansal, Y., Nado, Z., Anand, A., Lipschultz, J., Karmarkar, A., Proleev, L., Ittycheriah, A., Yeganeh, S.~H., Polovets, G., Faust, A., Sun, J., Rrustemi, A., Li, P., Shivanna, R., Liu, J., Welty, C., Lebron, F., Baddepudi, A., Krause, S., Parisotto, E., Soricut, R., Xu, Z., Bloxwich, D., Johnson, M., Neyshabur, B., Mao-Jones, J., Wang, R., Ramasesh, V., Abbas, Z., Guez, A., Segal, C., Nguyen, D.~D., Svensson, J., Hou, L., York, S., Milan, K., Bridgers, S., Gworek, W., Tagliasacchi, M., Lee-Thorp, J., Chang, M., Guseynov, A., Hartman, A.~J., Kwong, M., Zhao, R., Kashem, S., Cole, E., Miech, A., Tanburn, R., Phuong, M., Pavetic, F., Cevey, S., Comanescu, R., Ives, R., Yang, S., Du, C., Li, B., Zhang, Z., Iinuma, M., Hu, C.~H., Roy, A., Bijwadia, S., Zhu, Z., Martins, D., Saputro, R., Gergely, A., Zheng, S., Jia, D., Antonoglou, I., Sadovsky, A., Gu, S., Bi, Y., Andreev, A., Samangooei, S., Khan, M., Kocisky, T., Filos, A., Kumar, C., Bishop, C., Yu, A., Hodkinson, S., Mittal, S.,
  Shah, P., Moufarek, A., Cheng, Y., Bloniarz, A., Lee, J., Pejman, P., Michel, P., Spencer, S., Feinberg, V., Xiong, X., Savinov, N., Smith, C., Shakeri, S., Tran, D., Chesus, M., Bohnet, B., Tucker, G., von Glehn, T., Muir, C., Mao, Y., Kazawa, H., Slone, A., Soparkar, K., Shrivastava, D., Cobon-Kerr, J., Sharman, M., Pavagadhi, J., Araya, C., Misiunas, K., Ghelani, N., Laskin, M., Barker, D., Li, Q., Briukhov, A., Houlsby, N., Glaese, M., Lakshminarayanan, B., Schucher, N., Tang, Y., Collins, E., Lim, H., Feng, F., Recasens, A., Lai, G., Magni, A., Cao, N.~D., Siddhant, A., Ashwood, Z., Orbay, J., Dehghani, M., Brennan, J., He, Y., Xu, K., Gao, Y., Saroufim, C., Molloy, J., Wu, X., Arnold, S., Chang, S., Schrittwieser, J., Buchatskaya, E., Radpour, S., Polacek, M., Giordano, S., Bapna, A., Tokumine, S., Hellendoorn, V., Sottiaux, T., Cogan, S., Severyn, A., Saleh, M., Thakoor, S., Shefey, L., Qiao, S., Gaba, M., yiin Chang, S., Swanson, C., Zhang, B., Lee, B., Rubenstein, P.~K., Song, G., Kwiatkowski, T.,
  Koop, A., Kannan, A., Kao, D., Schuh, P., Stjerngren, A., Ghiasi, G., Gibson, G., Vilnis, L., Yuan, Y., Ferreira, F.~T., Kamath, A., Klimenko, T., Franko, K., Xiao, K., Bhattacharya, I., Patel, M., Wang, R., Morris, A., Strudel, R., Sharma, V., Choy, P., Hashemi, S.~H., Landon, J., Finkelstein, M., Jhakra, P., Frye, J., Barnes, M., Mauger, M., Daun, D., Baatarsukh, K., Tung, M., Farhan, W., Michalewski, H., Viola, F., de~Chaumont~Quitry, F., Lan, C.~L., Hudson, T., Wang, Q., Fischer, F., Zheng, I., White, E., Dragan, A., baptiste Alayrac, J., Ni, E., Pritzel, A., Iwanicki, A., Isard, M., Bulanova, A., Zilka, L., Dyer, E., Sachan, D., Srinivasan, S., Muckenhirn, H., Cai, H., Mandhane, A., Tariq, M., Rae, J.~W., Wang, G., Ayoub, K., FitzGerald, N., Zhao, Y., Han, W., Alberti, C., Garrette, D., Krishnakumar, K., Gimenez, M., Levskaya, A., Sohn, D., Matak, J., Iturrate, I., Chang, M.~B., Xiang, J., Cao, Y., Ranka, N., Brown, G., Hutter, A., Mirrokni, V., Chen, N., Yao, K., Egyed, Z., Galilee, F., Liechty, T.,
  Kallakuri, P., Palmer, E., Ghemawat, S., Liu, J., Tao, D., Thornton, C., Green, T., Jasarevic, M., Lin, S., Cotruta, V., Tan, Y.-X., Fiedel, N., Yu, H., Chi, E., Neitz, A., Heitkaemper, J., Sinha, A., Zhou, D., Sun, Y., Kaed, C., Hulse, B., Mishra, S., Georgaki, M., Kudugunta, S., Farabet, C., Shafran, I., Vlasic, D., Tsitsulin, A., Ananthanarayanan, R., Carin, A., Su, G., Sun, P., V, S., Carvajal, G., Broder, J., Comsa, I., Repina, A., Wong, W., Chen, W.~W., Hawkins, P., Filonov, E., Loher, L., Hirnschall, C., Wang, W., Ye, J., Burns, A., Cate, H., Wright, D.~G., Piccinini, F., Zhang, L., Lin, C.-C., Gog, I., Kulizhskaya, Y., Sreevatsa, A., Song, S., Cobo, L.~C., Iyer, A., Tekur, C., Garrido, G., Xiao, Z., Kemp, R., Zheng, H.~S., Li, H., Agarwal, A., Ngani, C., Goshvadi, K., Santamaria-Fernandez, R., Fica, W., Chen, X., Gorgolewski, C., Sun, S., Garg, R., Ye, X., Eslami, S. M.~A., Hua, N., Simon, J., Joshi, P., Kim, Y., Tenney, I., Potluri, S., Thiet, L.~N., Yuan, Q., Luisier, F., Chronopoulou, A.,
  Scellato, S., Srinivasan, P., Chen, M., Koverkathu, V., Dalibard, V., Xu, Y., Saeta, B., Anderson, K., Sellam, T., Fernando, N., Huot, F., Jung, J., Varadarajan, M., Quinn, M., Raul, A., Le, M., Habalov, R., Clark, J., Jalan, K., Bullard, K., Singhal, A., Luong, T., Wang, B., Rajayogam, S., Eisenschlos, J., Jia, J., Finchelstein, D., Yakubovich, A., Balle, D., Fink, M., Agarwal, S., Li, J., Dvijotham, D., Pal, S., Kang, K., Konzelmann, J., Beattie, J., Dousse, O., Wu, D., Crocker, R., Elkind, C., Jonnalagadda, S.~R., Lee, J., Holtmann-Rice, D., Kallarackal, K., Liu, R., Vnukov, D., Vats, N., Invernizzi, L., Jafari, M., Zhou, H., Taylor, L., Prendki, J., Wu, M., Eccles, T., Liu, T., Kopparapu, K., Beaufays, F., Angermueller, C., Marzoca, A., Sarcar, S., Dib, H., Stanway, J., Perbet, F., Trdin, N., Sterneck, R., Khorlin, A., Li, D., Wu, X., Goenka, S., Madras, D., Goldshtein, S., Gierke, W., Zhou, T., Liu, Y., Liang, Y., White, A., Li, Y., Singh, S., Bahargam, S., Epstein, M., Basu, S., Lao, L., Ozturel, A.,
  Crous, C., Zhai, A., Lu, H., Tung, Z., Gaur, N., Walton, A., Dixon, L., Zhang, M., Globerson, A., Uy, G., Bolt, A., Wiles, O., Nasr, M., Shumailov, I., Selvi, M., Piccinno, F., Aguilar, R., McCarthy, S., Khalman, M., Shukla, M., Galic, V., Carpenter, J., Villela, K., Zhang, H., Richardson, H., Martens, J., Bosnjak, M., Belle, S.~R., Seibert, J., Alnahlawi, M., McWilliams, B., Singh, S., Louis, A., Ding, W., Popovici, D., Simicich, L., Knight, L., Mehta, P., Gupta, N., Shi, C., Fatehi, S., Mitrovic, J., Grills, A., Pagadora, J., Munkhdalai, T., Petrova, D., Eisenbud, D., Zhang, Z., Yates, D., Mittal, B., Tripuraneni, N., Assael, Y., Brovelli, T., Jain, P., Velimirovic, M., Akbulut, C., Mu, J., Macherey, W., Kumar, R., Xu, J., Qureshi, H., Comanici, G., Wiesner, J., Gong, Z., Ruddock, A., Bauer, M., Felt, N., GP, A., Arnab, A., Zelle, D., Rothfuss, J., Rosgen, B., Shenoy, A., Seybold, B., Li, X., Mudigonda, J., Erdogan, G., Xia, J., Simsa, J., Michi, A., Yao, Y., Yew, C., Kan, S., Caswell, I., Radebaugh, C.,
  Elisseeff, A., Valenzuela, P., McKinney, K., Paterson, K., Cui, A., Latorre-Chimoto, E., Kim, S., Zeng, W., Durden, K., Ponnapalli, P., Sosea, T., Choquette-Choo, C.~A., Manyika, J., Robenek, B., Vashisht, H., Pereira, S., Lam, H., Velic, M., Owusu-Afriyie, D., Lee, K., Bolukbasi, T., Parrish, A., Lu, S., Park, J., Venkatraman, B., Talbert, A., Rosique, L., Cheng, Y., Sozanschi, A., Paszke, A., Kumar, P., Austin, J., Li, L., Salama, K., Perz, B., Kim, W., Dukkipati, N., Baryshnikov, A., Kaplanis, C., Sheng, X., Chervonyi, Y., Unlu, C., de~Las~Casas, D., Askham, H., Tunyasuvunakool, K., Gimeno, F., Poder, S., Kwak, C., Miecnikowski, M., Mirrokni, V., Dimitriev, A., Parisi, A., Liu, D., Tsai, T., Shevlane, T., Kouridi, C., Garmon, D., Goedeckemeyer, A., Brown, A.~R., Vijayakumar, A., Elqursh, A., Jazayeri, S., Huang, J., Carthy, S.~M., Hoover, J., Kim, L., Kumar, S., Chen, W., Biles, C., Bingham, G., Rosen, E., Wang, L., Tan, Q., Engel, D., Pongetti, F., de~Cesare, D., Hwang, D., Yu, L., Pullman, J.,
  Narayanan, S., Levin, K., Gopal, S., Li, M., Aharoni, A., Trinh, T., Lo, J., Casagrande, N., Vij, R., Matthey, L., Ramadhana, B., Matthews, A., Carey, C., Johnson, M., Goranova, K., Shah, R., Ashraf, S., Dasgupta, K., Larsen, R., Wang, Y., Vuyyuru, M.~R., Jiang, C., Ijazi, J., Osawa, K., Smith, C., Boppana, R.~S., Bilal, T., Koizumi, Y., Xu, Y., Altun, Y., Shabat, N., Bariach, B., Korchemniy, A., Choo, K., Ronneberger, O., Iwuanyanwu, C., Zhao, S., Soergel, D., Hsieh, C.-J., Cai, I., Iqbal, S., Sundermeyer, M., Chen, Z., Bursztein, E., Malaviya, C., Biadsy, F., Shroff, P., Dhillon, I., Latkar, T., Dyer, C., Forbes, H., Nicosia, M., Nikolaev, V., Greene, S., Georgiev, M., Wang, P., Martin, N., Sedghi, H., Zhang, J., Banzal, P., Fritz, D., Rao, V., Wang, X., Zhang, J., Patraucean, V., Du, D., Mordatch, I., Jurin, I., Liu, L., Dubey, A., Mohan, A., Nowakowski, J., Ion, V.-D., Wei, N., Tojo, R., Raad, M.~A., Hudson, D.~A., Keshava, V., Agrawal, S., Ramirez, K., Wu, Z., Nguyen, H., Liu, J., Sewak, M., Petrini,
  B., Choi, D., Philips, I., Wang, Z., Bica, I., Garg, A., Wilkiewicz, J., Agrawal, P., Li, X., Guo, D., Xue, E., Shaik, N., Leach, A., Khan, S.~M., Wiesinger, J., Jerome, S., Chakladar, A., Wang, A.~W., Ornduff, T., Abu, F., Ghaffarkhah, A., Wainwright, M., Cortes, M., Liu, F., Maynez, J., Terzis, A., Samangouei, P., Mansour, R., Kepa, T., Aubet, F.-X., Algymr, A., Banica, D., Weisz, A., Orban, A., Senges, A., Andrejczuk, E., Geller, M., Santo, N.~D., Anklin, V., Merey, M.~A., Baeuml, M., Strohman, T., Bai, J., Petrov, S., Wu, Y., Hassabis, D., Kavukcuoglu, K., Dean, J., and Vinyals, O.
\newblock Gemini 1.5: Unlocking multimodal understanding across millions of tokens of context, 2024{\natexlab{b}}.
\newblock URL \url{https://arxiv.org/abs/2403.05530}.

\bibitem[Wang et~al.(2023)Wang, Fu, Du, Gao, Huang, Liu, Chandak, Liu, Van~Katwyk, Deac, et~al.]{wang2023scientificdiscovery}
Wang, H., Fu, T., Du, Y., Gao, W., Huang, K., Liu, Z., Chandak, P., Liu, S., Van~Katwyk, P., Deac, A., et~al.
\newblock Scientific discovery in the age of artificial intelligence.
\newblock \emph{Nature}, 620\penalty0 (7972):\penalty0 47--60, 2023.

\bibitem[Wei et~al.(2022{\natexlab{a}})Wei, Tay, Bommasani, Raffel, Zoph, Borgeaud, Yogatama, Bosma, Zhou, Metzler, Chi, Hashimoto, Vinyals, Liang, Dean, and Fedus]{wei2022emergentabilitieslargelanguage}
Wei, J., Tay, Y., Bommasani, R., Raffel, C., Zoph, B., Borgeaud, S., Yogatama, D., Bosma, M., Zhou, D., Metzler, D., Chi, E.~H., Hashimoto, T., Vinyals, O., Liang, P., Dean, J., and Fedus, W.
\newblock Emergent abilities of large language models, 2022{\natexlab{a}}.
\newblock URL \url{https://arxiv.org/abs/2206.07682}.

\bibitem[Wei et~al.(2022{\natexlab{b}})Wei, Tay, Bommasani, Raffel, Zoph, Borgeaud, Yogatama, Bosma, Zhou, Metzler, et~al.]{wei2022emergent}
Wei, J., Tay, Y., Bommasani, R., Raffel, C., Zoph, B., Borgeaud, S., Yogatama, D., Bosma, M., Zhou, D., Metzler, D., et~al.
\newblock Emergent abilities of large language models.
\newblock \emph{arXiv preprint arXiv:2206.07682}, 2022{\natexlab{b}}.

\bibitem[Wu \& Lo(2024)Wu and Lo]{wu2024ushapedinverteduscalingemergent}
Wu, T.-Y. and Lo, P.-Y.
\newblock U-shaped and inverted-u scaling behind emergent abilities of large language models, 2024.
\newblock URL \url{https://arxiv.org/abs/2410.01692}.

\bibitem[Wu et~al.(2024)Wu, Sun, Li, Welleck, and Yang]{wu2024inferencescalinglawsempirical}
Wu, Y., Sun, Z., Li, S., Welleck, S., and Yang, Y.
\newblock Inference scaling laws: An empirical analysis of compute-optimal inference for problem-solving with language models, 2024.
\newblock URL \url{https://arxiv.org/abs/2408.00724}.

\bibitem[Xiong et~al.(2023)Xiong, Liu, Molybog, Zhang, Bhargava, Hou, Martin, Rungta, Sankararaman, Oguz, Khabsa, Fang, Mehdad, Narang, Malik, Fan, Bhosale, Edunov, Lewis, Wang, and Ma]{xiong2023effectivelongcontextscalingfoundation}
Xiong, W., Liu, J., Molybog, I., Zhang, H., Bhargava, P., Hou, R., Martin, L., Rungta, R., Sankararaman, K.~A., Oguz, B., Khabsa, M., Fang, H., Mehdad, Y., Narang, S., Malik, K., Fan, A., Bhosale, S., Edunov, S., Lewis, M., Wang, S., and Ma, H.
\newblock Effective long-context scaling of foundation models, 2023.
\newblock URL \url{https://arxiv.org/abs/2309.16039}.

\bibitem[Zhai et~al.(2022)Zhai, Kolesnikov, Houlsby, and Beyer]{zhai2022scaling}
Zhai, X., Kolesnikov, A., Houlsby, N., and Beyer, L.
\newblock Scaling vision transformers.
\newblock In \emph{Proceedings of the IEEE/CVF conference on computer vision and pattern recognition}, pp.\  12104--12113, 2022.

\bibitem[Zhang et~al.(2024)Zhang, Liu, Cherry, and Firat]{zhang2024scalingmeetsllmfinetuning}
Zhang, B., Liu, Z., Cherry, C., and Firat, O.
\newblock When scaling meets llm finetuning: The effect of data, model and finetuning method, 2024.
\newblock URL \url{https://arxiv.org/abs/2402.17193}.

\end{thebibliography}
\bibliographystyle{icml2025}


\clearpage
\appendix
\onecolumn

\section{Clarification of How Large Language Monkeys and Best-of-N Jailbreaking Sampled Data}
\label{app:sec:clarification_of_data_sampling}

In this manuscript, we used the phrasing of ``independent attempts," which is not fully correct. In this appendix section, we clarify why we chose this terminology, what likely impacts we believe this inaccuracy may have had on our results, and how to correct the paper accordingly.

Large Language Monkeys \citep{brown2024largelanguagemonkeysscaling} indeed drew $10,000$ independent attempts per problem, but Best-of-N Jailbreaking \citep{hughes2024bestofnjailbreaking} sampled data slightly different: for each problem, jailbreaking attempts were drawn until either a successful jailbreak was obtained or until a maximum limit of $10,000$ attempts was hit. Samples were also drawn in minibatches of size 60, making the (in)dependence of samples a bit tricky.

We omitted this nuance because it offers a second-order correction to our paper's main story while offering little additional insight. Neither of our theorems and none of our main text figures change. We suspect that this slightly different sampling procedure explains why, in Fig.~\ref{fig:comparison_power_law_exponents}, the estimated power law exponents between the least squares power law estimator and the distributional power law estimator deviate more significantly from identity for Best-of-N Jailbreaking than for Large Language Monkeys. A natural way to correct for this is to use a \href{https://en.wikipedia.org/wiki/Beta_negative_binomial_distribution}{beta-negative binomial distribution} rather than a \href{https://en.wikipedia.org/wiki/Beta-binomial_distribution}{beta-binomial distribution}, with an additional correction for the maximum number of attempts. For more information, please see Appendix~\ref{app:sec:mle_scaled_beta_negative_binomial}.

\clearpage

\section{Estimating Success Rates Using \citet{chen2021evaluatinglargelanguagemodels}'s Estimator}
\label{app:sec:chen2021estimator}

In this manuscript, we defined $\operatorname{pass_i@k}$ and $\operatorname{ASR_i@k}$ as:
\begin{align*}
    \operatorname{pass_i@k} &\defeq \mathop{\mathbb{E}}_{k \text{ Attempts}} \big[ \mathbb{I}[\text{At least 1 attempt by the model solves the $i$-th problem}] \big]\\
    \operatorname{ASR_i@k} &\defeq \mathop{\mathbb{E}}_{k \text{ Attempts}} \big[ \mathbb{I}[\text{At least 1 attempt jailbreaks the model on the $i$-th prompt}] \big]
\end{align*}

Throughout this manuscript, to estimate $\operatorname{pass_i@k}$ and $\operatorname{ASR@k}$, we used the unbiased and lower variance estimator introduced by \citet{chen2021evaluatinglargelanguagemodels}: for the $i$-th problem, we sampled $n \gg k$ attempts per problem, counted the number of successful attempts $c$, and then swept $k$ to compute an estimate of $\operatorname{pass_i@k}$ for different $k$ values:

\begin{equation}
    \widehat{\operatorname{pass_i@k}} = 1 - \frac{\binom{n-c}{k}}{\binom{n}{k}}
\end{equation}

Two comments: Firstly, $n$ as used here has no relationship with the number of problems in the benchmark (Sec.~\ref{sec:introduction}), and secondly, our notation differs slightly from that of \citet{chen2021evaluatinglargelanguagemodels}, but the ideas are consistent. A numerically stable Python implementation of the estimator is provided in Fig.~\ref{app:fig:chen_success_rate_estimator_code}:

\begin{figure}[h!]
    \centering
    \begin{lstlisting}[language=Python, 
        basicstyle=\ttfamily,
        commentstyle=\itshape]
    def estimate_success_rate_at_k_per_problem(n: int, c: int, k: int) -> float:
        """
        :param n: number of total attempts on this problem.
        :param c: number of correct attempts on this problem.
        :param k: k in pass_i@$k$.
        """
        if n - c < k: return 1.0
        return 1.0 - np.prod(1.0 - k / np.arange(n - c + 1, n + 1))
            
    \end{lstlisting}
    \caption{A numerically stable unbiased estimator of $\operatorname{pass_i@k}$, introduced by \citet{chen2021evaluatinglargelanguagemodels}.}
    \label{app:fig:chen_success_rate_estimator_code}
\end{figure}

To reiterate a point made by \citet{chen2021evaluatinglargelanguagemodels}, estimating $\operatorname{pass_i@k}$ as $1 - (1 - \widehat{\operatorname{pass_i@1}})^k$ is biased (Fig.~\ref{app:fig:chen_success_rate_estimator_bias}).

\begin{figure}[h!]
    \centering
    \includegraphics[width=\linewidth]{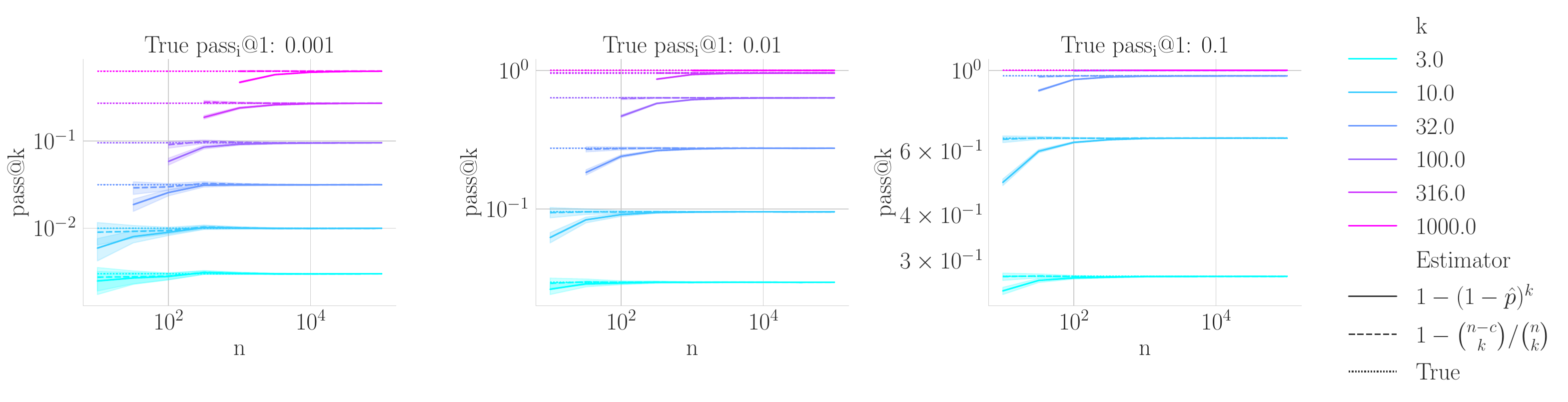}
    \caption{\textbf{Bias of Estimators of $\operatorname{pass_i@k}$.} Numerical simulations show that estimating $\operatorname{pass_i@k}$ as $1 - (1 - \widehat{\operatorname{pass_i@1}})^k$ is biased whereas the estimator of \citet{chen2021evaluatinglargelanguagemodels} is not. For a mathematical proof of unbiasedness, see the original paper.}
    \label{app:fig:chen_success_rate_estimator_bias}
\end{figure}

\clearpage

\section{Fitting Power Laws to Large Language Monkeys and Best-of-N Jailbreaking}
\label{app:sec:power_law_fits_bon_and_llmonkey}

We fit power laws to a subset of data from Large Language Monkeys \citep{brown2024largelanguagemonkeysscaling} and from Best-of-N Jailbreaking \citep{hughes2024bestofnjailbreaking}, specifically Pythia language models \citep{biderman2023pythia} on the MATH benchmark \citep{hendrycks2021measuring} and frontier AI models -- Claude, GPT4 \citep{openai2024gpt4technicalreport}, Gemini \citep{anil2024geminifamilyhighlycapable,georgievgemini15unlockingmultimodal} and Llama 3 \citep{grattafiori2024llama3herdmodels} -- on the HarmBench jailbreaking benchmark \citep{mazeika2024harmbenchstandardizedevaluationframework}. We show the functional forms and the fit parameters in Table~\ref{tab:power-law-params-llmonkeys} and Table~\ref{tab:power-law-params-bon-jailbreaking} respectively. To fit the parameters, for Large Language Monkeys, we simply minimized the squared error between the actual and predicted $-\log (\operatorname{pass_{\mathcal{D}}@k})$, and for Best-of-N Jailbreaking, we similarly minimized the squared error between the actual and predicted $-\log (\operatorname{ASR_{\mathcal{D}}@k}))$.

Note: Llama 3 8B IT does not exhibit power law scaling under Best-of-N Jailbreaking (shown in Fig.~\ref{fig:power_laws_repeat_sampling}, bottom).

\bigskip\bigskip\bigskip\bigskip\bigskip

\begin{table}[htbp]
\begin{minipage}[t]{0.48\textwidth}
\centering
\begin{tabular}{llrr}
\toprule
\toprule
Model & Benchmark & $a$ & $b$ \\
\midrule
Pythia 70M & MATH  & 8.026 & 0.194\\
Pythia 160M & MATH  & 6.591 & 0.280\\
Pythia 410M & MATH  & 5.524 & 0.286\\
Pythia 1B & MATH  & 5.452 & 0.315 \\
Pythia 2.8B & MATH  & 4.104 & 0.336 \\
Pythia 6.9B & MATH  & 4.255 & 0.348\\
Pythia 12B & MATH  & 4.113 & 0.370 \\
\bottomrule
\bottomrule
\end{tabular}
\caption{Large Language Monkeys \citep{brown2024largelanguagemonkeysscaling} fitted power law parameters on 128 mathematical problems from MATH \citep{hendrycks2021measuring}.\\
Functional Form: $-\log (\operatorname{pass_{\mathcal{D}}@k}) = a \, k^{-b}$.}
\label{tab:power-law-params-llmonkeys}
\end{minipage}
\hfill
\begin{minipage}[t]{0.48\textwidth}
\centering
\begin{tabular}{llrr}
\toprule
\toprule
Model & Modality & $a$ & $b$ \\
\midrule
Claude 3.5 Opus & Text & 2.630 & 0.448  \\
Claude 3.5 Sonnet & Text  & 3.436 & 0.312 \\
GPT4o & Text & 3.639 & 0.395\\
GPT4o Mini & Text & 3.637 & 0.492  \\
Gemini 1.5 Flash & Text  & 6.158 & 0.303 \\
Gemini 1.5 Pro & Text  & 6.296 & 0.256 \\
Llama 3 8B IT & Text & -- & -- \\
\bottomrule
\bottomrule
\end{tabular}
\caption{Best-of-N Jailbreaking \citep{hughes2024bestofnjailbreaking} fitted power law parameters on text jailbreak prompts from HarmBench \citep{mazeika2024harmbenchstandardizedevaluationframework}.\\
Functional Form: $-\log (\operatorname{ASR_{\mathcal{D}}@k}) = a \, k^{-b}$.\\
Note: Llama 3 8B Instruction Tuned (IT) does not exhibit power law scaling.
}
\label{tab:power-law-params-bon-jailbreaking}
\end{minipage}
\end{table}

\clearpage

\section{Mathematical Equivalence Between Coverage and Average Success Rate}
\label{app:sec:coverage_pass_at_k}

\citet{brown2024largelanguagemonkeysscaling} and \citet{hughes2024bestofnjailbreaking} phrase their research in terms of ``coverage", defined as the fraction of problems that can be solved or the fraction of prompts that can jailbreak a model, but as \citet{brown2024largelanguagemonkeysscaling} comment and we here derive, the coverage is mathematically equivalent to the average $\operatorname{pass_i@k}$ (equivalently, $\operatorname{ASR@k}$. due to two simple probabilistic primitives: (1) linearity of expectation, (2) the expectation of an indictor random variable of some event is the probability of said event and (3) the definition of $\operatorname{pass_i@k}$:

\begin{align*}
    \mathop{\raisebox{3pt}{$\mathbb{E}$}}_{\substack{\text{Prompts}\\\text{Attempts}}}\Big[\operatorname{Coverage} \Big] &\defeq \mathop{\raisebox{3pt}{$\mathbb{E}$}}_{\substack{\text{Problems}\\\text{Attempts}}}\Big[\text{Fraction of Problems Solved After $k$ Attempts}\Big]\\
    &= \mathop{\raisebox{3pt}{$\mathbb{E}$}}_{\substack{\text{Problems}}}\Bigg[ \mathop{\raisebox{3pt}{$\mathbb{E}$}}_{\text{Attempts} | \text{Problem}} \Big[ \mathbb{I}\big[\text{Problem Solved After $k$ Attempts} \big]\Big] \Bigg]\\
    &= \mathop{\raisebox{3pt}{$\mathbb{E}$}}_{\substack{\text{Problems}}}\Bigg[ \operatorname{pass_{problem}@k}\Big] \Bigg]\\
    &= \operatorname{pass_{\mathcal{D}}@k}
\end{align*}

In our work, we prefer phrasing along the lines of ``success rate" over ``coverage" because success rate avoids coverage's binary implication that each problem/prompt is either ``solved" or ``not solved".

\clearpage

\section{Aggregate Power Laws from a Probability Distribution over Exponential Functions}
\label{app:sec:power_laws_from_distr_over_exp}

\subsection{Preliminaries: Power Laws from Weighted Exponential Functions}
\label{app:sec:power_laws_from_distr_over_exp:background}

A known result is that power laws can emerge from appropriately weighted sums of exponential functions, e.g., \citep{bochud2006optimalapproximationspowerlawsexponentials,elkies2016powerlawexponentials, bousquet2020theoryuniversallearning}. For a concrete example with a short proof:
\begin{equation}
    x^{-r} = \frac{1}{\Gamma(r)} \int_0^{\infty} p^{r-1} \, e^{- p x} \, dp,
\end{equation}
where $\Gamma(r) \defeq \int_0^{\infty} s^{r-1} \, e^{-s} \, ds$ is the \href{https://en.wikipedia.org/wiki/Gamma_function}{Gamma function}. The proof is via u-substitution $u \defeq p \, x$:
\begin{align}
    \frac{1}{\Gamma(r)} \int_0^{\infty} p^{r-1} \, e^{- p x} \, dp
    &= \frac{1}{\Gamma(r)} \int_0^{\infty} (u/x)^{r-1} \, e^{- u} \, \frac{du}{x} \\
    &= \frac{1}{\Gamma(r)} \, x^{-r} \, \int_0^{\infty} u^{r-1} \, e^{- u} \, du\\
    &= \frac{1}{\Gamma(r)} \, x^{-r} \, \Gamma(r)\\
    &= x^{-r}
\end{align}

In our particular context, we are interested in the scaling with $k$ of the expected success rate over problems sampled from the benchmark's data distribution:
\begin{equation}
    \operatorname{pass_{\mathcal{D}}@k} \defeq \mathop{\raisebox{3pt}{$\mathbb{E}$}}_{\operatorname{pass_i@1} \sim \mathcal{D}} \Big[\operatorname{pass_i@k} \Big]
\end{equation}

distribution (over problems in a benchmark) of $\operatorname{pass_i@k}$ scores that yields power law scaling with respect to the number of attempts $k$:
\begin{equation}
    -\log \Bigg( \frac{1}{n} \sum_{i=1}^n \operatorname{pass_i@k} \Bigg) \approx a k^{-b}.
\end{equation}

for constants $a, b > 0$.

\subsection{Delta Distribution: $\operatorname{pass_i@1} \sim \delta(p), p \in (0, 1)$}
\label{app:sec:power_laws_from_distr_over_exp:delta_distribution}

To start with a negative result, we will show that not all distributions of the per-problem success probabilities $\operatorname{pass_i@1}$ yield aggregate power law scaling.
Suppose that the model's $\operatorname{pass_i@1}$ probabilities across the benchmarks' problems are all exactly $p \in (0, 1)$. For brevity, let $p_i \defeq \operatorname{pass_i@1}$. Then the aggregate success rate is:
\begin{align}
    \mathbb{E}_{p_i \sim \delta(p)}[\operatorname{pass_i@k}] &= 1 - \mathbb{E}_{p_i}[(1 - p_i)^k]\\
    &= \int_{0}^{1} \delta(p) \, (1 - p_i)^k \, dp_i\\
    &= (1 - p)^k.
\end{align}
Recalling that the expansion of $\log(\cdot)$ for small $x$ is $- \log( 1- x) = x + O(x^2)$, in our case, we obtain:
\begin{equation}
    - \log \Big(1 - \mathbb{E}_{p_i \sim \delta(p)}[\operatorname{pass@k}] \Big) = (1 - p)^k + O((1-p)^{2k}) = (1-p)^k + o((1-p)^k).
\end{equation}

Thus, in the large $k$ regime, we find the negative log aggregate success rate exhibits \textit{exponential} scaling with $k$ as we intuitively expect.

\subsection{Uniform Distribution: \(\operatorname{pass_i@1} \sim \mathrm{Uniform}(\alpha, \beta)\)}
\label{app:sec:power_laws_from_distr_over_exp:uniform_distribution}

Suppose \(\operatorname{pass_i@1}\) probabilities follow a uniform distribution \(\mathrm{Uniform}(\alpha, \beta)\) where \(0 \leq \alpha < \beta \leq 1\). The aggregate success rate after \(k\) attempts is defined as:
\[
\operatorname{pass_{\mathrm{Uniform}(\alpha, \beta)}}@k \defeq 1 - \mathbb{E}\bigl[(1 - p)^k\bigr].
\]

If \(p \sim \mathrm{Uniform}(\alpha, \beta)\), the expectation of \((1-p)^k\) is:
\[
\mathbb{E}\bigl[(1-p)^k\bigr] \;=\; \frac{1}{\beta - \alpha} \int_{\alpha}^{\beta} (1 - p)^k \, \mathrm{d}p.
\]
Evaluating the integral gives:
\[
\mathbb{E}\bigl[(1-p)^k\bigr] \;=\;
\frac{
    (1 - \alpha)^{k+1} - (1 - \beta)^{k+1}
}{
    (\beta - \alpha) \cdot (k+1)
}.
\]
Thus, the aggregate success rate becomes:
\[
\operatorname{pass_{\mathrm{Uniform}(\alpha, \beta)}}@k
\;=\;
1 \;-\; \frac{
    (1 - \alpha)^{k+1} - (1 - \beta)^{k+1}
}{
    (\beta - \alpha) \cdot (k+1)
}.
\]

\paragraph{Case A: \(\alpha > 0\)}
If \(\alpha > 0\), then both \((1-\alpha)\) and \((1-\beta)\) are strictly less than \(1\). As \(k \to \infty\), \((1-\alpha)^{k+1}\) and \((1-\beta)^{k+1}\) decay exponentially. Hence:
\[
\mathbb{E}\bigl[(1-p)^k\bigr] \sim \frac{
    (1 - \alpha)^{k+1}
}{
    (\beta - \alpha) \cdot (k+1)
},
\]
and \(\operatorname{pass_{\mathrm{Uniform}(\alpha, \beta)}}@k\) approaches \(1\) exponentially fast:
\[
\operatorname{pass_{\mathrm{Uniform}(\alpha, \beta)}}@k \sim 1 - \frac{
    (1 - \alpha)^{k+1}
}{
    (\beta - \alpha) \cdot (k+1)
}.
\]
Thus, the negative log of the aggregate success rate decays exponentially:
\[
-\log\bigl(\operatorname{pass_{\mathrm{Uniform}(\alpha, \beta)}}@k\bigr) \sim e^{-\Omega(k)}.
\]

\paragraph{Case B: \(\alpha = 0\)}
When \(\alpha = 0\), the uniform distribution is over \([0, \beta]\). In this case:
\[
\mathbb{E}\bigl[(1-p)^k\bigr] \;=\; \frac{1}{\beta} \cdot \frac{
    1 - (1 - \beta)^{k+1}
}{
    k+1
}.
\]
For large \(k\), \((1 - \beta)^{k+1}\) becomes exponentially small, and:
\[
\mathbb{E}\bigl[(1-p)^k\bigr] \sim \frac{1}{\beta} \cdot \frac{1}{k+1}.
\]
The aggregate success rate is then:
\[
\operatorname{pass_{\mathrm{Uniform}(0, \beta)}}@k \sim 1 - \frac{1}{\beta \cdot k}.
\]
The negative log exhibits power-law scaling:
\[
-\log\bigl(\operatorname{pass_{\mathrm{Uniform}(0, \beta)}}@k\bigr) \sim \frac{1}{\beta} \cdot \frac{1}{k}.
\]

\paragraph{Special Case: \(\mathrm{Uniform}(0, 1)\)}
If \(\beta = 1\), the distribution is uniform on \([0, 1]\). In this case:
\[
\mathbb{E}\bigl[(1-p)^k\bigr] \;=\; \frac{1}{k+1},
\]
and the success rate becomes:
\[
\operatorname{pass_{\mathrm{Uniform}(0, 1)}}@k \;=\; 1 - \frac{1}{k+1}.
\]
For large \(k\):
\[
-\log\bigl(\operatorname{pass_{\mathrm{Uniform}(0, 1)}}@k\bigr) \sim \frac{1}{k}.
\]

\subsection{2-Parameter Beta Distribution: $\operatorname{pass_i@1} \sim \operatorname{Beta}(\alpha, \beta)$}
\label{app:sec:power_laws_from_distr_over_exp:beta_distribution}

Suppose that the model's $\operatorname{pass_i@1}$ probabilities across the benchmark problems follow a Beta distribution:
$$\operatorname{pass_i@1} \sim \operatorname{Beta}(\alpha, \beta)$$
The probability density function of this distribution over the support $x \in (0, 1)$ is:
\begin{equation}
    f(x; \alpha, \beta) \defeq \frac{1}{B(\alpha, \beta)} \,
    x^{\alpha-1} \, (1-x)^{\beta-1},
\end{equation}
where $\alpha > 0, \beta > 0$ and $B(\cdot, \cdot)$ is the \href{https://en.wikipedia.org/wiki/Beta_function}{Beta function}. For brevity, let $p_i \defeq \operatorname{pass_i@1}$. Under our assumed Beta distribution:
\begin{align}
    \operatorname{pass_{\operatorname{Beta(\alpha, \beta)}}@k} &\defeq 1 - \mathbb{E}_{p_i \sim \operatorname{Beta(\alpha, \beta)}}[(1-p_i)^k]\\
    &= 1 - \int_0^1 \frac{p_i^{\alpha-1}(1-p_i)^{\beta-1}}{B(\alpha,\beta)} \; (1-p_i)^k \; dp_i\\
    &= 1 - \frac{\Gamma(\alpha + \beta)}{\Gamma(\alpha) \Gamma(\beta)} \frac{\Gamma(\alpha) \Gamma(\beta + k)}{\Gamma(\alpha + \beta + k)}
\end{align}
where $\Gamma(\cdot)$ is again the Gamma function. The $\Gamma(\alpha)$ terms cancel, and a standard asymptotic result of the gamma function for large $k$ tells us that:
\begin{equation}
    \frac{\Gamma(\beta +k)}{\Gamma(\alpha + \beta + k)} \sim k^{-\alpha},
\end{equation}
and thus:
\begin{equation}
    \frac{\Gamma(\alpha + \beta)}{\Gamma(\beta)}\frac{\Gamma(\beta +k)}{\Gamma(\alpha + \beta + k)} \sim \frac{\Gamma(\alpha + \beta)}{\Gamma(\beta)} k^{-\alpha}.
\end{equation}
Recalling again that the expansion of $\log(\cdot)$ for small $x$ is $- \log( 1- x) = x + O(x^2)$, in our case, we obtain:
\begin{equation}
-\log \Big( \operatorname{pass_{\mathcal{D}}@k} \Big)  = \frac{\Gamma(\alpha + \beta)}{\Gamma(\beta)} k^{-\alpha} + O(k^{-2\alpha}) = \frac{\Gamma(\alpha + \beta)}{\Gamma(\beta)} k^{-\alpha} + o(k^{-\alpha}).
\end{equation}

From this final result, we see that under a Beta distribution and in the large $k$ regime, the negative log aggregate success rate exhibits polynomial (power-law) scaling with $k$ for exponent $\alpha$

\subsection{Kumaraswamy Distribution: $\operatorname{pass_i@1} \sim \operatorname{Kumaraswamy}(\alpha, \beta)$}
\label{app:sec:power_laws_from_distr_over_exp:kumaraswamy_distribution}

Next, suppose the model's $\operatorname{pass_i@1}$ probabilities follow a Kumaraswamy distribution. The probability density function of this distribution over the support $x \in (0, 1)$ is:
\begin{equation}
    f(x; \alpha, \beta) \defeq \alpha \, \beta \, x^{\alpha-1} \, (1-x^\alpha)^{\beta-1}
\end{equation}
Again for brevity, let $p_i \defeq \operatorname{pass_i@1}$. Under our assumed Kumaraswamy distribution:
\begin{align}
    \operatorname{pass_{\operatorname{Kumaraswamy(\alpha, \beta)}}@k} &\defeq 1 - \mathbb{E}_{p_i \sim \operatorname{Kumaraswamy}(\alpha, \beta)}[(1-p_i)^k]\\
    &= 1 - \int_{0}^{1} (1-p)^k \cdot \alpha \, \beta \, p^{\alpha-1} \, (1-p^{\alpha})^{\beta-1} dp.
\end{align}

Define the integral
\begin{equation}
\label{eq:I_k_kumaraswamy_def}
I_k
\defeq
\mathbb{E}\bigl((1-p)^k\bigr)
=
\int_{0}^{1}
(1 - x)^k
\,
\alpha\,\beta
\;
x^{\alpha-1}
\;
\bigl(1 - x^\alpha\bigr)^{\beta-1}
\;\mathrm{d}x.
\end{equation}
We aim to analyze \(I_k\) for large \(k\).  Notice that \((1 - x)^k\) is exponentially small in \(k\) unless \(x\) is very close to 0.  Thus, intuitively, most of the contribution to \(I_k\) arises from \(x \in [0,\,O(1/k)]\).

\paragraph{Step 1: Split the integral into two parts.}
Fix a constant \(c>0\).  Write 
\begin{equation*}
I_k
\;=\;
\int_{0}^{c/k}[\cdots]\,\mathrm{d}x
\;+\;
\int_{c/k}^{1}[\cdots]\,\mathrm{d}x
\;\;\defeq\;\;
I_{k,\mathrm{left}}
\;+\;
I_{k,\mathrm{right}},
\end{equation*}
where \([\cdots]\) indicates the same integrand.  
In the region \(x \in [c/k,1]\), we have \((1 - x)^k\le e^{-k\,x}\le e^{-c}\).  Hence \(I_{k,\mathrm{right}}=O\bigl(e^{-c}\bigr)\).  Since \(c\) can be made arbitrarily large, \(I_{k,\mathrm{right}}\) becomes negligible compared to any polynomial in \(1/k\).

\paragraph{Step 2: Approximate the integrand in the small-$x$ region.}
On \([0,c/k]\), we use the approximation \(\log(1-x)=-x + O(x^2)\).  Thus 
\[
(1-x)^k 
=
\exp \big( k \log(1-x) \big)
=
\exp \big( -k\,x + O(k\,x^2) \big).
\]
Since \(x \leq c/k\) implies \(k\,x^2 \leq c^2/k = O (1/k)\), and $\exp(\epsilon) = 1 + O(\epsilon)$, we get
\[
(1 - x)^k
=
\exp ( -k\,x ) \exp(O(1/k))
=
\exp ( -k\,x ) \, 
\big(
1 + O \bigl(\tfrac{1}{k}\bigr)
\big).
\]
Furthermore, since $(1-y)^m = 1 - m y + O(y^2)$, for small \(x\)
\[
\,(1 - x^\alpha)^{\beta-1} = 1 - (\beta - 1) x^{\alpha} + O(x^{2\alpha}) = 1 + O \bigl(x^\alpha\bigr).
\]
In the region \(x \le c/k\), that error is \(O \bigl(k^{-\alpha}\bigr)\).  
Hence, within the small-\(x\) region, the integrand
\[
(1 - x)^k\;\alpha\,\beta\;x^{\alpha-1}\;\bigl(1 - x^\alpha\bigr)^{\beta-1}
\]
can be approximated by
\[
\alpha\,\beta\;x^{\alpha-1}\;e^{-k\,x}
\;+\;
O \Bigl(k^{-\alpha}\,x^{\alpha-1}\,e^{-k\,x}\Bigr).
\]
Thus
\begin{align*}
I_{k,\mathrm{left}}
\;=\;
\int_{0}^{c/k}
\alpha\,\beta\;x^{\alpha-1}\,e^{-k\,x}
\;\mathrm{d}x
\;+\;
O \Bigl(
k^{-\alpha}
\int_{0}^{c/k}
x^{\alpha-1}\,e^{-k\,x}\,\mathrm{d}x
\Bigr)
\;+\;
O \bigl(e^{-c}\bigr).
\end{align*}

\paragraph{Step 3: Substitution \(\,u \defeq k\,x\).}
To handle
\(\int_{0}^{c/k}x^{\alpha-1}e^{-k\,x}\,\mathrm{d}x\),
we substitute \(u=k\,x\).  Then \(x=u/k\), \(\mathrm{d}x=\mathrm{d}u/k\), and the upper limit \(x=c/k\) becomes \(u=c\).  Hence,
\begin{align*}
\int_{0}^{c/k}
x^{\alpha-1}\,e^{-k\,x}\,\mathrm{d}x
&=\;
\int_{0}^{c}
\Bigl(\tfrac{u}{k}\Bigr)^{\alpha-1}
\,e^{-\,u}
\;\frac{\mathrm{d}u}{k}
\\[6pt]
&=\;
k^{-\alpha}
\int_{0}^{c}
u^{\alpha-1}\,e^{-\,u}
\;\mathrm{d}u.
\end{align*}
As \(c\to\infty\), \(\int_{0}^{c}u^{\alpha-1}e^{-\,u}\,\mathrm{d}u\to\Gamma(\alpha)\), and for finite \(c\) the remainder is \(O \bigl(e^{-\,c}\bigr)\).  Therefore,
\[
\int_{0}^{1}
x^{\alpha-1}\,e^{-k\,x}\,\mathrm{d}x
=
k^{-\alpha}\,\Gamma(\alpha)
\;+\;
O \bigl(k^{-\alpha}\,e^{-\,c}\bigr),
\]
and absorbing the constant \(c\) into big-\(O\) notation gives 
\[
\int_{0}^{1}
x^{\alpha-1}\,e^{-k\,x}\,\mathrm{d}x
=
k^{-\alpha}\,\Gamma(\alpha)
\;+\;
O \bigl(k^{-\alpha-\epsilon}\bigr)
\quad
\text{for some }\epsilon>0.
\]
Multiplying by the factor \(\alpha\,\beta\), we deduce that
\[
I_k
\;=\;
\alpha\,\beta\,\Gamma(\alpha)\,k^{-\alpha}
\;+\;
O \bigl(k^{-\alpha-\epsilon}\bigr).
\]

\paragraph{Step 4: Final conclusion for the success rate.}
Recall
\(\operatorname{pass_{\mathrm{Kumaraswamy}(\alpha,\beta)}@k} = 1 - I_k\). Hence 
\[
\operatorname{pass_{\mathrm{Kumaraswamy}(\alpha,\beta)}@k}
\;=\;
1
\;-\;
\alpha\,\beta\,\Gamma(\alpha)\,k^{-\alpha}
\;+\;
O \bigl(k^{-\alpha-\epsilon}\bigr).
\]
Since this tends to 1, its negative log is governed by the magnitude of 
\(\,\alpha\,\beta\,\Gamma(\alpha)\,k^{-\alpha}\).  Using the expansion 
\(-\log(1-y) = y + O(y^2)\) as \(y\to0\), we get
\[
-\log \Big(
\operatorname{pass_{\mathrm{Kumaraswamy}(\alpha,\beta)}@k}
\Big)
\;=\;
\alpha\,\beta\,\Gamma(\alpha)\;k^{-\alpha}
\;+\;
o\bigl(k^{-\alpha}\bigr).
\]
That is precisely polynomial (power-law) decay in the negative log success rate with exponent \(\alpha\).

\subsection{Continuous Bernoulli Distribution: $\operatorname{pass_i@1} \sim \operatorname{Continous Bernoulli}(\lambda)$}
\label{app:sec:power_laws_from_distr_over_exp:continous_bernoulli_distribution}

Next, suppose the model's $\operatorname{pass_i@1}$ probabilities follow a Continuous Bernoulli distribution. The probability density function of this distribution over the support $x \in [0, 1]$ is:
\begin{align}
    f(x; \lambda) &\defeq C(\lambda) \lambda^{x} (1-\lambda)^{1 - x}\\
    C(\lambda) &\defeq \begin{cases}
        2 & \text{ if } \lambda = 1/2\\
        \frac{2 \tanh^{-1}{(1- 2 \lambda)}}{1 - 2 \lambda} & \text{otherwise}
    \end{cases}.
\end{align}
The density can equivalently be rewritten in a more convenient form for our purposes:
\begin{equation}
    f(x;\lambda) = C(\lambda) \lambda^x (1 - \lambda) (1 - \lambda)^{-x} = C(\lambda) (1 - \lambda) \Big(\frac{\lambda}{1 - \lambda} \Big)^x
\end{equation}
Because the individual success probability is low in our data, we shall consider the small $\lambda < 1/2$ regime. We follow the same approach as with the Kumaraswamy distribution.

\paragraph{Step 1: Write the aggregate pass rate.}
The aggregate pass rate is defined as:
\[
\operatorname{pass_{ContinuousBernoulli(\lambda)}@k} = 1 - I_k, \quad \text{where} \quad
I_k \defeq \int_0^1 (1-p)^k \, f(p; \lambda) \, dp.
\]
Substituting the density \(f(p; \lambda)\), we get:
\[
I_k = \int_0^1 (1-p)^k \, C(\lambda) \, \lambda^p \, (1-\lambda)^{1-p} \, dp.
\]

\paragraph{Step 2: Simplify using an exponential form.} Using the exponential rewriting:
\[
\lambda^p \, (1-\lambda)^{1-p} = (1-\lambda) \exp\Bigl(p \log\bigl(\tfrac{\lambda}{1-\lambda}\bigr)\Bigr),
\]
the integral becomes:
\[
I_k = C(\lambda) \, (1-\lambda) \int_0^1 (1-p)^k \exp\Bigl(p \log\bigl(\tfrac{\lambda}{1-\lambda}\bigr)\Bigr) \, dp.
\]

\paragraph{Step 3: Dominance of the small-\(p\) region.}
For large \(k\), \((1-p)^k\) decays exponentially unless \(p\) is close to 0. Thus, the main contribution to the integral arises from the region \(p \in [0, c/k]\), where \(c > 0\) is a constant. Decompose the integral:
\[
I_k = \int_0^{c/k} [\cdots] \, dp + \int_{c/k}^1 [\cdots] \, dp \defeq I_{k,\mathrm{left}} + I_{k,\mathrm{right}}.
\]
In the region \(p \in [c/k, 1]\), we have \((1-p)^k \leq e^{-k p} \leq e^{-c}\), making \(I_{k,\mathrm{right}} = O(e^{-c})\), which is negligible compared to \(1/k\). Thus, we focus on \(I_{k,\mathrm{left}}\):
\[
I_{k,\mathrm{left}} = C(\lambda) \, (1-\lambda) \int_0^{c/k} (1-p)^k \exp\Bigl(p \log\bigl(\tfrac{\lambda}{1-\lambda}\bigr)\Bigr) \, dp.
\]

\paragraph{Step 4: Approximate the integrand.}
For \(p \in [0, c/k]\), use the same approximations from the Kumaraswamy derivation:
\[
(1-p)^k = e^{-k p} \, \bigl(1 + O(p)\bigr), \quad
\exp\Bigl(p \log\bigl(\tfrac{\lambda}{1-\lambda}\bigr)\Bigr) = 1 + O(p).
\]
Thus, the integrand becomes:
\[
(1-p)^k \exp\Bigl(p \log\bigl(\tfrac{\lambda}{1-\lambda}\bigr)\Bigr)
= e^{-k p} \, \bigl(1 + O(p)\bigr).
\]

\paragraph{Step 5: Change of variables.}
Let \(u \defeq k p\), so \(p = u/k\) and \(dp = du/k\). The integral becomes:
\[
I_{k,\mathrm{left}} = C(\lambda) \, (1-\lambda) \int_0^c e^{-u} \, \bigl(1 + O(u/k)\bigr) \, \frac{du}{k}.
\]
Split the integral:
\[
I_{k,\mathrm{left}} = \frac{C(\lambda) \, (1-\lambda)}{k} \int_0^c e^{-u} \, du + O\Bigl(\frac{1}{k^2}\Bigr).
\]

As \(c \to \infty\), \(\int_0^c e^{-u} \, du \to 1\). Thus:
\[
I_{k,\mathrm{left}} = \frac{C(\lambda) \, (1-\lambda)}{k} + O\Bigl(\frac{1}{k^2}\Bigr).
\]
Since \(I_{k,\mathrm{right}} = O(e^{-c})\) is negligible, we have:
\[
I_k = \frac{C(\lambda) \, (1-\lambda)}{k} + O\Bigl(\frac{1}{k^2}\Bigr).
\]

\paragraph{Step 7: Final conclusion for the success rate.}
Recall:
\[
\operatorname{pass_{ContinuousBernoulli(\lambda)}@k} = 1 - I_k.
\]
For large \(k\), this implies:
\[
\operatorname{pass_{ContinuousBernoulli(\lambda)}@k} = 1 - \frac{C(\lambda) \, (1-\lambda)}{k} + O\Bigl(\frac{1}{k^2}\Bigr).
\]
Using the expansion \(-\log(1-y) = y + O(y^2)\) for small \(y\), we find:
\[
-\log\bigl(\operatorname{pass_{ContinuousBernoulli(\lambda)}@k}\bigr) = C(\lambda) \, (1-\lambda) k^{-1} + o(k^{-1}).
\]

That is precisely polynomial (power-law) decay in the negative log success rate with exponent \(-1\).

As a side comment, recall that $\tanh^{-1}(x) = \frac{1}{2} \log \big(\frac{1+ x}{1 - x} \big)$, the normalizing constant $C(\lambda)$ can be rewritten as:
\begin{equation}
    C(\lambda) = \frac{2}{1 - 2 \lambda} \frac{1}{2} \log \Bigg(\frac{1 + (1 - 2 \lambda)}{1 - (1 - 2 \lambda)} \Bigg) = \frac{1}{1 - 2 \lambda} \log \Big( \frac{1- \lambda}{\lambda} \Big).
\end{equation}
Thus, for small $\lambda$, note that $C(\lambda) \approx \log(1/\lambda) = -\log(\lambda)$.
For $k \ll -\log(\lambda)$, the $1/k$ formula is valid. However, near $k \approx -\log(\lambda)$, the leading term $-\log(\lambda)/k$ becomes of order 1, and for $k \gg -\log(\lambda)$, the success rate is now very close to 1. Consequently, we see that if $\lambda$ is very small, there is a soft cutoff scale around $k \approx -\log(\lambda)$.


\subsection{Any Continuous Distribution with $p(\operatorname{pass_i@1}) = c > 0$}

Suppose that the distribution over $\operatorname{pass_i@1}$ is continuous and has constant non-zero density near $0$:
\begin{equation}
    f(0) = c > 0
\end{equation}
Because the density is continuous at $0$ with $f(0) = c > 0$, there exist some $\delta > 0 $ such that:
\begin{equation}
    f(p) = c + O(p) \quad \quad \text{ for all } p \in [0, \delta].
\end{equation}
Because the small $\operatorname{pass_i@1}$ region dominates for large $k$, a similar argument to the Kumaraswamy argument and Continuous Bernoulli argument yields power law scaling with respect to $k$ with exponent $-1$:
\begin{equation}
    -\log \Big(\operatorname{pass_{\mathcal{D}}@k} \Big) = c \, k^{-1} + o(k^{-1}).
\end{equation}
This result is consistent with the Continuous Bernoulli, where $c$ is given by $f_{\operatorname{ContinuousBernoulli(\lambda)}}(0;\lambda) = C(\lambda) (1 - \lambda)$ for $\lambda < 1/2$. This result reveals that the Continous Bernoulli is just one instance of a larger family: any continuous distribution with non-zero constant density at $\operatorname{pass_i@1} = 0$ will exhibit power law scaling with exponent $-1$.

\subsection{Reciprocal Distribution: $\operatorname{pass_i@1} \sim \operatorname{Reciprocal}(a, b)$}
\label{app:sec:power_laws_from_distr_over_exp:reciprocal_distribution}

Next, suppose the model's $\operatorname{pass_i@1} \sim \operatorname{Reciprocal(a, b)}$ distribution with $0 < a < b < 1$. The probability density function of this distribution over the support $x \in [a, b]$ is:
\begin{equation}
    f(x; a, b) = \frac{1}{(\log(b) - \log(a)) \, x}
\end{equation}

As with the other distributions, the aggregate success rate after $k$ attempts is:
\[
\operatorname{pass_{\mathrm{Reciprocal}(a,b)}@k}
\;=\;
\mathbb{E}\bigl[\operatorname{pass_i@k}\bigr]
\;=\;
1 \;-\; 
I_k,\quad\text{where}\quad
I_k \;\defeq \; 
\int_{x=a}^{b}
(1-x)^k \,\frac{1}{(\log b-\log a)\,x}\,\mathrm{d}x.
\]
We aim to show that $I_k$ is on the order of 
\(\frac{(1-a)^k}{k}\). The main contribution to the integral arises from the vicinity of $x = a$, because $(1-x)^k$ decays rapidly as $x$ grows away from $a$.

\medskip
\noindent
\textbf{Step 1: Change of variable.}
Define $y \defeq x - a$, so the domain $x \in [a,b]$ becomes $y \in [0,b-a]$. Then 
\[
(1 - x)^k 
\;=\;
\bigl((1 - a) - y\bigr)^k,
\]
and
\[
I_k 
\;=\; 
\frac{1}{\log(b/a)}
\int_{y=0}^{\,b-a}
\bigl((1-a) - y\bigr)^k 
\;\frac{1}{a+y}
\,\mathrm{d}y.
\]

\medskip
\noindent
\textbf{Step 2: Expansion near $y=0$.}
For small $y$, write $(1-a)-y = (1-a)\bigl(1 - \tfrac{y}{1-a}\bigr)$; hence
\[
\log\bigl((1-a)-y\bigr)
\;=\;
\log(1-a) 
\;+\;
\log\Bigl(1 - \tfrac{y}{\,1-a\,}\Bigr).
\]
Using $\log(1-z) = -z + O(z^2)$ for small $z$, we get
\[
\log\bigl((1-a)-y\bigr)
\;=\;
\log(1-a)
\;-\;
\frac{y}{1-a}
\;+\;
O\bigl(\tfrac{y^2}{(1-a)^2}\bigr),
\]
so
\[
(1-a-y)^k
\;=\;
\exp\Bigl(
k \,\log(1-a)
\;-\;
k\,\tfrac{y}{\,1-a\,}
\;+\;
O\bigl(\tfrac{k\,y^2}{(1-a)^2}\bigr)
\Bigr).
\]
In particular, for $y$ up to $c/k$, the term $k\,y^2 = O(1)$ remains bounded, so
\[
(1-a-y)^k 
\;=\;
(1-a)^k \,\exp\Bigl(-\tfrac{k\,y}{\,1-a\,}\Bigr)
\bigl[\,1 + O\bigl(\tfrac1k\bigr)\bigr].
\]

\medskip
\noindent
\textbf{Step 3: The integral is dominated by $y \in [0,O(\tfrac1k)]$.}
For large $k$, $\exp\bigl(-\tfrac{k\,y}{\,1-a\,}\bigr)$ decays quickly once $y$ exceeds a multiple of $\tfrac{1-a}{k}$. Consequently, the integral from $y=c_0/k$ to $b-a$ is exponentially small in $k$. On $[0,c_0/k]$, we also have $(a+y)^{-1} = \frac{1}{a} + O\bigl(\tfrac1k\bigr)$. Thus
\[
I_k
\;=\;
\frac{1}{\log(b/a)}
\int_{y=0}^{c_0/k}
(1-a-y)^k
\;\frac{1}{a+y}
\,\mathrm{d}y
\;+\;
\text{(exponentially small tail)}.
\]
Substitute our approximation from Step 2 into the integrand:
\[
(1-a-y)^k \,\frac{1}{a+y}
\;=\;
(1-a)^k 
\,\exp\Bigl(-\tfrac{k\,y}{\,1-a\,}\Bigr)
\;\Bigl[\tfrac{1}{a}+O\bigl(\tfrac1k\bigr)\Bigr].
\]

\medskip
\noindent
\textbf{Step 4: Change variable $u = \frac{k\,y}{1-a}$.}
Then $y = \frac{(1-a)\,u}{k}$ and $\mathrm{d}y = \frac{1-a}{k}\,\mathrm{d}u$. The upper limit $y=c_0/k$ corresponds to $u=c_0\,\bigl(\tfrac{1-a}{1}\bigr)$, so
\[
\int_{y=0}^{c_0/k}
\exp\Bigl(-\tfrac{k\,y}{\,1-a\,}\Bigr)\,\mathrm{d}y
\;=\;
\int_{u=0}^{c_0\,(1-a)}
e^{-u}
\;\frac{1-a}{k}
\,\mathrm{d}u.
\]
Letting $c_0 \to \infty$ only contributes an $e^{-c_0(1-a)}$ factor to the tail, which vanishes. Hence
\[
\int_{y=0}^{\infty}
\exp\Bigl(-\tfrac{k\,y}{\,1-a\,}\Bigr)\,\mathrm{d}y
\;=\;
\frac{1-a}{\,k\,}
\int_{u=0}^{\infty}
e^{-u}\,\mathrm{d}u
\;=\;
\frac{1-a}{k}.
\]
Putting all factors together,
\[
I_k
\;=\;
\frac{1}{\log(b/a)}
\;(1-a)^k
\;\Bigl[\tfrac{1}{a}+O\bigl(\tfrac1k\bigr)\Bigr]
\;\frac{1-a}{k}
\;+\;
\text{(exponentially small in $k$)}.
\]
Thus in big-Theta form,
\[
I_k
\;=\;
\Theta\Bigl(\tfrac{(1-a)^k}{k}\Bigr).
\]

\medskip
\noindent
\textbf{Conclusion.}
Since 
\(\operatorname{pass_{\mathrm{Reciprocal}(a,b)}@k} 
= 
1 - I_k,\)
we get
\[
\operatorname{pass_{\mathrm{Reciprocal}(a,b)}@k}
\;=\;
1
\;-\;
\Theta\Bigl(\tfrac{(1-a)^k}{k}\Bigr).
\]
Moreover, using $-\log(1-y)=y + O(y^2)$ for small $y$, it follows that
\[
-\log\Bigl(\operatorname{pass_{\mathrm{Reciprocal}(a,b)}@k}\Bigr)
\;=\;
\Theta\Bigl(\tfrac{(1-a)^k}{k}\Bigr).
\]
Hence the negative log aggregate success rate converges to $1$ \emph{exponentially fast} in $k$, which is \emph{not} a power law in $k$.

\clearpage

\subsection*{Sufficient Condition for Power-Law Scaling in Negative Log of Aggregate Success}
\label{app:sec:power_laws_from_distr_over_exp:sufficiency}

\begin{theorem}
\label{thm:power-law-scaling}
Let $\mathcal{D}$ be a probability distribution on $[0,1]$ with PDF $f(p)$.  
Suppose there exist constants $b > 0$, $C > 0$, $\theta > 0$ and $\delta > 0$ such that, 
for all $0 < p < \delta$, we have 
\[
  f(p) \;=\; C\,p^{\,b-1} \;+\; O\bigl(p^{\,b-1+\theta}\bigr).
\]
Then, for large $k$,
\[
  1
  \;-\;
  \operatorname{pass_{\mathcal{D}}@k}
  \;=\;
  C\,\Gamma(b)\,k^{-b} \;+\; O\bigl(k^{-b-\min(\,1,\theta)}\bigr),
\]
which implies 
\[
  -\log\Bigl(\operatorname{pass_{\mathcal{D}}@k}\Bigr)
  \;=\;
  C\,\Gamma(b)\,k^{-b} \;+\; o\bigl(k^{-b}\bigr).
\]
Equivalently, including the leading constant),
\[
  -\log\bigl(\operatorname{pass_{\mathcal{D}}@k}\bigr)
  \;\sim\;
  C\,\Gamma(b) \;k^{-b}.
\]
\end{theorem}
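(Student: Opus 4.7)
The plan is to start from the integral representation
\[
1 - \operatorname{pass_{\mathcal{D}}@k} \;=\; \int_0^1 (1-p)^k \, f(p)\, dp,
\]
split the domain into a small-$p$ region $[0,\delta]$ where the hypothesized expansion of $f$ is valid and a ``bulk'' region $[\delta,1]$, and then perform a Laplace/Watson-style asymptotic analysis on the small-$p$ piece. The bulk contribution is easy: on $[\delta,1]$ the factor $(1-p)^k \leq (1-\delta)^k$ is exponentially small in $k$, and $f$ is integrable, so this piece contributes $O((1-\delta)^k)$, which is negligible compared to any power of $k$.

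Next I would substitute the expansion $f(p) = C p^{b-1} + O(p^{b-1+\theta})$ into the integral over $[0,\delta]$, producing a leading term $C \int_0^\delta (1-p)^k p^{b-1} dp$ and a correction $\int_0^\delta (1-p)^k \cdot O(p^{b-1+\theta})\, dp$. For the leading term, I would change variables $u = kp$, giving $k^{-b} \int_0^{k\delta} (1-u/k)^k u^{b-1} du$. The standard bound $(1-u/k)^k \leq e^{-u}$ combined with dominated convergence (on any finite $u$-interval $(1-u/k)^k \to e^{-u}$) yields $\int_0^{k\delta} (1-u/k)^k u^{b-1} du \to \Gamma(b)$, and the extension from $k\delta$ to $\infty$ costs only exponentially small error. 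So the leading term is $C\Gamma(b) k^{-b}$. The correction integral treated the same way contributes $O(k^{-b-\theta})$.

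The place I expect real care to be required is in quantifying the difference between $(1-u/k)^k$ and $e^{-u}$, because this is what produces the $k^{-b-1}$ term in the final error bound. Writing $(1-u/k)^k = e^{-u} \exp\bigl(-\sum_{j\geq 2} u^j / (j k^{j-1})\bigr) = e^{-u}\bigl(1 + O(u^2/k)\bigr)$ uniformly for $u \leq k\delta$ with $\delta$ small, and plugging this into the leading integral, yields a correction
\[
C k^{-b} \int_0^\infty e^{-u} u^{b-1} \cdot O(u^2/k) \, du \;=\; O(k^{-b-1}),
\]
since $\int_0^\infty e^{-u} u^{b+1} du = \Gamma(b+2) < \infty$. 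Combining with the $O(k^{-b-\theta})$ correction from the tail of $f$ gives the claimed $O(k^{-b-\min(1,\theta)})$ error, so
\[
1 - \operatorname{pass_{\mathcal{D}}@k} \;=\; C\Gamma(b) k^{-b} + O\bigl(k^{-b-\min(1,\theta)}\bigr).
\]

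The final step is to pass from $1 - \operatorname{pass_{\mathcal{D}}@k}$ to $-\log(\operatorname{pass_{\mathcal{D}}@k})$. Since the left-hand side tends to $0$ as $k\to\infty$, the expansion $-\log(1-y) = y + O(y^2)$ applies with $y = 1 - \operatorname{pass_{\mathcal{D}}@k} = \Theta(k^{-b})$, contributing an extra $O(k^{-2b})$ term which is absorbed into the stated error. This yields $-\log(\operatorname{pass_{\mathcal{D}}@k}) = C\Gamma(b) k^{-b} + o(k^{-b})$, completing the argument. The main obstacles are bookkeeping-style: making sure the $(1-p)^k$ versus $e^{-kp}$ comparison is uniform on the relevant range, and confirming that the error from the expansion of $f$ can be integrated against $(1-p)^k$ without a subtle issue at the endpoint $\delta$ (which is handled by the exponential decay of the tail).
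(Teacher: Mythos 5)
Your proposal is correct and follows the same Laplace/Watson-lemma strategy as the paper (peel off the small-$p$ region, substitute $u = kp$, recognize $\Gamma(b)$, account for the $(1-p)^k$ versus $e^{-kp}$ discrepancy and for the $O(p^{b-1+\theta})$ error in $f$). The one genuine difference is where you place the cutoff: you split at the \emph{fixed} point $\delta$, whereas the paper splits at the $k$-dependent point $c/k$ and then lets $c \to \infty$ after $k \to \infty$. Your version is arguably cleaner, since it avoids that double-limit bookkeeping and gives honest $O(\cdot)$ bounds with constants depending only on $\delta, b, \theta$; the price is that you must verify the ratio estimate $(1-u/k)^k = e^{-u}\bigl(1 + O(u^2/k)\bigr)$ \emph{uniformly} for $u \in [0, k\delta]$, which the paper sidesteps by working only on $[0, c/k]$ where $kp^2 = O(1/k)$. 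You correctly flag this as the point needing care, and it does go through: writing $(1-u/k)^k = e^{-u} e^{-S}$ with $S = \sum_{j\ge 2} u^j/(j k^{j-1}) \ge 0$, one has $S \le \dfrac{u^2}{2k(1-\delta)}$ for $u \le k\delta$, and since $S \ge 0$ the elementary inequality $|e^{-S}-1| \le S$ gives the uniform bound $\bigl|(1-u/k)^k - e^{-u}\bigr| \le e^{-u}\,\dfrac{u^2}{2k(1-\delta)}$ with no need to assume $u^2/k$ is small. Integrating this against $u^{b-1}$ indeed yields $O(k^{-b-1})$, and the $O(p^{b-1+\theta})$ piece of $f$ contributes $O(k^{-b-\theta})$, reproducing the stated $O(k^{-b-\min(1,\theta)})$; the final $-\log(1-y) = y + O(y^2)$ step with $y = \Theta(k^{-b})$ correctly lands in $o(k^{-b})$ rather than in the tighter error, matching the theorem's weaker claim for $-\log(\operatorname{pass_{\mathcal{D}}@k})$.
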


\begin{proof}
\textbf{Step 1. Decompose the key integral.}\\
Define
\[
  I_k
  \;\defeq \;
  1 \;-\;\operatorname{pass_{\mathcal{D}}@k} \;=\;
  \int_0^1 (1-p)^k \, f(p)\,\mathrm{d}p.
\]
For a positive constant $c>0$, split $I_k$:
\[
  I_k
  \;=\;
  \int_{\,0}^{\,c/k} (1-p)^k\,f(p)\,\mathrm{d}p
  \;\;+\;\;
  \int_{\,c/k}^{\,1} (1-p)^k\,f(p)\,\mathrm{d}p
  \;\;\defeq\;\;
  I_{k,\mathrm{left}} \;+\; I_{k,\mathrm{right}}.
\]

\paragraph{Right Tail Bound ($I_{k,\mathrm{right}}$).}
For $p \ge c/k$, observe $(1-p)^k \le e^{-k\,p} \le e^{-c}$.  Hence
\[
  I_{k,\mathrm{right}}
  \;=\;
  \int_{c/k}^{1} (1-p)^k\,f(p)\,\mathrm{d}p
  \;\le\;
  e^{-c}\,\int_0^1 f(p)\,\mathrm{d}p
  \;=\;
  e^{-c}.
\]
Since $c$ can be made arbitrarily large, $e^{-c}$ can be driven below \emph{any} power of $1/k$.  
Thus $I_{k,\mathrm{right}} = o\bigl(k^{-\alpha}\bigr)$ for any $\alpha>0$.  
We may therefore focus on
\[
  I_{k,\mathrm{left}}
  \;=\;
  \int_{\,0}^{\,c/k} (1-p)^k\,f(p)\,\mathrm{d}p,
\]
knowing that $I_{k,\mathrm{right}}$ is negligible in polynomial‐type estimates.

\bigskip
\textbf{Step 2. Use the assumed behavior of $f(p)$ near $p=0$.}\\
By hypothesis, for $p$ up to some $\delta>0$,
\[
  f(p)
  \;=\;
  C\,p^{\,b-1} + O\bigl(p^{\,b-1+\theta}\bigr).
\]
Choose $c/k < \delta$, so $p \le c/k < \delta$ for $p$ in the left integral.  Then
\[
  I_{k,\mathrm{left}}
  \;=\;
  \int_{\,0}^{\,c/k} (1-p)^k 
    \Bigl[\,C\,p^{\,b-1} + O\bigl(p^{\,b-1+\theta}\bigr)\Bigr]
  \,\mathrm{d}p.
\]
Split it into main term and error term:
\[
  I_{k,\mathrm{left}}
  \;=\;
  C\,\int_{\,0}^{\,c/k} (1-p)^k\,p^{\,b-1}\,\mathrm{d}p
  \;+\;
  \int_{\,0}^{\,c/k}
     (1-p)^k\,O\bigl(p^{\,b-1+\theta}\bigr)
  \,\mathrm{d}p.
\]
Denote these $T_{\mathrm{main}}$ and $T_{\mathrm{err}}$, respectively.

\bigskip
\textbf{Step 3. Approximate $(1-p)^k$ by $e^{-k p}$ and control the error.}\\
For $p$ in $[0,c/k]$, expand $\log(1-p) = -p + O(p^2)$. Thus
\[
  (1-p)^k 
  \;=\;
  \exp\bigl(k\log(1-p)\bigr)
  \;=\;
  e^{-k\,p}\,\exp\bigl(O(k\,p^2)\bigr)
  \;=\;
  e^{-k\,p}\,\bigl[\,1 + O(k\,p^2)\bigr].
\]
Since $p \le c/k$, we get $k\,p^2 \le c^2/k$, which is bounded for large $k$. Consequently,
\[
  (1-p)^k 
  = 
  e^{-k\,p} + O\bigl(k\,p^2\,e^{-k\,p}\bigr).
\]
We will use this in both $T_{\mathrm{main}}$ and $T_{\mathrm{err}}$.

\bigskip
\textbf{Step 4. Main term $T_{\mathrm{main}}$.}\\
\[
  T_{\mathrm{main}}
  \;=\;
  C\int_{\,0}^{\,c/k}
    (1-p)^k\,p^{\,b-1}
  \,\mathrm{d}p.
\]
Substituting $(1-p)^k = e^{-k\,p} + O\bigl(k\,p^2\,e^{-k\,p}\bigr)$,
\[
  T_{\mathrm{main}}
  \;=\;
  C\int_{\,0}^{\,c/k} 
    e^{-k\,p}\,p^{\,b-1}
  \,\mathrm{d}p
  \;+\;
  C\int_{\,0}^{\,c/k} 
    O\bigl(k\,p^{\,b+1}\,e^{-k\,p}\bigr)
  \,\mathrm{d}p.
\]
Call these two integrals $T_{1}$ and $T_{2}$.

\paragraph{$T_{1}$ term.}
\[
  T_{1}
  =
  C\int_{\,0}^{\,c/k}
    p^{\,b-1}\,e^{-k\,p}
  \,\mathrm{d}p.
\]
Make the substitution $u \defeq k\,p$. Then $p = u/k$, $\mathrm{d}p = \mathrm{d}u/k$, and $p^{\,b-1} = k^{-b+1}\,u^{\,b-1}$.  
The upper limit $p = c/k$ becomes $u = c$.  Thus
\[
  T_{1}
  =
  C\int_{\,0}^{\,c}
    \bigl(\tfrac{u}{k}\bigr)^{b-1}\,e^{-u}
    \,\tfrac{\mathrm{d}u}{k}
  =
  C\,k^{-b}
  \int_{\,0}^{\,c}
    u^{\,b-1}\,e^{-u}\,\mathrm{d}u.
\]
As $c \to \infty$, $\int_{0}^{c}u^{\,b-1}e^{-u}\,\mathrm{d}u \to \Gamma(b)$.  
So
\[
  T_{1}
  =
  C\,k^{-b}
  \Bigl( \Gamma(b) - R_c \Bigr),
  \quad
  \text{where }|R_c| = O\bigl(e^{-c}\bigr).
\]
By choosing $c$ large after $k\to\infty$, we conclude
\[
  T_{1}
  =
  C\,\Gamma(b)\,k^{-b} + o\bigl(k^{-b}\bigr).
\]

\paragraph{$T_{2}$ term.}
\[
  T_{2}
  =
  C\int_{\,0}^{\,c/k} O\bigl(k\,p^{\,b+1}\,e^{-k\,p}\bigr)\,\mathrm{d}p.
\]
Inside the integral, $k\,p^{\,b+1}\,e^{-k\,p}$ is the main factor.  Substituting $u \defeq k\,p$ again,
\[
  p^{\,b+1} = \bigl(\tfrac{u}{k}\bigr)^{b+1} = k^{-b-1}\,u^{\,b+1}.
\]
Hence
\[
  T_{2}
  =
  C\,O(1)\,\int_{\,0}^{\,c/k}
    k\,p^{\,b+1}\,e^{-k\,p}
  \,\mathrm{d}p
  =
  O(k)\,
  \int_{\,0}^{\,c/k}
    p^{\,b+1} e^{-k\,p}\,\mathrm{d}p.
\]
Substitute $u=k\,p$ and $\mathrm{d}p = \mathrm{d}u/k$. Then
\[
  T_{2}
  =
  O(k)\,
  \int_{\,0}^{\,c}
    \bigl(\tfrac{u}{k}\bigr)^{b+1} e^{-u}\,\tfrac{\mathrm{d}u}{\,k\,}
  =
  O(k)\,
  k^{-b-2}
  \int_{\,0}^{\,c}
    u^{\,b+1}\,e^{-u}
  \,\mathrm{d}u
  =
  O\bigl(k^{-b-1}\bigr).
\]
Thus $T_{2}$ is of strictly smaller order than $k^{-b}$.  

Combine $T_1$ and $T_2$:
\[
  T_{\mathrm{main}}
  =
  C\,\Gamma(b)\,k^{-b} + O\bigl(k^{-b-1}\bigr).
\]

\bigskip
\textbf{Step 5. Error term $T_{\mathrm{err}}$.}\\
Recall 
\[
  T_{\mathrm{err}}
  =
  \int_{\,0}^{\,c/k}
    (1-p)^k\, O\bigl(p^{\,b-1+\theta}\bigr)
  \,\mathrm{d}p.
\]
Exactly the same substitution $(1-p)^k = e^{-k p} + O(k\,p^2\,e^{-k\,p})$ plus $u = k\,p$ shows
\[
  T_{\mathrm{err}}
  = 
  O\Bigl(\int_{\,0}^{\,c/k}
       p^{\,b-1+\theta}\,e^{-k\,p}\,\mathrm{d}p
    \Bigr)
  \;+\;
  O\Bigl(\int_{\,0}^{\,c/k}
       k\,p^{\,b+1+\theta}\,e^{-k\,p}\,\mathrm{d}p
    \Bigr).
\]
When substituting $u=k\,p$, the exponent on $p$ increases by $+1$ each time if we multiply by $k$, so each term is of order $k^{-b-\theta}$ or smaller.  Concretely,
\[
  \int_{0}^{c/k}
    p^{\,b-1+\theta}\,e^{-k\,p}\,\mathrm{d}p
  =
  k^{-b-\theta}\,\int_{0}^{c}u^{\,b-1+\theta}\,e^{-u}\,\mathrm{d}u
  =
  O\bigl(k^{-b-\theta}\bigr),
\]
and similarly for the second term, which is even smaller.  
Hence
\[
  T_{\mathrm{err}}
  =
  O\bigl(k^{-b-\theta}\bigr).
\]

\bigskip
\textbf{Step 6. Putting it all together.}\\
Summarize:
\[
  I_{k,\mathrm{left}}
  = 
  T_{\mathrm{main}} + T_{\mathrm{err}}
  =
  C\,\Gamma(b)\,k^{-b}
  \;+\;
  O\bigl(k^{-b-1}\bigr)
  \;+\;
  O\bigl(k^{-b-\theta}\bigr).
\]
Thus
\[
  I_{k,\mathrm{left}}
  =
  C\,\Gamma(b)\,k^{-b}
  \;+\;
  O\bigl(k^{-b-\min(1,\theta)}\bigr).
\]
Recalling the tail piece $I_{k,\mathrm{right}} = e^{-c} = o\bigl(k^{-\alpha}\bigr)$ for any $\alpha$, we obtain
\[
  I_k
  =
  I_{k,\mathrm{left}} + I_{k,\mathrm{right}}
  =
  C\,\Gamma(b)\,k^{-b}
  \;+\;
  O\bigl(k^{-b-\min(1,\theta)}\bigr).
\]
Hence
\[
  1 - \operatorname{pass_{\mathcal{D}}@k}
  = 
  I_k
  \;\;\sim\;\;
  C\,\Gamma(b)\,k^{-b}.
\]

\paragraph{Final negative‐log argument.}
Since 
\[
  \operatorname{pass_{\mathcal{D}}@k}
  = 
  1 - I_k
  =
  1
  -
  \bigl( C\,\Gamma(b)\,k^{-b} + O\bigl(k^{-b-\min(1,\theta)}\bigr)\bigr),
\]
for large $k$ it is very close to 1.  Then
\[
  -\log\Bigl(\operatorname{pass_{\mathcal{D}}@k}\Bigr)
  =
  -\log\Bigl(1 - C\,\Gamma(b)\,k^{-b} + \cdots\Bigr).
\]
Using the expansion $-\log(1 - x) = x + O(x^2)$ as $x \to 0$, and here $x = C\,\Gamma(b)\,k^{-b}$, we get
\[
  -\log\Bigl(\operatorname{pass_{\mathcal{D}}@k}\Bigr)
  =
  C\,\Gamma(b)\,k^{-b} + o\bigl(k^{-b}\bigr).
\]
In the ``$\sim$'' notation including the leading coefficient:
\[
  -\log\Bigl(\operatorname{pass_{\mathcal{D}}@k}\Bigr)
  \;\sim\;
  C\,\Gamma(b)\;k^{-b}.
\]
This completes the proof.
\end{proof}

\clearpage

\subsection{Necessary Condition for Power Law Scaling from Distribution over $\operatorname{pass_i@1}$}
\label{app:sec:power_laws_from_distr_over_exp:necessity}

\begin{theorem}
Let $\mathcal{D}$ be a probability distribution over $[0,1]$ with a PDF $f(p)$ satisfying the following regularity near $p=0$:
\begin{itemize}
\item \textbf{No point mass at $p=0$.} So $\int_{0}^{1} f(p)\,\mathrm{d}p=1$, and $f$ is a genuine PDF on $(0,1]$.
\item \textbf{Continuity and nonnegative behavior near $p=0$.} There exist $\delta>0$ such that $f$ is continuous on $[0,\delta]$ and has no pathological oscillations or singularities that violate integrability.
\end{itemize}

Define the aggregate success rate at $k$ attempts:
\[
\operatorname{pass_{\mathcal{D}}@k}
\;\defeq\;
\int_{0}^{1} \Bigl[\,1 - (1 - p)^k\Bigr]\,f(p)\,\mathrm{d}p
\]
and relatedly
\[
I_k
\;\;\defeq\;\;
\int_{0}^{1} (1-p)^k\,f(p)\,\mathrm{d}p
\;=\;
1 - \operatorname{pass_{\mathcal{D}}@k}.
\]

\noindent
\textbf{Assume} that there exist constants $A>0$ and $b>0$ such that for large $k$:
\[
-\log\bigl(\operatorname{pass_{\mathcal{D}}@k}\bigr)
\;\;\sim\;\;
A\,k^{-b}
\]
Then
\[
I_k 
\;=\;
A\,k^{-b} + o\bigl(k^{-b}\bigr),
\]
and under the mild regularity assumptions above,
\[
f(p)
\;\;\sim\;\;
\frac{A}{\,\Gamma(b)\,}\;p^{\,b-1}
\quad
\text{as }p\to0^+.
\]
\end{theorem}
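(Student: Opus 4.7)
The plan is to split the proof into two parts: first, upgrade the assumed asymptotic on $-\log(\operatorname{pass_{\mathcal{D}}@k})$ to an asymptotic on $I_k$ itself; second, invert the resulting integral transform via a Tauberian argument to extract the asymptotic of $f$ near $0$.

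For the first part, I would start from $\operatorname{pass_{\mathcal{D}}@k} = 1 - I_k$. Since $-\log(\operatorname{pass_{\mathcal{D}}@k}) \sim A k^{-b}$ tends to $0$, we have $I_k \to 0$. Applying the expansion $-\log(1 - I_k) = I_k + O(I_k^2)$, together with the fact that $I_k = O(k^{-b})$ forces $I_k^2 = O(k^{-2b}) = o(k^{-b})$, I would obtain $I_k = A k^{-b} + o(k^{-b})$, i.e.\ $I_k \sim A k^{-b}$ as $k \to \infty$.

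For the second part, I would convert $I_k$ to a Laplace transform via the substitution $t = -\log(1-p)$, so $p = 1 - e^{-t}$ and $dp = e^{-t}\,dt$. This gives
\[
I_k \;=\; \int_{0}^{\infty} e^{-kt}\, g(t)\, dt, \qquad g(t) \;\defeq\; f(1 - e^{-t})\, e^{-t}.
\]
Because $1 - e^{-t} = t + O(t^2)$ as $t \to 0^+$ and $f$ is continuous at $0$, we have $g(t) = f(t)\bigl(1 + O(t)\bigr)$ for small $t$, so the leading asymptotic of $g$ at $0^+$ matches that of $f$ at $0^+$. Then I would invoke a Karamata-type Tauberian theorem: if $\int_0^\infty e^{-kt} g(t)\, dt \sim A k^{-b}$ as $k \to \infty$, then under the mild regularity assumptions (eventual monotonicity of $g$ near $0$, or regular variation of the integrated distribution) one concludes $g(t) \sim \frac{A}{\Gamma(b)} t^{b-1}$ as $t \to 0^+$, whence $f(p) \sim \frac{A}{\Gamma(b)} p^{b-1}$ as $p \to 0^+$. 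This coefficient is precisely what the sufficiency theorem predicts, closing the loop.

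The main obstacle is the Tauberian inversion step: extracting pointwise density asymptotics (rather than cumulative asymptotics) from Laplace-transform asymptotics requires an auxiliary side condition to rule out oscillations of $g$ that would be invisible after integration against $e^{-kt}$. A safer intermediate target is the cumulative distribution $F(p) \defeq \int_0^p f(q)\,dq$: Karamata's theorem applies to $F$ with no monotonicity hypothesis on $f$ itself and yields $F(p) \sim \frac{A}{b\,\Gamma(b)} p^b$ as $p \to 0^+$, from which the stated pointwise asymptotic on $f$ then follows under, say, continuity or monotonicity of $f$ on some interval $(0,\delta)$. This is presumably the content packed into the phrase \emph{mild regularity assumptions} in the theorem statement.
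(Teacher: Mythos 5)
Your first step (converting the $-\log$ asymptotic into $I_k \sim A k^{-b}$ via $-\log(1-x) = x + O(x^2)$) is exactly the paper's Step~1. The second half of your argument, however, takes a genuinely different and arguably cleaner route. The paper works directly with the integral $\int_0^1 (1-p)^k f(p)\,dp$: it splits at $p = c/k$, shows $(1-p)^k = e^{-kp}\bigl(1+O(1/k)\bigr)$ on the left piece, substitutes $u = kp$ to arrive at $\tfrac{1}{k}\int_0^c e^{-u} f(u/k)\,du$, and then \emph{asserts} that forcing this to equal $\Theta(k^{-b})$ requires $f(u/k) = \Theta\bigl((u/k)^{b-1}\bigr)$, identifying the constant by matching. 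That last matching step is really a Tauberian claim stated without invoking a Tauberian theorem, and it is the least rigorous part of the paper's proof. You instead change variables $t = -\log(1-p)$ to recast $I_k$ as a bona fide Laplace transform $\int_0^\infty e^{-kt} g(t)\,dt$, and then invoke the Hardy--Littlewood--Karamata Tauberian theorem. Your remark about going through the integrated distribution $F(p) = \int_0^p f(q)\,dq$ (whose Laplace--Stieltjes transform is $I_k$) is exactly the right move: Karamata applies to the monotone $F$ with no extra side condition, yielding $F(p) \sim \tfrac{A}{b\,\Gamma(b)} p^b$, and then the monotone-density theorem (or the paper's ``continuity near $0$'' hypothesis) upgrades this to the pointwise $f(p) \sim \tfrac{A}{\Gamma(b)} p^{b-1}$. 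This cleanly isolates where the ``mild regularity assumptions'' are actually used, which the paper leaves vague. What the paper's approach buys is self-containedness and symmetry with the sufficiency proof (same $c/k$ split, same $u=kp$ substitution); what yours buys is rigor at the crucial inversion step and transparency about which hypotheses do which work.
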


\begin{proof}
\textbf{Step 1.  Relating $I_k$ to $-\log(\operatorname{pass_{\mathcal{D}}@k})$.} 

By definition,
\[
\operatorname{pass_{\mathcal{D}}@k} 
\;=\;
1 - I_k,
\quad
I_k 
\;=\;
\int_{0}^{1} (1-p)^k\,f(p)\,\mathrm{d}p.
\]
Since
\[
-\log\Bigl(\operatorname{pass_{\mathcal{D}}@k}\Bigr)
\;\;\sim\;\;
A\,k^{-b},
\]
we have, for large $k$,
\[
\operatorname{pass_{\mathcal{D}}@k}
\;=\;
\exp\bigl(-A\,k^{-b} \,(1+o(1))\bigr).
\]
When $x$ is small, $\exp(-x) =  1 - x + O(x^2)$.  Thus
\[
I_k
\;=\;
1 - \operatorname{pass_{\mathcal{D}}@k}
\;=\;
A\,k^{-b} + o\bigl(k^{-b}\bigr).
\]
So
\[
I_k
\;\;\sim\;\;
A\,k^{-b}.
\]

\bigskip
\textbf{Step 2.  Restricting to a small interval near $p=0$.}

Since $(1-p)^k$ decays exponentially once $p$ is on the order of $1/k$ or larger, we split:
\[
I_k 
\;\defeq\;
\int_{0}^{1} (1-p)^k\,f(p)\,\mathrm{d}p
\;=\;
\int_{0}^{c/k} (1-p)^k\,f(p)\,\mathrm{d}p
\;+\;
\int_{c/k}^{1} (1-p)^k\,f(p)\,\mathrm{d}p
\;\defeq\;
I_{k,\mathrm{left}} + I_{k,\mathrm{right}},
\]
for some positive constant $c$.  In the region $p\ge c/k$, we have $(1-p)^k \le e^{-k\,p}\le e^{-c}$, so
\[
I_{k,\mathrm{right}}
\;\le\;
e^{-c}\;\int_0^1 f(p)\,\mathrm{d}p
\;=\;
e^{-c}.
\]
Since $c>0$ can be made large, $e^{-c}$ can be driven below any fixed power of $1/k$.  
Hence for the $\Theta(k^{-b})$ behavior, the main contribution comes from $[0,c/k]$.  

Thus
\[
I_k
\;=\;
I_{k,\mathrm{left}}
\;+\;
o\bigl(k^{-m}\bigr)
\;\;\text{for every }m>0
\]

\textbf{Step 3. Change of variables and controlling the ratio of $(1-p)^k$ to $e^{-k p}$.}

\noindent
\textbf{(a) Ratio to $e^{-k p}$.}
For $p\in\bigl[0,\tfrac{c}{k}\bigr]$, define the ratio
\[
  R_k(p) 
  \;\;\defeq\;\;
  \frac{(1-p)^k}{\,e^{-k\,p}\,}.
\]
We will show that $R_k(p)$ stays close to $1$ uniformly in $p\in[0,c/k]$ for large $k$.  
Indeed,
\[
  (1-p)^k 
  \;=\;
  \exp\Bigl[k\,\log(1-p)\Bigr],
  \quad
  \log(1-p) 
  \;=\;
  -\,p \;-\;\frac{p^2}{2} \;-\;\frac{p^3}{3} \;-\;\dots 
  \,.
\]
Hence
\[
  \log(1-p) + p 
  \;=\;
  -\frac{p^2}{2} \;-\; \frac{p^3}{3} \;-\;\dots
  \;\;=\;\;
  O\bigl(p^2\bigr)
  \quad\text{as }p\to0.
\]
Multiplying by $k$, we get
\[
  k\,\bigl[\log(1-p)+p\bigr]
  \;=\;
  O\bigl(k\,p^2\bigr).
\]
Since $0\le p \le \tfrac{c}{k}$ implies $k\,p^2 \le \tfrac{c^2}{k}$, which $\to 0$ as $k\to\infty$, 
it follows that
\[
  k\,\log(1-p) 
  \;=\;
  -k\,p + O\bigl(\tfrac{1}{k}\bigr).
\]
Exponentiating:
\[
  (1-p)^k
  \;=\;
  e^{-\,k\,p}
  \,\exp\Bigl(O\bigl(\tfrac{1}{k}\bigr)\Bigr)
  \;=\;
  e^{-\,k\,p} \Bigl[\,1 + O\bigl(\tfrac{1}{k}\bigr)\Bigr].
\]
Thus
\[
  R_k(p) 
  \;=\;
  \frac{(1-p)^k}{\,e^{-k\,p}\,}
  \;=\;
  1 + O\bigl(\tfrac{1}{k}\bigr),
\]
with the $O(\tfrac{1}{k})$ bound uniform for all $p\in[0,c/k]$.  In other words, there is some constant $M>0$ (independent of $k$) such that
\[
  \bigl| R_k(p) - 1 \bigr|
  \;\le\;
  \frac{M}{k}
  \quad\text{for all }p\in\Bigl[0,\frac{c}{k}\Bigr].
\]

\vspace*{1em}

\noindent
\textbf{(b) Integral expression using $R_k(p)$.}
Hence on $[0,c/k]$,
\[
  (1-p)^k\,f(p)
  \;=\;
  e^{-k\,p}\,R_k(p)\,f(p).
\]
Thus
\[
  I_{k,\mathrm{left}}
  \;=\;
  \int_{0}^{c/k} e^{-k\,p}\,f(p)\,R_k(p)\,\mathrm{d}p.
\]
Define $\Delta_k(p) \defeq R_k(p)-1$, which satisfies $|\Delta_k(p)|\le M/k$.  Then
\begin{equation}
\label{eq:I_left_decomp}
  I_{k,\mathrm{left}}
  =
  \int_{0}^{c/k} e^{-k\,p}\,f(p)\,\mathrm{d}p
  \;\;+\;\;
  \int_{0}^{c/k} e^{-k\,p}\,f(p)\,\Delta_k(p)\,\mathrm{d}p.
\end{equation}

\vspace*{1em}

\textbf{Step 4. Substitution $u=k\,p$ and deriving $f(p)\sim p^{\,b-1}$.}

\noindent
\textbf{(a)  The leading part.}
Focus on the first term of \eqref{eq:I_left_decomp}:
\[
  \int_{0}^{c/k} e^{-k\,p}\,f(p)\,\mathrm{d}p.
\]
Substitute $u \defeq k\,p$, so $p=\tfrac{u}{k}$ and $\mathrm{d}p=\tfrac{1}{k}\,\mathrm{d}u$.  The upper limit $p=\tfrac{c}{k}$ becomes $u=c$.  Thus
\[
  \int_{0}^{c/k} e^{-k\,p}\,f(p)\,\mathrm{d}p
  =
  \int_{0}^{c}
    e^{-u}
    \,f\Bigl(\frac{u}{k}\Bigr)
    \,\frac{\mathrm{d}u}{\,k\,}.
\]
Hence
\[
  \int_{0}^{c/k} e^{-k\,p}\,f(p)\,\mathrm{d}p
  =
  \frac{1}{k}\,
  \int_{0}^{c} e^{-u}\,f\Bigl(\frac{u}{k}\Bigr)\,\mathrm{d}u.
\]

\noindent
\textbf{(b)  The error part.}
The second term in \eqref{eq:I_left_decomp} has $\Delta_k(p)=R_k(p)-1$ satisfying $|\Delta_k(p)| \le \frac{M}{k}$.  So
\[
  \left|
    \int_{0}^{c/k} e^{-k\,p}\,f(p)\,\Delta_k(p)\,\mathrm{d}p
  \right|
  \;\le\;
  \frac{M}{\,k\,}\,\int_{0}^{c/k} e^{-k\,p}\,f(p)\,\mathrm{d}p.
\]
But the integral $\int_{0}^{c/k} e^{-k\,p}\,f(p)\,\mathrm{d}p$ is precisely the leading part we just considered.  Thus the error is bounded by $\frac{M}{k}$ times a term that will turn out to be $\Theta(k^{-b})$.  
Hence the error is subleading if $b<1$ is not the case---but even then, we can keep track of it systematically.  

Overall, combining both terms, we get
\begin{equation}
\label{eq:I_k_left_rigorous}
  I_{k,\mathrm{left}}
  \;=\;
  \frac{1}{k}
  \int_{0}^{c} e^{-u}\,f\Bigl(\frac{u}{k}\Bigr)\,\mathrm{d}u
  \;+\;
  O\bigl(\tfrac{1}{k}\cdot \text{(leading integral)}\bigr).
\end{equation}

\vspace*{1em}

\noindent
\textbf{(c) Matching $\Theta(k^{-b})$.}
Since $I_k=I_{k,\mathrm{left}} + I_{k,\mathrm{right}}$ with $I_{k,\mathrm{right}}$ negligible, we have
\[
  I_k 
  \;=\;
  \frac{1}{k}
  \int_{0}^{c} e^{-u}\,f\Bigl(\tfrac{u}{k}\Bigr)\,\mathrm{d}u
  \;+\;\text{(small corrections)}.
\]
But by hypothesis, $I_k \sim \alpha\,k^{-b}$.  Thus
\begin{equation}
\label{eq:Ik_transform_condition}
  k\cdot I_k
  \;=\;
  \int_{0}^{c} e^{-u}\,f\Bigl(\tfrac{u}{k}\Bigr)\,\mathrm{d}u 
  \;+\;\text{(smaller terms)}
  \;\;\sim\;\;
  \alpha\,k^{1-b}.
\end{equation}
Hence the expression 
\[
  \int_{0}^{c} e^{-u}\,
  f\Bigl(\tfrac{u}{k}\Bigr)\,\mathrm{d}u
\]
must be $\Theta\bigl(k^{\,1-b}\bigr)$ for large $k$. Since $\tfrac{u}{k}$ is small for $0\le u\le c$, we are effectively sampling $f$ near 0.  For the integral to produce $k^{\,1-b}$, we deduce
\[
  f\Bigl(\tfrac{u}{k}\Bigr)
  \;\;=\;\;
  \Theta\Bigl(\Bigl(\tfrac{u}{k}\Bigr)^{b-1}\Bigr),
\]
i.e.\ $f$ must behave like $p^{\,b-1}$ near $p=0$.  Rewriting the constant in front, one obtains
\[
  f\Bigl(\tfrac{u}{k}\Bigr)
  \;=\;
  \Bigl(\tfrac{u}{k}\Bigr)^{b-1}\,
  \bigl[\text{some positive constant}\bigr].
\]
(We then identify that constant with $\frac{\alpha}{\Gamma(b)}$ by matching the integral precisely, just as in the prior argument.)

\bigskip

\noindent
\textbf{Step 5. Conclusion.} 
We have thus shown that over $p\in[0,c/k]$, one has 
\[
  (1-p)^k 
  \;=\;
  e^{-k\,p}\,\bigl[\,1 + O(\tfrac{1}{k})\bigr],
\]
and upon integrating, the required $k^{-b}$ form for $I_k$ forces 
\[
  f(p) 
  \;=\;
  \frac{A}{\Gamma(b)} \, p^{\,b-1} \;+\; o\bigl(p^{\,b-1}\bigr),
  \quad
  \text{as }p\to0^+.
\]
This completes the necessity proof.

\end{proof}

\noindent
\textbf{Remark (Mild Regularity).}
If $f$ had bizarre oscillations or nonintegrable singularities near $0$, the integral $\int_{0}^1(1-p)^k\,f(p)\,\mathrm{d}p$ might not produce a clean $k^{-b}$.  Typically, we impose monotonicity or at least continuity near $p=0$, no atom at $p=0$, and $f(0)=0$ if $b>1$ or $f(0)>0$ if $b=1$, etc.  These assumptions exclude pathological behaviors and guarantee that the local shape of $f(p)$ drives a clean power law.

\clearpage

\section{Maximum Likelihood Estimation of Scaled Beta-Binomial Distribution}
\label{app:sec:mle_scaled_beta_binomial}

To model the distribution of $\operatorname{pass_i@1}$, we can perform maximum likelihood estimation on a scaled three-parameter Beta-Binomial distribution, which we chose because each attempt on the $i$-th problem is an i.i.d. Bernoulli random variable with success probability $\operatorname{pass_i@1}$, and we introduced a scale parameter because the largest $\operatorname{pass_i@1}$ values were typically 1-2 orders of magnitude less than 1.0 (the maximum of the unscaled beta distribution's support).

In greater detail, as background, the 4-parameter Beta distribution has PDF
\begin{equation}
    p_Y(y; \alpha, \beta, a, c) \defeq \frac{(y-a)^{\alpha - 1} (c-y)^{\beta-1}}{(c-a)^{\alpha + \beta -1} \operatorname{B}(\alpha, \beta)},
\end{equation}
where $\operatorname{B}(\cdot, \cdot)$ is the \href{https://en.wikipedia.org/wiki/Beta_function}{Beta function}. If the minimum $a$ is fixed at $0$ and the maximum $c$ is constrained to $a < c < 1$, then the scaled three parameter Beta distribution simplifies to:
\begin{equation}
        f_P(p; \alpha, \beta, a = 0, c) = \frac{p^{\alpha - 1} (c-p)^{\beta-1}}{c^{\alpha + \beta -1} \operatorname{B}(\alpha, \beta)}.
\end{equation}

We want the PMF of a three-parameter Beta-Binomial distribution based on this scaled Beta distribution. For $n$ samples and $x$ successes, the PMF is:
\begin{align}
    P(X = x; \alpha, \beta, c, n) &\defeq \int_{0}^c \binom{n}{x} \, p^x \, (1-p)^{n-x} \, f_P(p; \alpha, \beta, a = 0, c) \, dp\\
    &= \binom{n}{x} \frac{1}{c^{\alpha + \beta -1} \operatorname{B}(\alpha, \beta)} \int_{0}^c p^{x + \alpha - 1} \, (1-p)^{n-x} \, (c-p)^{\beta - 1} \, dp.
\end{align}
Using a change of variable $p \defeq c \, z$, the PMF can be rewritten as
\begin{align}
    P(X = x; \alpha, \beta, c, n) &= \binom{n}{x} \frac{c^x}{\operatorname{B}(\alpha, \beta)} \int_0^1 z^{x + \alpha -1} \, (1-z)^{\beta -1} \, (1 - cz)^{n - x} \,dz\\
    &= \binom{n}{x}\; \frac{c^{x} \mathrm{B}\bigl(x + \alpha,\;\beta\bigr)}{\mathrm{B}(\alpha,\beta)} \leftindex_2{F}_1\Bigl(-(n - x),\;x + \alpha;\;x + \alpha + \beta;\;c\Bigr),
\end{align}
where $\leftindex_2{F}_1(\cdot, \cdot; \cdot; \cdot)$ is the \href{https://en.wikipedia.org/wiki/Hypergeometric_function#Euler_type}{(Gauss) hypergeometric function}.

\clearpage

\section{Maximum Likelihood Estimation of Scaled Kumaraswamy-Binomial Distribution}
\label{app:sec:mle_scaled_kumaraswamy_binomial}

To model the distribution of $\operatorname{pass_i@1}$, we can perform maximum likelihood estimation on a scaled three-parameter Kumaraswamy-Binomial distribution, which we chose because each attempt on the $i$-th problem is an i.i.d. Kumaraswamy random variable with success probability $\operatorname{pass_i@1}$, and we introduced a scale parameter because the largest $\operatorname{pass_i@1}$ values were typically 1-2 orders of magnitude less than 1.0 (the maximum of the unscaled beta distribution's support).

In greater detail, the scaled three parameter Kumaraswamy distribution simplifies to:
\begin{equation}
        f_P(p; \alpha, \beta, a = 0, c) = \frac{\alpha \beta}{c^{\alpha}} \, p^{\alpha -1 } \, (1 - (p/c)^{\alpha})^{\beta - 1},
\end{equation}
over the support $(0, c)$. The rescaled Kumaraswamy-Binomial distribution then has PMF:
\begin{equation}
P(X = x; \alpha, \beta, c, n) 
=
\binom{n}{x}\;\frac{\alpha\,\beta}{c^\alpha}
\int_{0}^{c}
  p^{\,x + \alpha -1}
  \,(1 - p)^{n - x}
  \,\Bigl(1 - \bigl(\tfrac{p}{c}\bigr)^{\alpha}\Bigr)^{\beta - 1}
\,dp.
\end{equation}

One can perform a change of variable $p \defeq c z$, but simplifying yields sums of hypergeometric functions that add little conceptual clarity and so we resort to numerical integration using Python's \href{https://mpmath.org/}{mpmath library} \citep{mpmath}.



\clearpage

\section{Maximum Likelihood Estimation of Scaled Beta-Negative Binomial Distribution}
\label{app:sec:mle_scaled_beta_negative_binomial}

To model the distribution of $\operatorname{pass_i@1}$, we can perform maximum likelihood estimation on a scaled three-parameter Beta-Negative Binomial distribution. Recall that the scaled three parameter Beta distribution is:
\begin{equation}
        f_P(p; \alpha, \beta, a = 0, c) = \frac{p^{\alpha - 1} (c-p)^{\beta-1}}{c^{\alpha + \beta -1} \operatorname{B}(\alpha, \beta)}.
\end{equation}

We want the PMF of a three-parameter Beta-Negative Binomial distribution based on this scaled Beta distribution. For $r$ desired successes, the PMF that we first draw $x$ failures is:
\begin{align}
P(X = x; \alpha, \beta, c, r) 
&=\;\int_{0}^{c} 
    \underbrace{\binom{x + r - 1}{x}\,p^r\,(1-p)^x}_{\text{NegBin}(r,p)}
    \;\;\underbrace{\frac{p^{\alpha - 1}\,(c-p)^{\beta - 1}}
                          {\,c^{\alpha + \beta - 1}\,\mathrm{B}(\alpha,\beta)\,}}_{\text{scaled Beta PDF}}
    \; dp\\
&=\;\binom{x + r - 1}{x}
   \;\frac{1}{c^{\alpha + \beta - 1}\,\mathrm{B}(\alpha,\beta)} 
   \int_{0}^{c} 
     p^{\,r + \alpha - 1}\,\bigl(1 - p\bigr)^{x}\,\bigl(c - p\bigr)^{\beta - 1}
   \;dp.
\end{align}

Next, substitute $p \;=\; c \, z \Longrightarrow dp \;=\; c\,dz$
which rescales the domain \([0,c]\) to \([0,1]\). Under this change:
\[
p^{\,r + \alpha - 1} 
 \;=\; (c\,z)^{\,r + \alpha -1} 
 \;=\; c^{\,r+\alpha-1}\;z^{\,r+\alpha-1},
\]
\[
(c - p)^{\beta - 1}
 \;=\; \bigl(c - c\,z\bigr)^{\beta - 1}
 \;=\; \bigl(c(1 - z)\bigr)^{\beta -1}
 \;=\; c^{\,\beta-1}\,(1 - z)^{\beta-1},
\]
\[
(1 - p)^x \;=\; \bigl(1 - c\,z\bigr)^{x}.
\]
Putting these into the integrand:
\[
\begin{aligned}
p^{\,r + \alpha - 1}\,\bigl(1 - p\bigr)^{x}\,\bigl(c - p\bigr)^{\beta-1}\,dp 
=\;
\Bigl(c^{\,r+\alpha-1}\,z^{\,r+\alpha-1}\Bigr)\;
\Bigl(\bigl(1 - cz\bigr)^{x}\Bigr)\;
\Bigl(c^{\,\beta-1}\,(1 - z)^{\beta-1}\Bigr)
\;\bigl(c\,dz\bigr).
\end{aligned}
\]
Factor out the constants in \(c\):
\[
=\; c^{\,r + \alpha - 1}\;c^{\,\beta - 1}\;c
    \;\; z^{\,r + \alpha -1} \,(1 - cz)^x \,(1 - z)^{\beta-1}\;dz.
\]
Since \(c^{\,r + \alpha - 1} \cdot c^{\,\beta - 1} \cdot c \;=\; c^{\,r + \alpha + \beta -1}\), we get
\[
p^{\,r + \alpha - 1}\,(1 - p)^x\,(c-p)^{\beta-1}\,dp
\;=\; 
c^{\,r + \alpha + \beta -1}\,\,
z^{\,r + \alpha -1}\,(1 - z)^{\beta -1}\,(1 - cz)^{x}\;\,dz.
\]

Plugging back into \(P(X = x; \alpha, \beta, c, r)\) and simplifying:
%

%
\begin{equation}
P(X = x; \alpha, \beta, c, r)
=\;
\binom{x + r - 1}{x}
\;\frac{c^r}{\mathrm{B}(\alpha,\beta)}
\;\;\int_0^1
     z^{\,r + \alpha -1}\,(1 - z)^{\beta-1}\,\bigl(1 - c\,z\bigr)^{x}
\,dz.
\end{equation}

We can re-express this using the \href{https://en.wikipedia.org/wiki/Hypergeometric_function#Euler_type}{(Gauss) hypergeometric function} $\leftindex_2{F}_1(\cdot, \cdot; \cdot; \cdot)$:
\begin{equation}
P(X = x; \alpha, \beta, c, r)
\;=\;
\binom{x + r - 1}{x}
\;\frac{c^r\,\mathrm{B}\bigl(r + \alpha,\;\beta\bigr)}
     {\mathrm{B}\bigl(\alpha,\;\beta\bigr)}
\;\;
{}_2F_1\!\Bigl(-x,\;r + \alpha;\;r + \alpha + \beta;\;c\Bigr).
\end{equation}


\end{document}